\newtheorem{theorem}{Theorem}
\newtheorem{lemma}{Lemma}
\newtheorem{proposition}{Proposition}
\algrenewcommand\algorithmicrequire{\textbf{In:}}
\algrenewcommand\algorithmicensure{\textbf{Out:}}
\algrenewcommand\algorithmicindent{0.8em}
\theoremstyle{definition}
\newtheorem{definition}[theorem]{Definition}
\newtheorem{remark}[theorem]{Remark}
\begin{document}

\title{Weighted Stochastic Differential Equation to Implement Wasserstein-Fisher-Rao Gradient Flow\vspace{-8pt}}
\markboth{Journal of \LaTeX\ Class Files,~Vol.~14, No.~8, August~2015}%
{Shell \MakeLowercase{\textit{et al.}}: Bare Demo of IEEEtran.cls for IEEE Journals}
\maketitle


\begin{abstract}
Score-based diffusion models currently constitute the state of the art in continuous generative modeling.
These methods are typically formulated via overdamped or underdamped Ornstein--Uhlenbeck–type stochastic differential equations, in which sampling is driven by a combination of deterministic drift and Brownian diffusion, resulting in continuous particle trajectories in the ambient space.
While such dynamics enjoy exponential convergence guarantees for strongly log-concave target distributions, it is well known that their mixing rates deteriorate exponentially in the presence of nonconvex or multimodal landscapes, such as double-well potentials.
Since many practical generative modeling tasks involve highly non-log-concave target distributions, considerable recent effort has been devoted to developing sampling schemes that improve exploration beyond classical diffusion dynamics.

A promising line of work leverages tools from information geometry to augment diffusion-based samplers with controlled mass reweighting mechanisms.
This perspective leads naturally to Wasserstein--Fisher--Rao (WFR) geometries, which couple transport in the sample space with vertical (reaction) dynamics on the space of probability measures.
In this work, we formulate such reweighting mechanisms through the introduction of explicit correction terms and show how they can be implemented via weighted stochastic differential equations using the Feynman--Kac representation.
Our study provides a preliminary but rigorous investigation of WFR-based sampling dynamics, and aims to clarify their geometric and operator-theoretic structure as a foundation for future theoretical and algorithmic developments. 

\end{abstract}

\section{Introduction}

Modern score-based diffusion models can be viewed as learning (or approximating) the
time-reversed dynamics of a forward noising diffusion, so that generation reduces to sampling by
simulating a stochastic process (or its probability-flow ODE counterpart); see, e.g.,
\cite{tang_zhao_survey_2025}.  
This “sampling-first” viewpoint makes the overall quality--compute tradeoff hinge on a classical
question in stochastic analysis and MCMC: \emph{how fast does a diffusion (and its discretization)
converge to its target distribution?}

When the target density has the form $\pi(\mathrm{d}x)\propto e^{-V(x)}\mathrm{d}x$ with
$V$ smooth and (strongly) convex, the associated Langevin dynamics enjoys \emph{quantitative}
convergence rates to equilibrium.
At the continuous-time level, a standard route is via functional inequalities
(Poincar\'e/log-Sobolev) and curvature-type criteria, notably the Bakry--\'Emery framework,
which yields exponential decay of suitable divergences under strong convexity/positive curvature
assumptions \cite{bakry_emery_1985,holley_stroock_1987,ledoux_markov_generators_2000}.  
At the algorithmic level, non-asymptotic guarantees for discretizations such as unadjusted
Langevin Monte Carlo (ULA/LMC) are by now well developed in the smooth log-concave setting,
including explicit dimension/accuracy scaling and robustness to gradient error
\cite{dalalyan_2017,dalalyan_karagulyan_2019,erdogdu_lmc_2020,erdogdu_lmc_chisq_2021}. 
Moreover, kinetic (underdamped) Langevin diffusions can yield improved complexity bounds over
overdamped LMC under comparable regularity/convexity conditions, and several analyses establish
accelerated rates in Wasserstein/KL-type metrics \cite{cheng_underdamped_2018,dalalyan_riou_durand_2020,shen_midpoint_2019}.  

The picture changes drastically once $V(\cdot)$ is \emph{nonconvex} and the target is multi-modal (e.g.,
double-well potentials or mixtures), which is the typical regime for many scientific and modern
generative-modeling tasks.
In such landscapes, Langevin-type dynamics may exhibit \emph{metastability}: trajectories spend
exponentially long times trapped near one mode before crossing an energy barrier to another.
Sharp asymptotics for transition times are classically captured by Kramers/Eyring--Kramers laws
and their refinements, which quantify the barrier-dominated nature of mixing in the low-noise
(or low-temperature) regime \cite{hanggi_talkner_borkovec_1990,eyring_kramers_underdamped_2025,lee_metastable_mixing_2025}.  
This explains, at a mechanism level, why ``exponential convergence'' in the log-concave case does
\emph{not} translate to effective sampling performance in multi-well settings: in the presence of
energy barriers, the spectral gap and log-Sobolev constant of the associated Langevin generator
typically decay exponentially in the barrier height, implying exponentially large global mixing
times \cite{holley_stroock_1987,helffer_nier_2005,bovier_hollander_2015,miclo_metastability_1995}.

A substantial modern literature addresses this slow-mixing obstruction by changing either
(i) the \emph{dynamics} (e.g., kinetic/nonreversible variants, couplings showing contraction under
weaker conditions) \cite{eberle_guillin_zimmer_2019},  
or (ii) the \emph{effective landscape} (e.g., tempering/replica-exchange ideas designed to move
between modes). As one representative example, simulated tempering combined with Langevin
updates yields provable improvements for certain mixtures of log-concave components
\cite{lee_simulated_tempering_2018}. 
These developments motivate the central theme of this paper: to systematically enrich the
sampling process by incorporating \emph{geometric} degrees of freedom (e.g., reweighting/Fisher--Rao
components in Wasserstein--Fisher--Rao-type formulations), aiming to mitigate metastability while
preserving a principled continuum description compatible with diffusion-model methodology.

\textbf{Brownian Motion is Not Enough to explore the space!} A central modeling choice in diffusion-based sampling (and in score-based generative modeling
via SDEs) is the \emph{driving noise}. In the classical overdamped Langevin diffusion,
\begin{equation}
  \mathrm{d}X_t = -\nabla f(X_t)\,\mathrm{d}t + \sqrt{2}\,\mathrm{d}B_t,
  \label{eq:overdamped-langevin}
\end{equation}
the Brownian term $B_t$ induces increments with typical size $\|B_t-B_s\|\asymp |t-s|^{1/2}$.
This $1/2$--scaling is intimately tied to Gaussianity, finite quadratic variation, and the
semimartingale structure that underpins It\^o calculus and the classical Fokker--Planck PDE.
However, in multimodal landscapes (e.g.\ double-well potentials), Brownian-driven dynamics can
become metastable: barrier crossing is dominated by rare fluctuations whose timescale is
exponentially large in the barrier height (cf.\ Kramers/Eyring--Kramers theory and related
metastability results) \cite{hanggi_talkner_borkovec_1990,bovier_kramers_2025,lee_metastable_2025}.
This motivates enriching the noise model to enable more effective exploration and faster
inter-mode transport.

A common informal heuristic is: \emph{Brownian paths explore locally (diffusively), whereas
heavy-tailed or jump-driven paths can occasionally relocate nonlocally}, potentially reducing
barrier-induced trapping. Two canonical generalizations of Brownian motion illustrate the tradeoff between (i) scaling
properties that can enhance exploration and (ii) the availability of It\^o calculus, and hence writing down the forward/backward equations:

\textbf{(A) Fractional Brownian motion (fBm)}: 
Fractional Brownian motion $(B^H_t)_{t\ge 0}$ with Hurst index $H\in(0,1)$ is a centered Gaussian
process with covariance
\begin{equation}
  \mathbb{E}\big[B^H_t B^H_s\big]
  = \tfrac12\big(t^{2H}+s^{2H}-|t-s|^{2H}\big),
  \qquad s,t\ge 0,
  \label{eq:fbm-cov}
\end{equation}
and self-similarity $B^H_{ct}\stackrel{d}{=}c^H B^H_t$ \cite{nualart_2006}.
For $H\neq \tfrac12$, fBm is \emph{not} a semimartingale, so the standard It\^o stochastic
integral (and It\^o formula) is not available; one must use alternative calculi (Young/rough
paths, Malliavin--Skorohod integrals, etc.) \cite{nualart_2006,jarrow_no_arbitrage_semimartingales_2009}.
In other words, fBm changes scaling (typical increments $\asymp t^H$) while remaining Gaussian,
but it breaks the semimartingale foundation of classical diffusion PDEs.

\textbf{(B) 
A general \emph{jump diffusion}} (or L\'evy-driven SDE) can be written as
\begin{multline}
  \mathrm{d}X_t= b(X_{t-})\,\mathrm{d}t
     + \sigma(X_{t-})\,\mathrm{d}B_t
     \\+ \int_{\|z\|<1} \gamma(X_{t-},z)\,\tilde{N}(\mathrm{d}t,\mathrm{d}z)
     \\+ \int_{\|z\|\ge 1} \Gamma(X_{t-},z)\,N(\mathrm{d}t,\mathrm{d}z),
  \label{eq:jump-sde}
\end{multline}
where $N$ is a Poisson random measure on $\mathbb{R}_+\times(\mathbb{R}^d\setminus\{0\})$ with
intensity $\mathrm{d}t\,\nu(\mathrm{d}z)$, and $\tilde{N}:=N-\mathrm{d}t\,\nu$ is the compensated
measure \cite{applebaum_2009,protter_2005}.
For sufficiently smooth test functions $\varphi$, one has an It\^o formula with jumps, producing
both differential and jump correction terms \cite{protter_2005,applebaum_2009}.

The infinitesimal generator $\mathcal{L}$ of a L\'evy-driven process is generally \emph{nonlocal}.
For a pure L\'evy process with triplet $(b,\Sigma,\nu)$, acting on smooth $\varphi$,
\begin{multline}
  \mathcal{L}\varphi(x)
  = \langle b,\nabla \varphi(x)\rangle
    + \tfrac12 \mathrm{Tr}\big(\Sigma \nabla^2 \varphi(x)\big)
    \\+ \int_{\mathbb{R}^d\setminus\{0\}}
        \Big(\varphi(x+z)-\varphi(x)-\langle \nabla\varphi(x),z\rangle\mathbf{1}_{\{\|z\|<1\}}\Big)\,\nu(\mathrm{d}z).
  \label{eq:levy-generator}
\end{multline}
When $\nu$ corresponds to an isotropic $\alpha$-stable process, $\mathcal{L}$ reduces (up to a
constant) to the fractional Laplacian $-(-\Delta)^{\alpha/2}$, highlighting the precise way in
which ``$t^{1/\alpha}$ scaling'' induces a \emph{fractional} (nonlocal) diffusion operator
\cite{sato_1999,applebaum_2009}.
Consequently, the forward equation for densities is a PIDE (often called a nonlocal
Fokker--Planck equation), rather than the classical second-order PDE associated with Brownian
noise. Explicit derivations and general forms of these equations for L\'evy-driven systems (and
related stochastic integrals) can be found in \cite{sun_li_zheng_2016,aip_fokker_planck_2012,sun_2017_pide}. In a multi-well landscape, the bottleneck is the probability of producing a fluctuation large
enough to cross the energy barrier. Brownian motion accomplishes this via many small steps(and potentially high rates of diffusion, i.e. higher temperature)
making crossing times exponentially sensitive to barrier height.
A jump component allows \emph{rare but macroscopic} moves: a single jump may relocate the state
across wells, potentially reducing metastable trapping. This comes at a price:
nonlocality in the generator and additional modeling/analysis complexity (e.g.\ stability of
discretizations, defining suitable likelihoods for training, and understanding how scores
transform under jump noise). Recent score-based generative modeling work explicitly explores
$\alpha$-stable L\'evy noise as the forward corruption process, precisely to leverage such heavy-tailed
jumps \cite{yoon_levy_score_2023}.

A central practical obstacle for \emph{non-Gaussian} noise designs (e.g.\ jump--diffusions, $\alpha$--stable drivers, or more general L\'evy processes) is that their simulation and analysis typically requires a Poisson random measure (or L\'evy--It\^o decomposition), nonlocal generators, and bespoke discretizations; moreover, many classical identities used in diffusion-model training (e.g.\ It\^o calculus, Girsanov transforms in their simplest form, and PDE duality with second-order elliptic operators) either fail outright or require substantial additional structure and technical overhead. This motivates a complementary route: \emph{retain Brownian motion at the particle level}, but enrich the \emph{law-level} dynamics by allowing \emph{mass reweighting} (and, when needed, resampling/branching). 
\begin{equation}
\mathrm{d}x_t = v_t(x_t)\mathrm{d}t + \sigma_t \mathrm{d}B_t,
\qquad
\mathrm{d}w_t = \bar \psi_t(x_t)\mathrm{d}t,
\end{equation}In other words, $x_t$ diffuses, and then we resample different paths w.r.t $w_t$, to allow teleportation.

The resulting Wasserstein--Fisher--Rao (WFR) or Hellinger--Kantorovich (HK) metric equips the space
of positive measures with a genuine Riemannian structure whose tangent vectors decompose into
horizontal (transport) and vertical (reaction) components \cite{liero_hk_2016,liero_optimal_2018,chizat_unbalanced_2018}.
Gradient flows in this geometry rigorously capture the coupled dynamics
\begin{equation}
  \partial_t \rho_t
  = -\nabla\!\cdot(\rho_t v_t) + \rho_t \overline{\psi}_t,
\end{equation}
which subsumes weighted Fokker--Planck equations, birth--death samplers, and normalized
Feynman--Kac flows as special cases.

Recent work has begun to extend curvature-based analysis (à la Bakry--\'Emery or Lott--Sturm--Villani)
to unbalanced and information-geometric settings.
Lower curvature bounds in WFR/HK spaces yield contractivity and stability results for coupled
transport--reaction flows, paralleling classical Wasserstein theory but with additional
degrees of freedom \cite{mondino_unbalanced_curvature_2023,erbar_maas_2012}.
These results provide a geometric explanation for why adding Fisher--Rao components can improve
mixing: the effective curvature of the sampling manifold is altered, potentially enlarging
spectral gaps and reducing geodesic distances between modes.

The perspective adopted in this work is that, while \emph{genuine} jump-driven or non-Gaussian noise
generalizations can be powerful tools for enhancing exploration, they often entail a substantial
burden both in stochastic calculus and in practical implementation.
Weighted stochastic differential equations and their interacting particle system (IPS) realizations,
by contrast, provide a pragmatic intermediate approach: they preserve diffusive particle dynamics
and thus remain compatible with standard It\^o calculus and numerical SDE methods, while enriching
the evolution at the \emph{law level} through a controllable vertical (reaction) component.
This vertical component admits a natural interpretation in terms of Fisher--Rao geometry and is
intrinsically compatible with the Wasserstein--Fisher--Rao (WFR), or equivalently
Hellinger--Kantorovich (HK), framework, including branching and resampling interpretations.

The organization of the paper reflects this viewpoint.
In Section~\ref{sec:geometry}, we formally introduce the Riemannian geometry of the
Wasserstein--Fisher--Rao space and review the essential elements of its differential structure.
Having established the underlying geometric framework, Section~\ref{sec:pde to sde} shows that
weighted stochastic differential equations induce the corresponding WFR evolution at the level of
probability densities, thereby yielding an implementable sampling scheme that realizes
Fisher--Rao-type geometric corrections.
The proof relies on a Feynman--Kac representation and follows established arguments in the literature.
We subsequently present an equivalent formulation in terms of jump or branching processes, further
clarifying the relationship between weighted diffusions and nonlocal dynamics.

Both the Feynman--Kac correspondence between weighted SDEs and WFR-type partial differential
equations, as well as the equivalence with jump-process formulations, are classical results and can
be found in the cited references; accordingly, they are \textbf{not claimed as novel contributions}
of this work\footnote{On the other hand, there is an immediate application: if you want to avoid retraining diffusion models for $q^1, q^2$, is there a way to use them to sample from different mixtures of $q^i$? This, in particular for information preserving mixtures, is answered in the last section of Appendix}.

 \section{How to build the geometries?}
\label{sec:geometry}
 Throughout, we denote by $\mathcal{P}(\mathbb{R}^d)$ the set of Borel probability measures on $\mathbb{R}^d$, and by
\[
\mathcal{P}^2_{\mathrm{ac}}(\mathbb{R}^d)
    \;:=\;
    \Bigl\{
        \rho \in \mathcal{P}(\mathbb{R}^d)
        \;:\;
        \rho \ll \mathsf{Leb},
        \quad
        \int_{\mathbb{R}^d} \|x\|^2 \,\rho(x)\,\mathrm{d}x < \infty
    \Bigr\},
\]
the space of absolutely continuous probability measures with finite second moment.
Endowing $\mathcal{P}^2_{\mathrm{ac}}(\mathbb{R}^d)$ with additional geometric structure allows one to define
tangent spaces, variational and gradient flows, geodesics, and curvature-type notions, thereby enabling a
differential-geometric treatment of evolution equations on the space of probability measures.Unlike finite-dimensional Euclidean spaces, however, there is no unique canonical geometry on
$\mathcal{P}^2_{\mathrm{ac}}(\mathbb{R}^d)$.
Instead, several distinct—yet deeply interconnected—geometric frameworks arise, depending on which
fundamental notions of classical geometry, one seeks to generalize.

At a conceptual level, geometric structures on abstract (possibly infinite-dimensional) manifolds may be
introduced by generalizing one or more of the following fundamental aspects of Euclidean geometry:
\begin{enumerate}
    \item \emph{Metric structure:} generalizing straight lines as length-minimizing curves, leading to
    metric or length spaces and geodesic distances;
    \item \emph{Curvature structure:} generalizing angles and second-order variation, allowing one to
    quantify convexity, contraction, and curvature bounds;
    \item \emph{Affine structure:} generalizing straight lines as curves of minimal acceleration, leading
    to affine connections, parallel transport, and families of distinguished geodesics.
\end{enumerate}
Each of these perspectives yields a different—but mathematically natural—geometry on the space of
probability measures.
In the sequel, we show how these three viewpoints give rise, respectively, to Wasserstein geometry,
Fisher--Rao information geometry, and connection-based (affine) structures, and how their interaction
underpins the Wasserstein--Fisher--Rao framework studied in this work.
\textbf{(i) Geometry from a distance or divergence.}
A classical and conceptually transparent approach begins with a notion of ``distance'' $D(\mu,\nu)$ between two probability measures.  
If $D$ is a metric, or more generally a divergence with suitable convexity and lower semicontinuity properties, one can define geodesics as curves $\gamma : [0,1] \to \mathcal{P}(\mathbb{R}^d)$ minimizing the length functional
\[
L(\gamma)
    = \sup_{0=t_0<t_1<\cdots<t_N=1} \sum_{i=0}^{N-1} 
    D\bigl(\gamma(t_i),\gamma(t_{i+1})\bigr).
\]
This viewpoint is inherently variational: the geometry is encoded in the minimizing properties of curves.  
The Wasserstein space $(\mathcal{P}^2(\mathbb{R}^d), W_2)$ is the canonical example.  
Geodesics are displacement interpolations, obtained by pushing forward a measure along optimal transport maps; their variational characterization leads directly to the Otto calculus and second-order differential structure\cite{Villani2009Optimal}.  

Another family of geometries arises when $D$ is a statistical divergence, such as the Kullback--Leibler divergence, squared Hellinger distance, or the Amari $\alpha$-divergences \cite{amari_information_2016}.  
In this setting, geodesics correspond to ``interpolations induced by the divergence'', and need not have a transport interpretation.  
For example, the Fisher--Rao geodesic between densities $\rho_0$ and $\rho_1$ is given by the explicit formula
\[
\rho_t = \bigl( (1-t)\sqrt{\rho_0} + t \sqrt{\rho_1} \bigr)^2,
\]
which is the geodesic associated with the squared Hellinger distance.  
In these divergence-based geometries, curvature, convexity, and stability properties follow from the analytic structure of the divergence.

\medskip

\textbf{(ii) Geometry from a Riemannian metric (angle-based viewpoint).}
A more differential perspective begins by postulating a Riemannian metric on each tangent space $T_\rho \mathcal{P}^2_{\mathrm{ac}}$, that is, an inner product(more formally a (0,2)-tensor, or a 2 form)
\[
\langle \xi, \eta \rangle_\rho, \qquad 
    \xi,\eta \in T_\rho \mathcal{P}^2_{\mathrm{ac}}.
\]
This requires specifying a model for the tangent space.  
In Wasserstein geometry, the tangent vectors are velocity fields $v$ satisfying the continuity equation 
$\partial_t \rho + \nabla \cdot (\rho v) = 0$, and the Riemannian metric is
\[
\langle v_1, v_2\rangle_\rho
    := \int_{\mathbb{R}^d} \rho(x) \, v_1(x)\cdot v_2(x)\,dx.
\]
This makes $\mathcal{P}^2_{\mathrm{ac}}$ into a formal infinite-dimensional Riemannian manifold where gradient flows of functionals become PDEs.  
For example, the Fokker--Planck equation is the Wasserstein gradient flow of the free energy functional \cite{ambrosio_gradient_flows_2008}.

In Fisher--Rao geometry, the tangent vectors take the form 
$\xi = \dot{\rho}$ with $\int \xi = 0$ and the metric reads
\[
\langle \xi,\eta \rangle_\rho := 
    \int_{\mathbb{R}^d} \frac{\xi(x)\,\eta(x)}{\rho(x)}\,dx,
\]
inducing a distinct Riemannian structure not based on transport but on multiplicative perturbations.  
This construction is the infinite-dimensional limit of the classical Fisher information metric on statistical models \cite{amari_information_2016}.  
Unlike the Wasserstein metric, the Fisher--Rao metric treats mass change, not displacement, as the fundamental mode of variation.

The Riemannian viewpoint provides notions of angles, lengths of curves, Levi--Civita connections, and curvature tensors.  
It supports a second-order differential calculus on $\mathcal{P}^2_{\mathrm{ac}}$ that is indispensable for stability analysis, Bochner-type formulas, and geometric interpretations of PDEs.

\medskip

\textbf{(iii) Geometry from affine connections and dualistic structures.}
A third pathway, prominent in information geometry, develops geometry starting from an affine connection $\nabla$ rather than a metric. This notion of building geometry tries to generalize the concept of straightness of lines(henceforth geodesics), rather than the length.  
This connection determines parallel transport, geodesics, and curvature, and need not arise from any Riemannian metric.  
In the dualistic formalism of Amari \cite{amari_information_2016}, one introduces a pair of torsion-free affine connections $(\nabla,\nabla^*)$ that are dual with respect to a Riemannian metric $g$.  
Different choices of $\nabla$ lead to different families of geodesics:  
mixture geodesics, exponential geodesics, and more generally the $\alpha$-geodesics associated with the $\alpha$-connections.  
These geodesics admit simple coordinate expressions:  
for instance, mixture geodesics are straight lines in the space of densities, while exponential geodesics are straight lines in logarithmic coordinates.  
In this viewpoint, geodesic structure is primary, and the Riemannian metric appears only as a dualizing object relating $\nabla$ and $\nabla^*$.

This affine-geometric construction generalizes naturally to infinite-dimensional spaces of measures.  
For instance, $\alpha$-connections induce ``midpoint operators'' defining interpolations between measures that are not derived from minimizing a distance but from preserving affine structure in a chosen coordinate system.  
These structures play a central role in dual-flat geometries and provide the foundation for Bregman divergences, convex potentials, and natural gradient methods.

\medskip

\textbf{Summary.}
The three approaches above are not competing theories but complementary ones. Starting from one viewpoint toward the geometry, one can build other concepts as well, though it is important to have them in our arsenal to use these tools to better understand the geometries

These frameworks collectively demonstrate that the space of probability measures supports a rich family of geometric structures.  
Choosing one determines the analytic form of PDEs (as gradient flows or geodesic equations), the SDEs representing their particle dynamics, and the curvature properties governing stability and convergence.  
The remainder of this work builds upon these foundational constructions to examine Wasserstein geometry, Fisher--Rao geometry, and their hybridizations in detail.
\subsection{Wasserstein Geometry, Otto Calculus}

For $\mu,\nu \in \mathcal{P}_2(\Omega)$, the $2$-Wasserstein distance is defined by
\[
W_2^2(\mu,\nu)
:= \inf_{\pi \in \Pi(\mu,\nu)} \int_{\Omega \times \Omega} \|x-y\|^2 \,\mathrm{d}\pi(x,y),
\]
where $\Pi(\mu,\nu)$ is the set of couplings of $\mu$ and $\nu$. Equipped with $W_2$, the space $\big(\mathcal{P}_2(\Omega),W_2\big)$ is a complete separable metric space; in particular, one can speak of absolutely continuous curves, metric derivatives, and so on.

\begin{definition}[Metric derivative]
Let $(X,d)$ be a metric space and let $(x_t)_{t\in[0,1]}$ be a curve in $X$. The \emph{metric derivative} of $(x_t)$ at time $t$ (when it exists) is
\[
|x'|(t) := \lim_{s\to t} \frac{d(x_s,x_t)}{|s-t|}.
\]
A curve is called absolutely continuous if there exists $m\in L^1([0,1])$ such that
\[
d(x_s,x_t) \leq \int_s^t m(r)\,\mathrm{d}r,\qquad 0 \leq s \leq t \leq 1.
\]
\end{definition}

We are primarily interested in absolutely continuous curves $(\mu_t)_{t\in[0,1]}$ in $(\mathcal{P}_2(\Omega),W_2)$. A basic result (Ambrosio--Gigli--Savaré) says that such curves admit an \emph{Eulerian} description via a continuity equation.

\begin{definition}[Continuity equation]
A narrowly continuous curve $(\mu_t)_{t\in[0,1]} \subset \mathcal{P}_2(\Omega)$ satisfies the continuity equation with (time-dependent) velocity field $(v_t)_{t\in[0,1]}$ if
\begin{equation}
\partial_t \mu_t + \nabla \!\cdot (\mu_t v_t) = 0
\label{eq:continuity}
\end{equation}
in the sense of distributions, i.e.\ for every $\varphi \in C_c^\infty(\Omega)$
\[
\frac{\mathrm{d}}{\mathrm{d}t} \int_\Omega \varphi \,\mathrm{d}\mu_t
= \int_\Omega \nabla \varphi(x) \cdot v_t(x) \,\mathrm{d}\mu_t(x)
\quad \text{for a.e. } t.
\]
\end{definition}

Formally, one may think of $(\mu_t)$ as the distribution of a random particle $X_t$ satisfying the ODE
\[
\dot X_t = v_t(X_t), \qquad X_0 \sim \mu_0,
\]
and then $\mu_t = (\mathrm{Law}\,X_t)$ solves \eqref{eq:continuity}. The pair $(\mu_t,v_t)$ provides an Eulerian (density/velocity) description, while $X_t$ gives a Lagrangian (particle/trajectory) description.

A crucial fact is that, for a given curve $(\mu_t)$, the velocity field $v_t$ solving \eqref{eq:continuity} is not unique: one may add divergence-free fields $\tilde v_t$ with $\nabla\!\cdot(\mu_t \tilde v_t)=0$ without changing the evolution of $\mu_t$. The standard resolution is to select, at each time, the \emph{minimal kinetic energy} representative.

Define the kinetic energy of $(\mu_t,v_t)$ on $[0,1]$ by
\[
\int_0^1 \|v_t\|^2_{\mu_t}\,\mathrm{d}t
:= \int_0^1 \int_\Omega \|v_t(x)\|^2\,\mathrm{d}\mu_t(x)\,\mathrm{d}t.
\]
There is a deep link between this functional and the Wasserstein metric: the Benamou--Brenier formula states that
\begin{equation}
W_2^2(\mu_0,\mu_1)
=
\inf_{(\mu_t,v_t)} \int_0^1 \int_\Omega \|v_t(x)\|^2 \,\mathrm{d}\mu_t(x)\,\mathrm{d}t,
\label{eq:benamou-brenier}
\end{equation}
where the infimum runs over all narrowly continuous $(\mu_t)_{t\in[0,1]}$ connecting $\mu_0$ to $\mu_1$ and all Borel vector fields $(v_t)$ solving \eqref{eq:continuity}. Minimizers $(\mu_t,v_t)$ correspond to constant-speed geodesics in $(\mathcal{P}_2(\Omega),W_2)$.

This suggests a Riemannian interpretation: at each $\mu$, we declare the \emph{tangent space} to be the closure of gradient vector fields in $L^2(\mu)$.

\begin{definition}[Wasserstein tangent space and metric]
Let $\mu \in \mathcal{P}_2(\Omega)$ admit a smooth strictly positive density with respect to Lebesgue, which we still denote by $\mu(x)$. The (formal) tangent space at $\mu$ is
\[
T_\mu \mathcal{P}_2(\Omega)
:= \overline{\{\nabla \phi : \phi \in C_c^\infty(\Omega)\}}^{\,L^2(\mu)},
\]
and the Riemannian metric at $\mu$ is given by the $L^2(\mu)$ inner product
\[
\langle \nabla \phi_1, \nabla \phi_2 \rangle_\mu
:= \int_\Omega \nabla \phi_1(x)\cdot \nabla \phi_2(x)\,\mathrm{d}\mu(x).
\]
We write $\|v\|_\mu^2 := \langle v,v\rangle_\mu$.
\end{definition}

Under this identification, the minimal-kinetic-energy velocity field $v_t$ associated with a curve $(\mu_t)$ is uniquely characterized (up to $\mu_t$-null sets), satisfies $v_t \in T_{\mu_t}\mathcal{P}_2(\Omega)$, and its norm coincides with the metric derivative:
\[
|{\mu}'|(t) = \|v_t\|_{\mu_t} \quad \text{for a.e. } t.
\]
At the level of optimal transport maps, if $T_{\mu_t\to\mu_{t+h}}$ denotes the optimal map from $\mu_t$ to $\mu_{t+h}$ and $\mathrm{id}$ is the identity map, then formally
\[
v_t = \lim_{h\to 0}\frac{T_{\mu_t\to\mu_{t+h}} - \mathrm{id}}{h}
\quad\text{in }L^2(\mu_t).
\]

Let $F\colon \mathcal{P}_2(\Omega)\to \mathbb{R}$ be a functional. Its \emph{first variation} at $\mu$ (when it exists) is a scalar function $\frac{\delta F}{\delta \mu}(\mu)\colon \Omega\to \mathbb{R}$, defined (up to an additive constant) by
\[
\left.\frac{\mathrm{d}}{\mathrm{d}\varepsilon}\right|_{\varepsilon=0}
F(\mu + \varepsilon \sigma)
=
\int_\Omega \frac{\delta F}{\delta \mu}(\mu)(x) \,\mathrm{d}\sigma(x)
\]
for all signed measures $\sigma$ such that $\mu + \varepsilon\sigma \in \mathcal{P}_2(\Omega)$ for $\varepsilon$ small enough.If $(\mu_t)$ is a curve with velocity field $v_t$ satisfying the continuity equation, then
\begin{multline*}
\frac{\mathrm{d}}{\mathrm{d}t}F(\mu_t)
= \int_\Omega \frac{\delta F}{\delta \mu}(\mu_t)(x)\,
\partial_t \mu_t(x)\,\mathrm{d}x
\\= -\int_\Omega \frac{\delta F}{\delta \mu}(\mu_t)(x)\,
\nabla\!\cdot\big(\mu_t(x)v_t(x)\big)\,\mathrm{d}x,
\end{multline*}
and an integration by parts yields
\[
\frac{\mathrm{d}}{\mathrm{d}t}F(\mu_t)
= \int_\Omega \nabla \!\left(\frac{\delta F}{\delta \mu}(\mu_t)\right)(x)
\cdot v_t(x)\,\mathrm{d}\mu_t(x)
= \langle \nabla \tfrac{\delta F}{\delta \mu}(\mu_t),\,v_t\rangle_{\mu_t}.
\]

\begin{definition}[Wasserstein gradient]
The (formal) Wasserstein gradient of $F$ at $\mu$ is the element
\[
\nabla^{W_2} F(\mu) \in T_\mu \mathcal{P}_2(\Omega)
\]
characterized by
\[
\frac{\mathrm{d}}{\mathrm{d}t}F(\mu_t)
= \big\langle \nabla^{W_2}F(\mu_t), v_t\big\rangle_{\mu_t}
\quad\text{for all curves }(\mu_t)\text{ with velocity }v_t.
\]
From the computation above one obtains
\[
\nabla^{W_2}F(\mu) = \nabla \!\left(\frac{\delta F}{\delta \mu}(\mu)\right).
\]
\end{definition}

The gradient flow equation associated with $F$ in the Wasserstein geometry is obtained by following the steepest descent direction:
\[
v_t = - \nabla^{W_2}F(\mu_t).
\]
Plugging into the continuity equation yields the PDE
\begin{equation}
\partial_t \mu_t = \nabla\!\cdot\!\left(
\mu_t\, \nabla \!\left(\frac{\delta F}{\delta \mu}(\mu_t)\right)
\right).
\label{eq:wasserstein-gradient-flow}
\end{equation}
Formally,
\[
\frac{\mathrm{d}}{\mathrm{d}t}F(\mu_t)
= - \big\|\nabla^{W_2}F(\mu_t)\big\|_{\mu_t}^2 \leq 0,
\]
so $F(\mu_t)$ is non-increasing along the flow.

A central sufficient condition for quantitative convergence is (geodesic) $\lambda$-convexity of $F$ in $(\mathcal{P}_2(\Omega),W_2)$: if $F$ is $\lambda$-geodesically convex, then the gradient flow is well-posed and $F(\mu_t)-\inf F$ decays at least exponentially at rate $2\lambda$. There are deep connections between $W_2$ being geodesically convex and $\mathcal{P}^W_{ac}$ being non-negatively curved as a length space(for more information, check \cite{chewi_statistical_2024}). Before we continue to the next geoemtry, it is worth mentioning briefly about the other concepts, other than the metric, in this Riemannian Manifold. Though not explicitly mentioned in most of the literature, one can construct the connection and covariant derivative over this space, and talk about the Ricci Curvature. But as this is an infinite dimensional Riemannian Manifold(hence called psudo Riemannian), the derivations are not as clean. That's the reason that talking about curvature is much easier if we look into these pdes from a ponit of view of operator theory. We refer the reader to appendix for more details of the Wasserstein geometry, including curvature, connection, and geodesics concepts.


\subsection{Information Geometry}

We now describe the Fisher--Rao metric on the space of positive measures. It is most convenient to start from the embedding $\mu \mapsto \sqrt{\mu}$ into $L^2(\Omega)$. Though it has first been introduced for parametric and hence finite-dimensional probability measures, we refer the reader to \cite{amari_information_2016} for the parametric approach. Here, however, we try to talk more about the infinite-dimensional case. 

Assume $\mu_0,\mu_1 \in \mathcal{M}_+(\Omega)$ admit densities (still denoted by $\mu_i$) with respect to Lebesgue. The Fisher--Rao distance is
\[
d^2_{\mathrm{FR}}(\mu_0,\mu_1)
:= \int_\Omega \big(\sqrt{\mu_0(x)} - \sqrt{\mu_1(x)}\big)^2\,\mathrm{d}x
= \big\|\sqrt{\mu_0}-\sqrt{\mu_1}\big\|_{L^2(\Omega)}^2.
\]
Thus the mapping $\mu\mapsto \sqrt{\mu}$ is an isometric embedding of $(\mathcal{M}_+(\Omega),d_{\mathrm{FR}})$ into the Hilbert space $L^2(\Omega)$; geometrically, $(\mathcal{M}_+(\Omega),d_{\mathrm{FR}})$ is a (flat) cone.

Consider a smooth curve $(\mu_t)_{t\in [0,1]} \subset \mathcal{M}_+(\Omega)$ with density $\mu_t(x)$ and time derivative $\partial_t \mu_t$. Differentiating $\sqrt{\mu_t}$ yields
\[
\partial_t \sqrt{\mu_t} = \frac{1}{2}\,\frac{\partial_t \mu_t}{\sqrt{\mu_t}},
\]
and the squared speed in $L^2$ is
\[
\big\|\partial_t \sqrt{\mu_t}\big\|_{L^2}^2
= \int_\Omega \frac{(\partial_t \mu_t)^2}{4\mu_t} \,\mathrm{d}x.
\]
This suggests the following Riemannian metric on $\mathcal{M}_+(\Omega)$.

\begin{definition}[Fisher--Rao metric on $\mathcal{M}_+(\Omega)$]
The (formal) tangent space at $\mu \in \mathcal{M}_+(\Omega)$ is
\[
T_\mu \mathcal{M}_+(\Omega)
:= \{ \dot \mu \in L^2_{\mathrm{loc}}(\Omega) : \dot \mu \text{ is a signed density} \},
\]
with Riemannian metric
\[
g_\mu^{\mathrm{FR}}(\dot\mu,\dot\mu)
:= \int_\Omega \frac{\dot\mu(x)^2}{4\,\mu(x)}\,\mathrm{d}x
\]
whenever the right-hand side is finite.
\end{definition}

It is often convenient to reparametrize tangent vectors multiplicatively. Given a curve $(\mu_t)$, define the \emph{reaction rate}
\[
\varphi_t(x) := \frac{\partial_t \mu_t(x)}{\mu_t(x)}.
\]
Then $\partial_t\mu_t = \varphi_t \mu_t$, and the Fisher--Rao metric reads
\begin{multline*}
g_\mu^{\mathrm{FR}}(\varphi,\varphi)
= g_\mu^{\mathrm{FR}}(\varphi\mu,\varphi\mu)
= \int_\Omega \frac{\varphi(x)^2 \mu(x)^2}{4\mu(x)}\,\mathrm{d}x
\\= \frac{1}{4} \int_\Omega \varphi(x)^2\,\mathrm{d}\mu(x).
\end{multline*}
Up to an inessential constant factor, we may thus identify the tangent space with functions $\varphi$ and use the inner product
\[
\langle \varphi_1,\varphi_2\rangle_\mu^{\mathrm{FR}}
:= \int_\Omega \varphi_1(x)\,\varphi_2(x)\,\mathrm{d}\mu(x).
\]

For \emph{probability} measures, i.e.\ $\mu\in\mathcal{P}(\Omega)$, we additionally impose mass conservation $\int_\Omega \partial_t\mu_t = 0$, which translates into $\int_\Omega \varphi_t\,\mathrm{d}\mu_t = 0$. Thus
\[
T_\mu \mathcal{P}(\Omega)
:= \left\{ \varphi\in L^2(\mu): \int_\Omega \varphi\,\mathrm{d}\mu = 0\right\},
\]
equipped with the same inner product.Let $F\colon \mathcal{M}_+(\Omega)\to\mathbb{R}$ be a functional with first variation $\frac{\delta F}{\delta\mu}(\mu)$. If $(\mu_t)$ is a curve with $\partial_t\mu_t = \varphi_t \mu_t$, then
\[
\frac{\mathrm{d}}{\mathrm{d}t}F(\mu_t)
= \int_\Omega \frac{\delta F}{\delta \mu}(\mu_t)\,\partial_t\mu_t
= \int_\Omega \frac{\delta F}{\delta \mu}(\mu_t)\,\varphi_t\,\mathrm{d}\mu_t.
\]
Comparing with the Fisher--Rao inner product, we see that in the unbalanced case
\[
\nabla^{\mathrm{FR}} F(\mu) = \frac{\delta F}{\delta\mu}(\mu),
\]
as an element of $T_\mu\mathcal{M}_+(\Omega)$ parametrized by $\varphi$. In the probabilistic (balanced) setting we must project onto the mean-zero subspace:
\[
\nabla^{\mathrm{FR}} F(\mu) = 
\frac{\delta F}{\delta\mu}(\mu) - \int_\Omega \frac{\delta F}{\delta\mu}(\mu)\,\mathrm{d}\mu.
\]

The Fisher--Rao gradient flow therefore takes the form
\begin{equation}
\partial_t \mu_t = - \nabla^{\mathrm{FR}} F(\mu_t)\,\mu_t.
\label{eq:fr-flow}
\end{equation}
In the unbalanced case this reduces to $\partial_t \mu_t = -\frac{\delta F}{\delta\mu}(\mu_t)\,\mu_t$, i.e.\ a pointwise exponential decay or growth driven by the first variation. In the balanced case, we subtract the spatial mean to keep the total mass equal to one.

\subsection{Hybrid Geometry, Hellinger--Kantorovich, Wasserstein--Fisher--Rao}

The Wasserstein geometry models pure transport of mass; Fisher--Rao models pure creation/annihilation (or, in the probability case, reweighting). In many applications, both mechanisms occur simultaneously: mass is transported in space and its intensity changes in time. The WFR (or Hellinger--Kantorovich) geometry is a natural Riemannian structure on measures that combines both effects in a single metric framework.

Let $(\mu_t)_{t\in[0,1]}$ be a curve in $\mathcal{M}_+(\Omega)$, the cone of finite positive measures on $\Omega$. We now consider pairs $(\varphi_t,v_t)$, where
\begin{equation}
\varphi_t\colon\Omega\to\mathbb{R}\quad
v_t\colon\Omega\to\mathbb{R}^d
\end{equation}
coupled through the \emph{continuity equation with reaction}
\begin{equation}
\partial_t \mu_t + \nabla\!\cdot(\mu_t v_t) = \varphi_t \mu_t.
\label{eq:wfr-continuity}
\end{equation}
When $\varphi_t\equiv 0$ this reduces to pure transport (the continuity equation of Wasserstein geometry); when $v_t\equiv 0$ it reduces to the Fisher--Rao evolution (pure reaction).

In analogy with the Benamou--Brenier formulation \eqref{eq:benamou-brenier}, one defines a kinetic action functional combining both transport and reaction costs:
\[
\mathcal{A}\big((\mu_t,\varphi_t,v_t)_{t\in[0,1]}\big)
:= \int_0^1 \!\!\int_\Omega \big( \|v_t(x)\|^2 + \varphi_t(x)^2 \big)\,\mathrm{d}\mu_t(x)\,\mathrm{d}t.
\]
The WFR (Hellinger--Kantorovich) distance $d_{\mathrm{WFR}}$ between two measures is obtained by minimizing this action over all triples $(\mu_t,\varphi_t,v_t)$ connecting $\mu_0$ to $\mu_1$ and satisfying \eqref{eq:wfr-continuity}:
\begin{multline*}
d_{\mathrm{WFR}}^2(\mu_0,\mu_1)
:= \inf\Big\{\mathcal{A}\big((\mu_t,\varphi_t,v_t)_{t\in[0,1]}\big):
(\mu_t,\varphi_t,v_t)\ \\ \text{satisfy \eqref{eq:wfr-continuity},}\ \mu_{t=0}=\mu_0,\ \mu_{t=1}=\mu_1\Big\}.
\end{multline*}
This dynamic formulation yields a metric on $\mathcal{M}_+(\Omega)$ that coincides with $W_2$ when only transport is allowed (i.e.\ $\varphi_t \equiv 0$ is enforced), and with $d_{\mathrm{FR}}$ when only reaction is allowed (i.e.\ $v_t\equiv 0$ is enforced). Static (Kantorovich-type) formulations and cone representations are also available in the literature on unbalanced optimal transport, but we emphasize here the Riemannian differential structure suggested by the dynamic formulation.

As in the Wasserstein setting, the pair $(\varphi_t,v_t)$ satisfying \eqref{eq:wfr-continuity} is not unique. There is a gauge freedom (adding divergence-free $v_t$ and adjusting $\varphi_t$ appropriately) that leaves $\partial_t\mu_t$ unchanged. One can again show that there is a unique “minimal norm” representative in the sense of the action $\mathcal{A}$, and that it can be parametrized by a scalar potential $\phi_t\colon\Omega\to\mathbb{R}$, with
\[
\varphi_t = \phi_t, \qquad v_t = \nabla \phi_t.
\]
Formally, this identifies the tangent directions at $\mu$ with potentials $\phi$ modulo $\mu$-almost everywhere constants, in complete analogy with the Otto calculus where gradients $\nabla\phi$ represent transport directions.

This leads to the following (formal) tangent space (see, for instance, Chapters~6 and~7 of \cite{chewi_statistical_2024}):

\begin{definition}[Wasserstein--Fisher--Rao tangent space]
For $\mu\in\mathcal{M}_+(\Omega)$, the formal WFR tangent space is
\[
T_\mu^{\mathrm{WFR}} \mathcal{M}_+(\Omega)
:= \overline{\big\{ (\phi,\nabla\phi): \phi\in C_c^\infty(\Omega)\big\}}^{\,L^2(\mu)},
\]
equipped with the inner product
\begin{multline}
\big\langle (\phi_1,\nabla\phi_1),(\phi_2,\nabla\phi_2)\big\rangle_\mu^{\mathrm{WFR}}
\\:= \int_\Omega \left(\phi_1(x)\phi_2(x) + \nabla\phi_1(x)\cdot\nabla\phi_2(x)\right)\,\mathrm{d}\mu(x),
\end{multline}
and associated norm
\[
\|(\phi,\nabla\phi)\|_\mu^{2}
= \int_\Omega \big( \phi(x)^2 + \|\nabla\phi(x)\|^2 \big)\,\mathrm{d}\mu(x).
\]
\end{definition}

Thus the WFR metric completes the Wasserstein metric $\|\nabla\phi\|_{L^2(\mu)}$ by adding the zeroth-order term $\|\phi\|_{L^2(\mu)}$, i.e.\ the full $H^1$-type norm of the potential $\phi$ with respect to $\mu$. From the Riemannian viewpoint, this is precisely the inner product induced by the operator
\[
\phi \mapsto (I - \Delta_\mu)\phi,
\qquad
\Delta_\mu := \nabla\!\cdot(\mu \nabla),
\]
on potentials, so that the WFR geometry can be seen as an $H^1$-Sobolev deformation of the $L^2$-based Wasserstein geometry.

Restricting to probability measures $\mathcal{P}(\Omega)\subset\mathcal{M}_+(\Omega)$, one imposes the additional constraint $\int_\Omega \phi\,\mathrm{d}\mu=0$ to ensure conservation of total mass (so that $\int_\Omega \varphi_t\,\mathrm{d}\mu_t=0$).

Let $F\colon \mathcal{M}_+(\Omega)\to\mathbb{R}$ have first variation $\frac{\delta F}{\delta\mu}(\mu)$. Consider a curve $(\mu_t)$ with tangent $(\varphi_t,v_t)$ satisfying \eqref{eq:wfr-continuity}. We compute
\begin{align*}
\frac{\mathrm{d}}{\mathrm{d}t}F(\mu_t)
&= \int_\Omega \frac{\delta F}{\delta \mu}(\mu_t)\,\partial_t\mu_t\,\mathrm{d}x \\
&= \int_\Omega \frac{\delta F}{\delta\mu}(\mu_t)\,\big(-\nabla\!\cdot(\mu_t v_t) + \varphi_t\mu_t\big)\,\mathrm{d}x \\
&= \int_\Omega \nabla\!\left(\frac{\delta F}{\delta\mu}(\mu_t)\right)\!\cdot v_t\,\mathrm{d}\mu_t
   + \int_\Omega \frac{\delta F}{\delta\mu}(\mu_t)\,\varphi_t\,\mathrm{d}\mu_t,
\end{align*}
where we integrated by parts in the transport term. If we parametrize $(\varphi_t,v_t)$ by $(\phi_t,\nabla\phi_t)$, then the right-hand side becomes
\[
\int_\Omega \left(
\frac{\delta F}{\delta\mu}(\mu_t)\,\phi_t
+ \nabla\!\left(\frac{\delta F}{\delta\mu}(\mu_t)\right)\!\cdot \nabla\phi_t
\right)\mathrm{d}\mu_t,
\]
which we recognize as the dual pairing with
\[
\left(
\frac{\delta F}{\delta\mu}(\mu_t),\,
\nabla\!\left(\frac{\delta F}{\delta\mu}(\mu_t)\right)
\right)
\in T_{\mu_t}^{\mathrm{WFR}}\mathcal{M}_+(\Omega).
\]

\begin{definition}[WFR gradient]
The (formal) WFR gradient of $F$ at $\mu$ is the element
\[
\nabla^{\mathrm{WFR}} F(\mu)
=
\left(
\frac{\delta F}{\delta\mu}(\mu),\,
\nabla\!\left(\frac{\delta F}{\delta\mu}(\mu)\right)
\right)
\in T_\mu^{\mathrm{WFR}}\mathcal{M}_+(\Omega).
\]
\end{definition}

\begin{table*}[h!]
\centering
\renewcommand{\arraystretch}{1.6}
\begin{tabular}{|c|c|c|}
\hline
\textbf{Geometry} & \textbf{Tangent Space} $T_\mu$ & \textbf{Metric} $g_\mu$ \\
\hline
Wasserstein ($W_2$) 
&
$\displaystyle T_\mu = \overline{\{\nabla\phi\}}^{L^2(\mu)}$
&
$\displaystyle g_\mu(v,w)=\int_\Omega v\cdot w\, \mathrm{d}\mu$
\\
\hline
Fisher--Rao (FR)
&
$\displaystyle T_\mu = \{\varphi:\int \varphi\, \mathrm{d}\mu=0\}$
&
$\displaystyle g_\mu(\varphi,\psi)=\int_\Omega \varphi\psi\, \mathrm{d}\mu$
\\
\hline
WFR (HK)
&
$\displaystyle T_\mu^{\mathrm{WFR}}
= \overline{\{(\phi,\nabla\phi)\}}^{L^2(\mu)}$
&
$\displaystyle g_\mu^{\mathrm{WFR}}\big((\phi,\nabla\phi),(\psi,\nabla\psi)\big)
=\int_\Omega(\phi\psi+\nabla\phi\cdot\nabla\psi)\,\mathrm{d}\mu$
\\
\hline
\end{tabular}
\caption{Tangent spaces and Riemannian metrics.}
\end{table*}

In the probabilistic (balanced) setting we again subtract the mean in the first component to enforce $\int\phi\,\mathrm{d}\mu=0$, i.e.
\[
\nabla^{\mathrm{WFR}} F(\mu)
=
\left(
\frac{\delta F}{\delta\mu}(\mu) - \int_\Omega \frac{\delta F}{\delta\mu}(\mu)\,\mathrm{d}\mu,\,
\nabla\!\left(\frac{\delta F}{\delta\mu}(\mu)\right)
\right).
\]

The WFR gradient flow is defined by following the steepest descent direction:
\[
(\varphi_t,v_t) = - \nabla^{\mathrm{WFR}} F(\mu_t).
\]
Plugging into \eqref{eq:wfr-continuity} gives the PDE
\begin{equation}
\partial_t \mu_t
+ \nabla\!\cdot\!\left(
\mu_t\,\nabla \!\left( \frac{\delta F}{\delta\mu}(\mu_t) \right)
\right)
=
- \left(
\frac{\delta F}{\delta\mu}(\mu_t)
- \int_\Omega \frac{\delta F}{\delta\mu}(\mu_t)\,\mathrm{d}\mu_t
\right)\mu_t.
\label{eq:wfr-flow}
\end{equation}
In the unbalanced case one drops the mean-subtraction term. The left-hand side corresponds to Wasserstein-type transport, while the right-hand side corresponds to Fisher--Rao-type reaction. As in the pure Wasserstein and Fisher--Rao cases,
\[
\frac{\mathrm{d}}{\mathrm{d}t}F(\mu_t)
= - \big\|\nabla^{\mathrm{WFR}} F(\mu_t)\big\|_{\mu_t}^2 \leq 0,
\]
so the flow is again a steepest descent of $F$, now in the WFR geometry.

\begin{table*}[h!]
\centering
\renewcommand{\arraystretch}{1.9}
\begin{tabular}{|c|c|}
\hline
\textbf{Geometry} & \textbf{Geodesics} \\ \hline
Wasserstein ($W_2$)
&
$\displaystyle 
\partial_t \mu_t + \nabla\!\cdot(\mu_t \nabla\phi_t)=0, \quad
\partial_t \phi_t + \tfrac12 |\nabla\phi_t|^2 = c(t).$
\\[2pt]
&
(McCann displacement interpolation: 
$\mu_t = ((1-t)\mathrm{id} + t T_{\mu_0\to\mu_1})_\#\mu_0$)
\\
\hline
Fisher--Rao (FR)
&
$\displaystyle \sqrt{\mu_t} = (1-t)\sqrt{\mu_0}+t\sqrt{\mu_1};\qquad
\mu_t = \big((1-t)\sqrt{\mu_0}+t\sqrt{\mu_1}\big)^2.$
\\
&
(Geodesics are great circles in $L^2$ restricted to positive cone.)
\\
\hline
WFR (HK)
&
Minimizers of the action:
$\displaystyle 
\int_0^1\!\!\int_\Omega (\phi_t^2 + |\nabla\phi_t|^2)\,\mathrm{d}\mu_t \mathrm{d}t$
subject to
$\partial_t\mu_t+\nabla\cdot(\mu_t\nabla\phi_t)=\phi_t\mu_t.$
\\
\hline
\end{tabular}
\caption{Geodesics in Wasserstein, Fisher--Rao, and WFR geometries.}
\end{table*}

We conclude this section by summarizing the three geometric frameworks introduced above—
Wasserstein, Fisher--Rao, and Wasserstein--Fisher--Rao (WFR)—highlighting their structural
similarities and differences through their tangent spaces, metrics, geodesics, curvature
properties, and associated gradient flows.  Taken together, these geometries form a
coherent hierarchy: Fisher--Rao encodes pure reaction, Wasserstein encodes pure transport,
and WFR merges the two in a Sobolev-type structure.

\begin{table*}[h!]
\centering
\renewcommand{\arraystretch}{1.9}
\begin{tabular}{|c|c|}
\hline
\textbf{Geometry} & \textbf{Gradient Flow of $F$} \\ \hline
Wasserstein ($W_2$)
&
$\displaystyle
\partial_t \mu_t = 
\nabla\!\cdot\!\big(\mu_t \nabla(\tfrac{\delta F}{\delta\mu}(\mu_t))\big)
$
\\[1pt]
&
(Steepest descent in $W_2$, e.g.\ heat flow for $F(\mu)=\int \mu\log\mu$)
\\
\hline
Fisher--Rao (FR)
&
$\displaystyle 
\partial_t \mu_t = 
- \Big(\tfrac{\delta F}{\delta\mu} 
 - \!\int\tfrac{\delta F}{\delta\mu}\,\mathrm{d}\mu \Big)\mu_t
$
\\
&
(Pointwise exponential reweighting.)
\\
\hline
WFR (HK)
&
$\displaystyle
\partial_t\mu_t + \nabla\!\cdot(\mu_t\nabla \tfrac{\delta F}{\delta\mu})
= -\Big(\tfrac{\delta F}{\delta\mu}
     -\!\int\tfrac{\delta F}{\delta\mu}\,\mathrm{d}\mu\Big)\mu_t
$
\\
&
(Combined transport + reaction.)
\\
\hline
\end{tabular}
\caption{Gradient flows as steepest descents in each geometry.}
\end{table*}

A core theme in the geometric analysis of probability spaces is the intimate relation between
\emph{curvature}, \emph{convexity}, and \emph{stability} of associated differential equations.

\begin{itemize}
\item In \textbf{Wasserstein geometry}, Ricci-like curvature enters through
the Bakry--Émery $\Gamma_2$ condition.  
If $F$ is $\lambda$-geodesically convex in $W_2$ (i.e.\ $F''(\mu_t)\ge\lambda$ along $W_2$ geodesics),
then the $W_2$-gradient flow of $F$ is contractive:
\[
W_2(\mu_t,\nu_t) \le e^{-\lambda t} W_2(\mu_0,\nu_0),
\]
and this corresponds to exponential decay of solutions of Fokker--Planck-type PDEs.

\item In \textbf{Fisher--Rao geometry}, the metric is flat when lifted to the Hellinger cone, 
but submanifolds (e.g.\ exponential families) often exhibit \emph{negative sectional curvature},
reflecting the strict convexity of the log-partition function.  
This curvature controls the stability of statistical estimators and likelihood flows.

\item In \textbf{WFR geometry}, curvature arises from a combination of Wasserstein's 
(second-order, transport-based) geometry and Fisher--Rao's (first-order, reaction-based)
geometry.  
The WFR metric induces a generalized convexity notion for functionals:
geodesic convexity in WFR implies
exponential convergence of the mixed transport--reaction PDE \eqref{eq:wfr-flow}.  
\end{itemize}

In all three geometries, convexity of an energy functional $F$ along the appropriate geodesics
determines the long-time behavior of its gradient flow, providing a unified explanation of
why Wasserstein, Fisher--Rao, and WFR evolutions exhibit exponential convergence,
entropy dissipation, or contractive stability.

Each geometry supports a coherent differential calculus (tangent spaces, gradients,
connections, curvature), a natural geodesic structure, and a steepest-descent interpretation
of PDEs.  
The hybrid viewpoint is essential for applications in which both spatial rearrangement and 
local mass variation play critical roles—ranging from diffusion--reaction systems to
generative models and geometric flows.

Now, before going to further analyze these concepts, using the theory of linear operators, we try to come up with an sde for each of the pdes introduced above.

\section{From PDE to SDE}
\label{sec:pde to sde}

The previous section endowed spaces of measures with Riemannian structures of
Wasserstein, Fisher--Rao, and Wasserstein--Fisher--Rao type, that potentially can help us to sample better, and faster.  In the present section we take the complementary
viewpoint: starting from a partial differential equation (PDE) for
probability densities, we construct weighted stochastic processes on the underlying
state space whose laws realize the prescribed evolution.  In other words, we
systematically pass \emph{from PDE to SDE}, enabling implementation.

Now that we have seen the different pdes provided in the previous section, we provide a corresponding SDE, in this case a weighted sde, to be able to simulate the aforementioned pdes. In general consider:
\begin{multline}
\frac{\partial}{\partial t} p_t(x_t)
= -\nabla \cdot \big( p_t(x_t)\, v_t(x_t) \big)
+ \frac{\sigma_t^2}{2}\, \Delta p_t(x_t)
\\+ p_t(x_t)\!\left( \psi_t(x_t) - \int \psi_t(x_t)\, p_t(x_t)\, dx_t \right)
\label{eq:pde to wsde}\tag{General PDE}
\end{multline}

where to sample from $p_t(x)$, one first has to sample $x_t$ via
The following SDE
\begin{equation}
\mathrm{d}x_t = v_t(x_t)\mathrm{d}t + \sigma_t \mathrm{d}B_t,
\qquad
\mathrm{d}w_t = \bar \psi_t(x_t)\mathrm{d}t,
\end{equation}
and then reweight the obtained samples using $w_t$. 

In practice, we can account for this difference by sampling
\begin{equation}
i \sim \mathrm{Categorical}\left\{
\frac{\exp(w_T^k)}{\sum_{j=1}^K \exp(w_T^j)}
\right\}_{k=1}^K,
\end{equation}
and returning $x_T^{(i)}$ as an approximate sample from $p_T$. For estimating the expectation of test
functions $\phi$, we account for the weights by reweighting a collection of $K$
particles, i.e.,
\begin{equation}
\mathbb{E}_{p_T}[\phi(x)]
\approx
\sum_{k=1}^K
\frac{\exp(w_T^k)}{\sum_j \exp(w_T^j)}\, \phi(x_T^k).\label{eq:sampling}
\end{equation}

In other words, one can start $K$ particles $X^i_t$ each following a drift and diffusion and proceed the spatio movement in a discrete version until interval $[0, t_1]$, Along the path, we calculate $w^i_{t_1} = \exp(\psi_t(X^i_{t_1}))$, and then, resample the particles with these weights. As proved by the theorem below, this would indeed realizes the pde.

\textbf{Remark}. The exploration in space, is still done by the Brownian motion. Though, by introducing this kill/birth process done by weights, we are relocating particles to mimic the teleportation we need. There is a Jump Process interpertation of Weighted SDE above as well discussed later in the text.

\begin{theorem}
    The Weighted SDE, and sampling scheme introduced above, would realize the generalized pde,i.e.
    $$\text{law}(X_t^i)\sim p^t $$
\end{theorem}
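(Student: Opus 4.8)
The plan is the classical Feynman--Kac argument: show that the \emph{unnormalized} weighted law of the process solves a \emph{linear} Fokker--Planck equation with a zeroth-order potential term, and then recover the nonlinear reaction term of \eqref{eq:pde to wsde} by renormalization. Assume the coefficients are regular enough (say $v_t$ locally Lipschitz with linear growth, $\sigma_t$ bounded, $\psi_t=\bar\psi_t$ bounded and continuous) that the SDE has a unique strong solution, that $w_t=\int_0^t\psi_s(x_s)\,\mathrm{d}s$ satisfies $\mathbb{E}[e^{w_t}]<\infty$, and that the $\partial_t$/$\mathbb{E}$ interchanges below are dominated. For $\varphi\in C_c^\infty(\mathbb{R}^d)$ define the unnormalized measure $\tilde p_t$ by $\int\varphi\,\mathrm{d}\tilde p_t:=\mathbb{E}[\varphi(x_t)\,e^{w_t}]$ with $x_0\sim p_0$. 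Applying It\^o's formula to $\varphi(x_t)e^{w_t}$ (the factor $e^{w_t}$ has finite variation, hence no cross quadratic-variation term) and taking expectations to remove the martingale part yields
\[
\frac{\mathrm{d}}{\mathrm{d}t}\int\varphi\,\mathrm{d}\tilde p_t
=\int\Big(v_t\cdot\nabla\varphi+\tfrac{\sigma_t^2}{2}\Delta\varphi+\psi_t\varphi\Big)\,\mathrm{d}\tilde p_t,
\]
which is the weak form of $\partial_t\tilde p_t=-\nabla\cdot(\tilde p_t v_t)+\tfrac{\sigma_t^2}{2}\Delta\tilde p_t+\psi_t\tilde p_t$.

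Next, set $Z_t:=\int\mathrm{d}\tilde p_t=\mathbb{E}[e^{w_t}]$ and $p_t:=\tilde p_t/Z_t$. Taking $\varphi\equiv1$ in the weak equation above (via a standard truncation argument using decay of $\tilde p_t$ at infinity) and noting that the transport and diffusion terms integrate to zero gives $\dot Z_t=\int\psi_t\,\mathrm{d}\tilde p_t$. Differentiating the quotient then gives
\[
\partial_t p_t=\frac{\partial_t\tilde p_t}{Z_t}-\frac{\tilde p_t\,\dot Z_t}{Z_t^2}
=-\nabla\cdot(p_t v_t)+\tfrac{\sigma_t^2}{2}\Delta p_t+p_t\Big(\psi_t-\!\int\psi_t\,p_t\,\mathrm{d}x\Big),
\]
which is exactly \eqref{eq:pde to wsde}. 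Since $p_0=\tilde p_0$ carries the prescribed initial data and \eqref{eq:pde to wsde} has a unique solution under the stated assumptions, $p_t$ is that solution; this is the content of the claim "$\mathrm{law}(X_t^i)\sim p_t$", understood as the weighted/renormalized law of the particle system rather than the law of a bare unweighted trajectory.

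For the finite-$K$ scheme, run i.i.d.\ copies $(x_t^k,w_t^k)_{k=1}^K$. The self-normalized estimator in \eqref{eq:sampling} is the usual importance-sampling functional $\sum_k\frac{e^{w_T^k}}{\sum_j e^{w_T^j}}\varphi(x_T^k)$; applying the strong law of large numbers separately to the numerator $\frac1K\sum_k e^{w_T^k}\varphi(x_T^k)$ and the denominator $\frac1K\sum_k e^{w_T^k}$ shows it converges a.s.\ (and in $L^2$ at rate $K^{-1/2}$) to $\mathbb{E}[\varphi(x_T)e^{w_T}]/\mathbb{E}[e^{w_T}]=\int\varphi\,\mathrm{d}p_T$, so the categorical draw returns an approximate $p_T$-sample and the weighted empirical measure converges to $p_T$. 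Finally, to legitimize resampling at an intermediate time $t_1$ and continuing with equal weights (instead of carrying weights to $T$), one invokes the standard sequential-Monte-Carlo fact that resampling is unbiased for the weighted empirical measure, so the mean-field limit is unchanged; iterating over a partition $0=t_0<\cdots<t_n=T$ recovers the stated scheme.

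The algebraic core (the It\^o product rule together with the quotient derivative) is routine; the real obstacle is the surrounding analysis: (a) well-posedness of the SDE together with finiteness and uniform integrability of $e^{w_t}$, which pins down the admissible growth of $\psi_t$ and $v_t$; (b) justifying the choice $\varphi\equiv1$ in the weak formulation and the vanishing of the transport/diffusion integrals, which requires decay of $\tilde p_t$ at infinity or a careful cutoff; and (c) upgrading the weak identity to an honest PDE solution and invoking a uniqueness theorem for \eqref{eq:pde to wsde}. A streamlined write-up can sidestep most of (c) by working entirely with the weak formulation, but quantitative control of the weights is still needed to make the $K\to\infty$ statement precise, since the assertion is about the weighted empirical law of the system, not about any individual trajectory.
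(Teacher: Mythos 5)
Your proof is correct, but it is organized differently from the paper's. The paper proceeds via a backward--forward duality: it first proves that the Feynman--Kac functional $\Phi_T(x,t)=\mathbb{E}\bigl[e^{\int_t^T\psi_s(x_s)\,\mathrm{d}s}\phi(x_T)\mid x_t=x\bigr]$ satisfies the backward Kolmogorov PDE \eqref{eq:backward-pde}, \emph{postulates} the forward evolution \eqref{eq:ptilde-evolution} of an unnormalized $\tilde p_t$ together with the normalization ODE \eqref{eq:Z-evolution}, checks consistency with \eqref{eq:pde to wsde}, and then shows by integration by parts that $t\mapsto\int\Phi_T(\cdot,t)\,\tilde p_t$ is constant, extracting the Feynman--Kac identity \eqref{eq:fk-final-identity}. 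You instead \emph{define} $\tilde p_t$ directly as the weighted law, $\int\varphi\,\mathrm{d}\tilde p_t:=\mathbb{E}[\varphi(x_t)e^{w_t}]$, derive its weak forward equation by a single application of It\^o's product formula followed by taking expectations, and recover the nonlinear reaction term by the quotient rule $p_t=\tilde p_t/Z_t$ with $\dot Z_t=\int\psi_t\,\mathrm{d}\tilde p_t$ obtained from $\varphi\equiv1$. Your route is shorter and purely forward; it avoids introducing $\Phi_T$ and the pairing $\int\Phi_T\tilde p_t$ entirely, at the cost of needing to justify the test-function $\varphi\equiv1$ (mass normalization) and the $\partial_t$/$\mathbb{E}$ interchange, both of which you correctly flag as the genuine analytic content. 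The paper's dual argument is the more traditional statement of the Feynman--Kac theorem and makes the representation formula \eqref{eq:fk-estimator} explicit as a standalone proposition, which is useful downstream; your formulation makes the Fokker--Planck structure of the weighted law the primary object, which is arguably cleaner if the goal is to show that the PDE \eqref{eq:pde to wsde} is realized. Your additional remarks on the SLLN for the self-normalized estimator, the unbiasedness of intermediate resampling, and the correct reading of ``$\mathrm{law}(X_t^i)\sim p_t$'' as a statement about the weighted empirical law rather than a single trajectory, are all accurate and sharper than what the paper states explicitly.
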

\textbf{Proof.} We proceed in two steps, first finding a Kolmogorov backward equation corresponding to evolution under a weighted
Feynman–Kac SDE. We then use this identity to derive the expectation estimator.

\textbf{Proposition A.1.}
For a bounded test function $\phi : X \rightarrow \mathbb{R}$ and $p_t$ satisfying Eq.~\eqref{eq:pde to wsde}, we have
\begin{equation}
\mathbb{E}_{p_T(x_T)}[\phi(x_T)]
= \frac{1}{Z_T}\;
\mathbb{E}\!\left[
\exp\!\left( \int_0^T \psi_s(x_s)\, ds \right)\, \phi(x_T)
\right]
\label{eq:fk-estimator}
\end{equation}
where $dx_t = v_t(x_t)\, dt + \sigma_t\, dB_t$, $x_0 \sim p_0$.

Here $Z_T$ is a normalization constant independent of $x$.  
Eq.~\eqref{eq:fk-estimator} suggests that the self-normalized importance sampling approximation in Eq. \ref{eq:sampling}
is consistent as $K \to \infty$.

\textbf{Proof.}
The proof proceeds in three steps. We first derive the backward Kolmogorov
equation for appropriate functions, then specify the evolution of the \ref{eq:pde to wsde} for the unnormalized density

For a given test function $\phi(x)$, consider defining the following function:
\begin{multline}
\Phi_T(x,t)
= \mathbb{E}\!\left[
\exp\!\left( \int_t^T \psi_s(x_s)\, ds \right)\,
\phi(x_T)
\;\middle|\; x_t = x
\right],
\\ \Phi_T(x,T) = \phi(x).
\label{eq:Phi-def}
\end{multline}

For any $\tau > t$, we have
\begin{multline}
\Phi_T(x,t)
\\=
\mathbb{E}\!\left[
\exp\!\left(\int_t^\tau \psi_s(x_s)\, ds\right)
\exp\!\left(\int_\tau^T \psi_s(x_s)\, ds\right)\,
\phi(x_T)
\;\middle|\; x_t = x
\right]
\\=
\mathbb{E}\!\left[
\exp\!\left(\int_t^\tau \psi_s(x_s)\, ds\right)\,
\Phi_T(x_\tau,\tau)
\;\middle|\; x_t = x
\right]
\label{eq:Phi-identity}
\end{multline}

To relate $\Phi_T(x,t)$ and the expected value of $\Phi_T(x_\tau,\tau)$, we apply Itô's product rule. For $\tau \mapsto x_\tau$:
\begin{multline}
d\!\Bigl(
e^{\int_t^\tau \psi_s ds}\,
\Phi_T(x_\tau,\tau)
\Bigr)
=
e^{\int_t^\tau \psi_s ds}
\Bigl(
\frac{\partial \Phi_T}{\partial \tau}
+ \langle v_\tau(x_\tau), \nabla \Phi_T(x_\tau,\tau) \rangle
+ \frac{\sigma_\tau^2}{2}\, \Delta \Phi_T(x_\tau,\tau)
+ \psi_\tau(x_\tau)\, \Phi_T(x_\tau,\tau)
\Bigr)\, d\tau
\\
+
e^{\int_t^\tau \psi_s ds}
\, \sigma_\tau\, \langle \nabla\Phi_T, dW_\tau \rangle.
\label{eq:Ito-product}
\end{multline}

Taking expectations removes the martingale term.  
Thus $\Phi_T$ satisfies the backward PDE:
\begin{multline}
\frac{\partial \Phi_T(x_\tau,\tau)}{\partial \tau}
+ \langle v_\tau(x_\tau), \nabla \Phi_T(x_\tau,\tau) \rangle
\\+ \frac{\sigma_\tau^2}{2}\, \Delta \Phi_T(x_\tau,\tau)
+ \psi_\tau(x_\tau)\, \Phi_T(x_\tau,\tau)
= 0.
\label{eq:backward-pde}
\end{multline}

In practice, we cannot exactly calculate
\[
\int \psi_t(x_t)\, p_t(x_t)\, dx_t,
\]
which appears in  Eq.\ (\ref{eq:p-normalized-evolution}) below to ensure normalization.

For now, consider the evolution of the \emph{unnormalized} density
\[
\tilde p_t(x) = p_t(x)\, Z_t,
\]
for a particular $v_t, \sigma_t, \psi_t$ and normalization constant $Z_t$.  
With foresight, we define:
\begin{equation}
\frac{\partial}{\partial t} \tilde p_t(x_t)
=
-\nabla \cdot \big( \tilde p_t(x_t)\, v_t(x_t) \big)
+ \frac{\sigma_t^2}{2}\, \Delta \tilde p_t(x_t)
+ \tilde p_t(x_t)\, \psi_t(x_t).
\label{eq:ptilde-evolution}
\end{equation}

We further define the normalization evolution:
\begin{equation}
\partial_t \log Z_t := 
\int p_t(x)\, \psi_t(x)\, dx.
\label{eq:Z-evolution}
\end{equation}

This choice is motivated by the reweighting-only evolution
\[
\partial_t p^w_t(x)
=
p^w_t(x)\big( \psi_t(x) - \int p^w_t(x)\, \psi_t(x)\, dx \big),
\]
which implies:
\[
\partial_t \log p^w_t(x)
=
\psi_t(x) - \int p^w_t(x)\, \psi_t(x)\, dx.
\]

Writing $p^w_t(x) = \tilde p^w_t(x) / Z_t$ and equating derivatives yields Eq.\ \eqref{eq:Z-evolution}. We now verify that the definitions \eqref{eq:ptilde-evolution}–\eqref{eq:Z-evolution} are consistent with the original pde:
\begin{multline}
\frac{\partial}{\partial t} p_t(x_t)
=
-\nabla \cdot \big( p_t(x_t)\, v_t(x_t) \big)
+ \frac{\sigma_t^2}{2}\, \Delta p_t(x_t)
\\+ p_t(x_t)\!\left(
\psi_t(x_t) - \int \psi_t(x_t)\, p_t(x_t)\, dx_t
\right).
\label{eq:p-normalized-evolution}
\end{multline}

Since $p_t(x) = \tilde p_t(x)\, Z_t^{-1}$, we compute:
\begin{equation}
\frac{\partial}{\partial t} p_t(x_t)
=
\frac{\partial}{\partial t}
\left( \tilde p_t(x_t)\, Z_t^{-1} \right)
=
Z_t^{-1} \frac{\partial \tilde p_t}{\partial t}
\;+\;
\tilde p_t(x_t)\, Z_t^{-1}\, \partial_t(Z_t^{-1}).
\label{eq:pt-derivative-step1}
\end{equation}

Using
\[
\partial_t(Z_t^{-1}) = -Z_t^{-1} \partial_t \log Z_t,
\]
Eq.\ \eqref{eq:pt-derivative-step1} becomes:
\begin{equation}
\frac{\partial}{\partial t} p_t(x_t)
=
Z_t^{-1}
\frac{\partial \tilde p_t}{\partial t}
-
\tilde p_t(x_t)\, Z_t^{-1}\, \partial_t \log Z_t.
\label{eq:pt-derivative-step2}
\end{equation}

Inserting Eq.\ \eqref{eq:ptilde-evolution}:
\begin{multline}
\frac{\partial}{\partial t} p_t(x_t)=Z_t^{-1}\left(-\nabla \cdot ( \tilde p_t v_t )+ \frac{\sigma_t^2}{2}\, \Delta\tilde p_t+ \tilde p_t\, \psi_t\right)\\-\tilde p_t Z_t^{-1}\int p_t(x)\, \psi_t(x)\, dx.\label{eq:pt-derivative-step3}
\end{multline}

Since $\nabla Z_t = 0$, we may move $Z_t^{-1}$ inside derivatives:
\begin{multline}
\frac{\partial}{\partial t} p_t(x_t)
=
-\nabla \cdot ( p_t(x_t)\, v_t(x_t) )
+ \frac{\sigma_t^2}{2}\, \Delta p_t(x_t)
\\+ p_t(x_t)\, \psi_t(x_t)
- p_t(x_t) \int p_t(x)\, \psi_t(x)\, dx.
\label{eq:pt-derivative-final}
\end{multline}

This matches precisely the original PDE \eqref{eq:p-normalized-evolution}, completing the consistency check.

\textbf{Expectation Estimation.}
Now, we use Eq.~\eqref{eq:backward-pde} to write the total derivative of the following integral under the unnormalized
density $\tilde p_t(x)$:
\begin{multline}
\frac{d}{dt} \left[ \int \Phi_T(x,t)\, \tilde p_t(x)\, dx \right]
=
\int \left( \frac{\partial \Phi_T(x,t)}{\partial t} \right)
\tilde p_t(x)\, dx
\;\\+\;
\int \Phi_T(x,t)
\left( \frac{\partial \tilde p_t(x)}{\partial t} \right) dx.
\label{eq:total-derivative}
\end{multline}

Using Eqs.~\eqref{eq:backward-pde} and \eqref{eq:ptilde-evolution}, we obtain
\begin{multline}
\frac{d}{dt} \int \Phi_T(x,t)\, \tilde p_t(x)\, dx
=
\\\int \Big(
- \langle v_t(x), \nabla \Phi_T(x,t) \rangle
- \tfrac{\sigma_t^2}{2}\, \Delta \Phi_T(x,t)
- \Phi_T(x,t)\, \psi_t(x)
\Big)\, \tilde p_t(x)\, dx
\\
\quad
+
\int \Phi_T(x,t)
\Big(
- \nabla \cdot (\tilde p_t(x)\, v_t(x))
+ \tfrac{\sigma_t^2}{2}\, \Delta \tilde p_t(x)
+ \tilde p_t(x)\, \psi_t(x)
\Big)\, dx. 
\label{eq:derivative-expanded-1}
\end{multline}

Integrating by parts, the second line becomes:
\begin{multline}
\frac{d}{dt} \int \Phi_T(x,t)\, \tilde p_t(x)\, dx
\\=
\int
\Big(
- \langle v_t(x), \nabla \Phi_T(x,t) \rangle
- \tfrac{\sigma_t^2}{2}\, \Delta \Phi_T(x,t)
- \Phi_T(x,t)\, \psi_t(x)
\Big)\, \tilde dp_t(x)
\label{eq:derivative-expanded-2}
\\
\quad+
\int
\Big(
\langle v_t(x), \nabla \Phi_T(x,t) \rangle
+ \tfrac{\sigma_t^2}{2}\, \Delta \Phi_T(x,t)
+ \Phi_T(x,t)\, \psi_t(x)
\Big)\, \tilde dp_t(x)
\nonumber
\\= 0.
\nonumber
\end{multline}

Thus,
\begin{equation}
\frac{d}{dt} \int \Phi_T(x,t)\, \tilde p_t(x)\, dx = 0.
\label{eq:derivative-zero}
\end{equation}

Integrating over $t \in [0,T]$, we obtain:
\begin{multline}
\int \Phi_T(x_T,T)\, \tilde p_T(x_T)\, dx_T
-
\int \Phi_T(x_0,0)\, \tilde p_0(x_0)\, dx_0
=
\\ \int_0^T \frac{d}{dt}
\left[
\int \Phi_T(x,t)\, \tilde p_t(x)\, dx
\right] dt
= 0.
\label{eq:Phi-integral-constancy}
\end{multline}

Hence these two quantities must be equal:
\begin{equation}
\int \Phi_T(x_0,0)\, \tilde p_0(x_0)\, dx_0
=
\int \Phi_T(x_T,T)\, \tilde p_T(x_T)\, dx_T.
\label{eq:Phi-equality}
\end{equation}

Using $\tilde p_0 = p_0$ and $Z_0 = 1$, and the fact that  
\[
\Phi_T(x_T,T) = \phi(x_T),
\]
we have:
\begin{equation}
\int
\mathbb{E}\!\left[
e^{\int_0^T \psi_s(x_s)\, ds}\, \phi(x_T)
\;\middle|\; x_0
\right] p_0(x_0)\, dx_0
=
Z_T \int \phi(x_T)\, p_T(x_T)\, dx_T.
\label{eq:expectation-balance}
\end{equation}

Thus,
\begin{equation}
\frac{1}{Z_T}
\mathbb{E}\!\left[
e^{\int_0^T \psi_s(x_s)\, ds}\, \phi(x_T)
\right]
=
\mathbb{E}_{p_T}[\phi(x_T)],
\label{eq:fk-final-identity}
\end{equation}
which is precisely the identity claimed in Prop.~A.1.

\medskip

In practice, we approximate
\begin{equation}
Z_T \approx \frac{1}{K} \sum_{k=1}^K 
\exp\!\left( \int_0^T \psi_s(x_s^{(k)})\, ds \right)
=
\frac{1}{K} \sum_{k=1}^K e^{w_T^{(k)}},
\label{eq:ZT-estimator}
\end{equation}
and similarly
\begin{equation}
\mathbb{E}
\left[
e^{\int_0^T \psi_s(x_s)\, ds}\, \phi(x_T)
\right]
\approx
\frac{1}{K}
\sum_{k=1}^K
e^{w_T^{(k)}}\, \phi\!\left( x_T^{(k)} \right).
\label{eq:weighted-expectation-approx}
\end{equation}

This yields Eq.\ref{eq:sampling}  
We emphasize that the choice of terminal time $T$ was arbitrary; the same reasoning applies to any intermediate $t$, which
implies that samples are correctly weighted for estimating expectations at all intermediate times.

   \textbf{Jump processes and pure reaction dynamics.}
        A Markov jump process is determined by a rate function
$\lambda_t(x)$, which governs the frequency of jump events, and
a Markov transition kernel $J_t(y|x)$, which is used to sample
the next state when a jump occurs. The forward Kolmogorov
equation for a jump process is given by
\begin{multline}
\frac{\partial p_t^{\mathrm{jump}}(x)}{\partial t}
=
\left(
\int \lambda_t(y) J_t(x|y) p_t(y)\,\mathrm{d}y
\right)
-
p_t(x)\lambda_t(x)
\end{multline}
where the two terms can intuitively be seen to measure the
inflow and outflow of probability due to jumps.

One could find $\lambda_t(x), J_t(y|x)$ such that
$p_t^{\mathrm{jump}}$ matches the evolution of $p_t^{w}$ in \ref{eq:pde to wsde} for a given choice of $\psi_t$. For a given $\psi_t$, define the jump process rate and
transition as
\begin{align}
\lambda_t(x)
&=
\bigl(\psi_t(x)-\mathbb{E}_{p_t}[\psi_t]\bigr)^-,
\\
J_t(y|x)
&=
\frac{\bigl(\psi_t(y)-\mathbb{E}_{p_t}[\psi_t]\bigr)^+\,p_t(y)}
{\int \bigl(\psi_t(z)-\mathbb{E}_{p_t}[\psi_t]\bigr)^+\,p_t(z)\,\mathrm{d}z},
\end{align}
where $(u)^- := \max(0,-u)$ and $(u)^+ := \max(0,u)$.
Then,
\begin{equation}
\frac{\partial p_t^{\mathrm{jump}}(x)}{\partial t}
=
\frac{\partial p_t^{w}(x)}{\partial t}
=
p_t(x)\bigl(\psi_t(x)-\mathbb{E}_{p_t}[\psi_t]\bigr).
\end{equation}

In continuous time and the mean-field limit, this jump process
formulation of reweighting corresponds to simulating
\begin{equation}
x_{t+\mathrm{d}t}
=
\begin{cases}
x_t
& \text{w.p. } 1-\lambda_t(x_t)\,\mathrm{d}t + o(\mathrm{d}t),\\
\sim J_t(y|x_t)
& \text{w.p. } \lambda_t(x_t)\,\mathrm{d}t + o(\mathrm{d}t).
\end{cases}
\tag{33}
\end{equation}

We expect this process to improve the sample population in
efficient fashion, since jump events are triggered only in
states where $(\psi_t(x)-\mathbb{E}_{p_t}[\psi_t])^- \ge 0 \iff
\psi_t(x) \le \mathbb{E}_{p_t}[\psi_t]$, and transitions are more
likely to jump to states with high excess weight
$(\psi_t(y)-\mathbb{E}_{p_t}[\psi_t])^+ > 0$.

\textbf{Proof}\cite{DelMoral2013MeanField}. One way to perform a simulation of the reweighting equation is to rewrite it as a jump process.
We recall the definition of the Markov generator of a jump process.
Let $W_t(x,y) = \lambda_t(x)\, J_t(y|x)$ where $J_t(y|x)$ is normalized.
Then
\begin{equation}
J_t^{(W)}[\phi](x)
=
\int \big( \phi(y) - \phi(x) \big)\,
\lambda_t(x)\, J_t(y|x)\, dy,
\label{eq:jump-gen}
\end{equation}
and the adjoint generator satisfies
\begin{equation}
J_t^{*(W)}[p_t](x)
=
\left(
\int \lambda_t(y)\, J_t(x|y)\, p_t(y)\, dy
\right)
- p_t(x)\, \lambda_t(x).
\label{eq:jump-adjoint}
\end{equation}

\textbf{Proof.}
\begin{multline}
\int \phi(x)\, J_t^{*}[p_t](x)\, dx
=
\int J_t[\phi](x)\, p_t(x)\, dx
\label{eq:jump-proof-1}
\\
=
\int \!\int 
\big( \phi(y) - \phi(x) \big)
\lambda_t(x) J_t(y|x)\, dy\, p_t(x)\, dx
\nonumber
\\
=
\int\!\int \phi(y)\, \lambda_t(x) J_t(y|x) p_t(x)\, dy\, dx
\\-
\int\!\int \phi(x)\, \lambda_t(x) J_t(y|x) p_t(x)\, dy\, dx
\nonumber
\\
=
\int\!\int \phi(x)\, \lambda_t(y) J_t(x|y) p_t(y)\, dx\, dy
\\-
\int \phi(x)\, \lambda_t(x) p_t(x)
\left( \int J_t(y|x)\, dy \right) dx
\nonumber
\\
=
\int \phi(x)
\left[
\int \lambda_t(y) J_t(x|y) p_t(y)\, dy
- p_t(x) \lambda_t(x)
\right] dx,
\nonumber
\end{multline}
proving Eq.\ \eqref{eq:jump-adjoint}.

We aim to construct a jump process whose adjoint generator matches the reweighting generator:
\begin{equation}
J_t^{*(W)}[p_t](x) = L_t^{*(g)}[p_t](x).
\label{eq:jump-equals-reweight}
\end{equation}

Define the positive/negative parts:
\[
(u)_- := \max(0, -u),
\qquad
(u)_+ := \max(0, u),
\qquad
(u)_+ - (u)_- = u.
\]

Following Angeli et al.\ (2019) and Del Moral (2013), we define:
\begin{multline}
\lambda_t(x)
=
\big( \psi_t(x) - \mathbb{E}_{p_t}[\psi_t] \big)_-,
\\
J_t(y|x)
=
\frac{
\big( \psi_t(y) - \mathbb{E}_{p_t}[\psi_t] \big)_+\, p_t(y)
}{
\int \big( \psi_t(z) - \mathbb{E}_{p_t}[\psi_t] \big)_+\, p_t(z)\, dz
}.
\label{eq:lambda-J-def}
\end{multline}

For the choice of $\lambda_t$ and $J_t$ in \eqref{eq:lambda-J-def}, the adjoint generator satisfies
\begin{equation}
J_t^{*(W)}[p_t](x)
=
L_t^{*(g)}[p_t](x)
=
p_t(x)\!\left(
\psi_t(x) - \int \psi_t(x)\, p_t(x)\, dx
\right).
\label{eq:prop-B3}
\end{equation}

\textbf{Proof.}
Expanding Eq.\ \eqref{eq:jump-adjoint}:
\begin{multline}
J_t^{*(W)}[p_t](x)
=
\int
\big( \psi_t(y) - \mathbb{E}_{p_t}[\psi_t] \big)_-
\frac{
\big( \psi_t(x) - \mathbb{E}_{p_t}[\psi_t] \big)_+\, p_t(y)
}{
\int (\psi_t(z) - \mathbb{E}_{p_t}[\psi_t])_+\, p_t(z)\, dz
} dy
\\-
p_t(x)\, \big( \psi_t(x) - \mathbb{E}_{p_t}[\psi_t] \big)_-.
\label{eq:jump-substitution}
\end{multline}

Using the identity
\[
\int (\psi_t(z)-\mu)_+ p_t(z)\, dz
=
\int (\psi_t(z)-\mu)_- p_t(z)\, dz
\]
(which follows from $\mathbb{E}_{p_t}[\psi_t-\mu]=0$), and splitting cases $\psi_t(x)\ge \mu$ or not, one obtains:
\[
J_t^{*(W)}[p_t](x)
=
p_t(x) \big( \psi_t(x) - \mathbb{E}_{p_t}[\psi_t] \big)
=
L_t^{*(g)}[p_t](x).
\]

\begin{table*}[t]
\centering
\renewcommand{\arraystretch}{1.5}
\begin{tabular}{|c|c|c|}
\hline
\textbf{Component}
& \textbf{PDE Contribution}
& \textbf{Generator $L_t$ acting on $\phi$}\\
\hline\hline

\textbf{Drift}
&
$\displaystyle
\partial_t p_t
=
-\nabla\!\cdot\!\bigl(p_t v_t\bigr)
$
&
$\displaystyle
L_t^{(v)}[\phi](x)
=
\langle \nabla \phi(x), v_t(x) \rangle
$
\\
\hline

\textbf{Diffusion}
&
$\displaystyle
\partial_t p_t
=
\frac{\sigma_t^2}{2}\Delta p_t
$
&
$\displaystyle
L_t^{(\sigma)}[\phi](x)
=
\frac{\sigma_t^2}{2}\Delta \phi(x)
$
\\
\hline

\textbf{Reweighting}
&
$\displaystyle
\partial_t p_t
=
p_t(x)\!\left(\psi_t(x)-\int \psi_t\,\mathrm d p_t\right)
$
&
$\displaystyle
L_t^{(g,p)}[\phi](x)
=
\phi(x)\!\left(\psi_t(x)-\int \psi_t(x)\,p_t(x)\,\mathrm dx\right)
$
\\
\hline

\end{tabular}

\caption{
Decomposition of the pde into drift, diffusion, and
reweighting components, showing the corresponding PDE contribution,
infinitesimal generator acting on test functions.
}
\label{tab:generator-decomposition}
\end{table*}

\section{Conclusion and Future Directions}

This work presents a foundational investigation into the role of information-geometric structure in
diffusion-based sampling, with particular emphasis on Wasserstein--Fisher--Rao (WFR) geometry as a
unifying framework for transport and reweighting mechanisms.
By interpreting diffusion, weighted diffusion, and reaction dynamics through a geometric and
operator-theoretic lens, we demonstrate how classical Ornstein--Uhlenbeck–type sampling processes
can be systematically augmented without abandoning their underlying stochastic calculus.
\begin{figure*}
    \centering
    \includegraphics[width=0.8\linewidth]{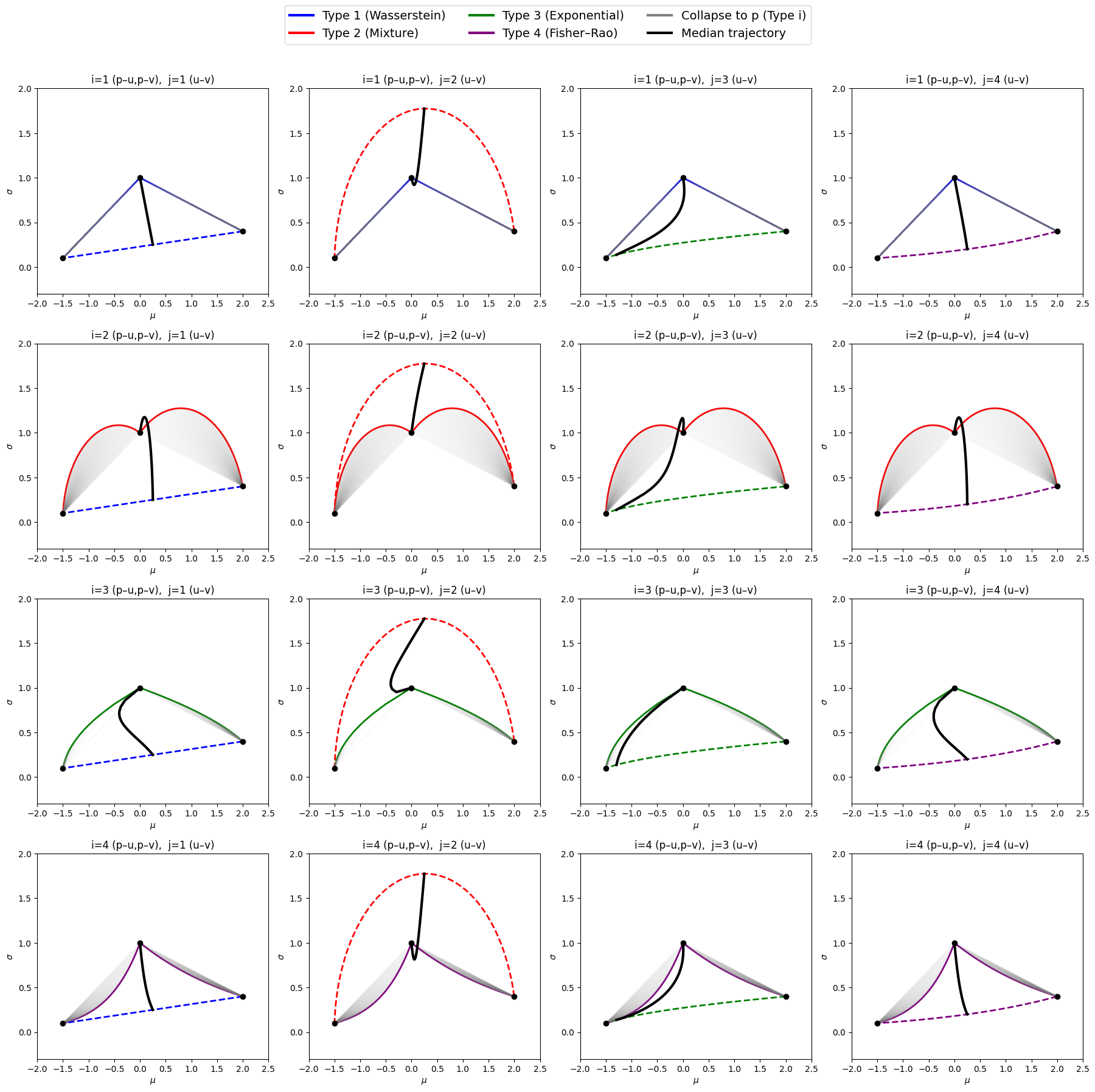}
    \caption{Geodesic structure and median trajectories across different geometries in the $(\mu,\sigma)$ parameter space. Each panel depicts a triangle formed by three distributions $(p,u,v)$, where the edges connecting $p$ to $u$ and $p$ to $v$ are constructed using a fixed geometry indexed by $i\in\{1,2,3,4\}$, while the edge connecting $u$ and $v$ is generated using a (possibly different) geometry indexed by $j\in\{1,2,3,4\}$. Type~1 corresponds to Wasserstein geodesics (blue), Type~2 to linear mixture geodesics (red), Type~3 to exponential geodesics (green), and Type~4 to Fisher--Rao geodesics (purple). Gray segments indicate collapse trajectories toward the reference distribution $p$ under the geometry indexed by $i$, and the black curve denotes the induced median trajectory between $u$ and $v$, obtained by projecting the $u$--$v$ geodesic through $p$ according to the pair $(i,j)$. Dashed curves represent the direct $u-v$ geodesic in geometry $j$, while solid colored curves illustrate the lifted or corrected paths resulting from the interaction between the two geometric structures. The collection of panels highlights how mismatches between transport and information-geometric structures modify both geodesic shapes and the resulting median trajectories.}
    \label{fig:figure}
\end{figure*}
Beyond its immediate constructions, this study opens several mathematically substantive directions
for future research, including(but not limited to) the following:

\begin{itemize}
    \item \textbf{Spectral and semigroup analysis of corrected generators.}
    A comprehensive understanding of diffusion--reweighting and diffusion--jump dynamics requires
    the spectral analysis of the associated (typically unbounded and non-selfadjoint) generators.
    Tools from semigroup theory, most notably the Hille--Yosida theorem and its extensions to
    non-conservative and nonlocal operators, provide a natural framework for characterizing
    well-posedness, ergodicity, and convergence rates of these corrected processes.
    Establishing quantitative links between geometric corrections and spectral gaps remains a
    central open problem.

    \item \textbf{Interaction of geometric geodesics and sampling efficiency.}
    In classical diffusion models, entropic optimal transport (equivalently, Schrödinger bridges)
    characterizes the geodesic structure induced by Ornstein--Uhlenbeck dynamics.
    Within the WFR framework, however, additional geodesic families arise—such as mixture (linear),
    exponential, and information-preserving interpolations that each encodes a distinct tradeoff between
    transport and mass variation.
    Understanding the interaction and possible hybridization of these geodesics is essential for
    designing sampling schemes capable of traversing mixture distributions and multimodal targets
    \emph{without} retraining score functions, thereby enabling geometry-driven adaptation at the
    level of inference rather than model fitting(Look Fig. \ref{fig:figure} and Last Appendix).
\end{itemize}

We believe that further development along these directions will deepen the theoretical foundations
of diffusion-based generative modeling and contribute to a principled synthesis of stochastic
analysis, optimal transport, and information geometry in modern sampling theory.
\bibliographystyle{plain}
\bibliography{refrences}

\newpage
\appendix
\section{More On Wasserstein Geometry}

Once $(T_\mu\mathcal{P}_2,\langle\cdot,\cdot\rangle_\mu)$ is identified with the closure of gradients,
one may introduce a formal Levi--Civita connection $\nabla^{W_2}$ on vector fields along curves.
For smooth potentials $\phi_t$, $\psi_t$, with corresponding velocity fields
$v_t=\nabla \phi_t$, $w_t=\nabla\psi_t$, the covariant derivative along $\mu_t$ is
\[
\frac{D}{dt} w_t
:= \Pi_{\mu_t}\bigl( \partial_t w_t + (v_t\cdot\nabla) w_t \bigr),
\]
where $\Pi_{\mu_t}$ denotes projection onto $T_{\mu_t}\mathcal{P}_2$, i.e.\ projection onto gradient vector fields in $L^2(\mu_t)$.
Explicitly, any vector field can be Helmholtz-decomposed as
\[
Z = \nabla \zeta + Z^\perp,\qquad Z^\perp\ \text{$\mu$-divergence-free},
\]
and $\Pi_\mu Z = \nabla\zeta$.
Thus the covariant derivative simplifies to
\[
\frac{D}{dt} w_t = \nabla \pi_t,
\quad\text{where}\quad
\Delta_{\mu_t} \pi_t = \nabla\!\cdot(\mu_t ( \partial_t w_t + (v_t\cdot\nabla)w_t)),
\]
with $\Delta_{\mu} := \nabla\!\cdot(\mu \nabla)$.  

This connection is metric-compatible and torsion-free (formally), hence the unique Levi--Civita connection of the $W_2$ Riemannian structure.

A curve $(\mu_t,v_t=\nabla\phi_t)$ is a geodesic iff its velocity field is parallel transported:
\[
\frac{D}{dt} v_t=0.
\]
Using the covariant derivative above, this yields
\[
\partial_t \nabla\phi_t + (\nabla\phi_t\cdot\nabla)\nabla\phi_t 
= \nabla \pi_t
\]
for some scalar pressure $\pi_t$ that enforces the gradient constraint.
Taking divergence w.r.t.\ $\mu_t$ shows that $\pi_t$ ensures $\nabla\phi_t$ remains in the tangent space.
In Lagrangian coordinates where $X_t$ satisfies $\dot X_t = \nabla\phi_t(X_t)$, the geodesic equation becomes the classical Hamilton–Jacobi equation
\[
\partial_t \phi_t + \frac12 |\nabla\phi_t|^2 = c(t),
\]
with $c(t)$ a time-dependent scalar.

For a functional $F$, the Wasserstein Hessian acts on $v=\nabla\phi$ as
\[
\mathrm{Hess}_{W_2}F(\mu)[v]
    = \nabla\!\left( \partial_v \frac{\delta F}{\delta\mu}(\mu) \right)
    - \nabla v : \nabla^2 \frac{\delta F}{\delta\mu}(\mu),
\]
where $\partial_v$ denotes directional derivative and $A:B := \mathrm{Tr}(A^TB)$.
This enters the second variation inequality governing geodesic convexity.

The curvature tensor $R(u,v)w$ is defined via
\[
R(u,v)w
:= \frac{D}{ds}\frac{D}{dt} w - \frac{D}{dt}\frac{D}{ds}w
\]
for smooth two-parameter families.
In Otto’s geometry this yields a non-trivial curvature operator.  
For instance, along directions $\nabla\phi$, $\nabla\psi$, one obtains the sectional curvature
\begin{multline}
K(\phi,\psi)
    = \frac{1}{\|\nabla\phi\|^2_\mu\|\nabla\psi\|^2_\mu 
      - \langle\nabla\phi,\nabla\psi\rangle_\mu^2}\\
      \int_\Omega \mu\, \mathrm{Tr}\!\left[
         (\nabla^2\phi)(\nabla^2\psi) - (\nabla^2\psi)(\nabla^2\phi)
      \right] dx,
\end{multline}
which is in general indefinite.

An alternative formulation uses the $\Gamma$--$\Gamma_2$ calculus:
\[
\Gamma(f) = |\nabla f|^2,\qquad
\Gamma_2(f) := \frac12\left( L\Gamma(f) - 2\Gamma(f,Lf)\right),
\]
where $L$ is the generator of the gradient flow of a functional $F$.
A curvature-dimension condition $\Gamma_2(f)\ge \lambda \Gamma(f)$ corresponds exactly to $\lambda$-geodesic convexity of $F$ in Wasserstein space.

The same geometry arises from several equivalent constructions:
\begin{itemize}
\item \emph{Metric approach:} length structure induced by $W_2$ and Benamou–Brenier.
\item \emph{Variational approach:} dynamic optimal transport as a kinetic action minimization.
\item \emph{Lagrangian approach:} geodesics obtained by pushing forward $\mu_0$ along McCann’s displacement interpolation $T_t=(1-t)\mathrm{id}+tT$.
\item \emph{Eulerian approach:} tangent bundle defined by continuity equations and minimal-energy velocity fields.
\item \emph{Connection-based approach:} Levi--Civita connection defined from Helmholtz projection; geodesics satisfy parallel transport.
\end{itemize}

All these constructions are mathematically equivalent and jointly define the full Riemannian structure of $(\mathcal{P}_2(\Omega),W_2)$.

To argue the latter geoemtry is more general than the wasserstein one, you can look intot eh following set of calculations that show, gradient flow in the latter can express former gradient flows:
\section{More on Information Geometry}

Let $(X,\mathcal{F},\lambda)$ be a measurable space with reference measure $\lambda$, and let
\[
M = \{p_\theta : \theta \in \Theta\} \subset \mathcal{P}(X)
\]
be a smooth $d$-dimensional statistical model, where $\Theta \subset \mathbb{R}^d$ is open and
\[
p_\theta(x) = p(x;\theta)
\]
is a smooth family of probability densities with respect to $\lambda$. The \emph{score} is
\[
\partial_i \log p_\theta(x) := \frac{\partial}{\partial \theta^i} \log p(x;\theta), \qquad i=1,\dots,d.
\]
The \emph{Fisher information metric} on $M$ is defined by
\[
g_{ij}(\theta)
:= \int_X \partial_i \log p_\theta(x)\,\partial_j \log p_\theta(x)\,p_\theta(x)\,\mathrm{d}\lambda(x),
\]
turning $(M,g)$ into a finite-dimensional Riemannian manifold. When the FR metric on $\mathcal{P}(X)$ is defined as above, its pullback along the embedding
\[
\Theta \ni \theta \mapsto p_\theta \in \mathcal{P}(X)
\]
coincides with the Fisher information matrix $g_{ij}(\theta)$; in other words, the infinite-dimensional Fisher--Rao metric is compatible with the classical information-geometric construction on parametric families.

Historically, this Riemannian structure was introduced independently by Rao and others and later characterized by Chentsov as the \emph{unique} (up to a constant factor) Riemannian metric on the simplex that is invariant under Markov morphisms and can be obtained as the quadratic term in the Taylor expansion of any standard $f$-divergence between nearby models.

A distinctive feature of information geometry is that the Riemannian manifold $(M,g)$ is enhanced by a pair of torsion-free affine connections $(\nabla,\nabla^\ast)$ that are \emph{dual} with respect to $g$, or more generally by a one-parameter family $\{\nabla^{(\alpha)}\}_{\alpha\in\mathbb{R}}$ of $\alpha$-connections which are mutually dual. 

Let $\nabla$ and $\nabla^\ast$ be torsion-free affine connections on $M$ with Christoffel symbols $\Gamma^k_{ij}$ and $(\Gamma^\ast)^k_{ij}$ in local coordinates $\theta$. They are said to be dual with respect to $g$ if for all vector fields $X,Y,Z$ on $M$,
\begin{equation}
X\big(g(Y,Z)\big) = g(\nabla_X Y,Z) + g(Y,\nabla^\ast_X Z).
\label{eq:dual-connections}
\end{equation}
Equivalently, in coordinates this reads
\begin{equation}
\partial_i g_{jk} = \Gamma^\ell_{ij} g_{\ell k} + (\Gamma^\ast)^\ell_{ik} g_{j\ell}.
\label{eq:dual-connections-coordinates}
\end{equation}

The Levi--Civita connection $\nabla^{(0)}$ of $(M,g)$ is the unique torsion-free \emph{metric} connection satisfying
\begin{equation}
\partial_i g_{jk} = \Gamma^{(0)\,\ell}_{\;\;ij} g_{\ell k} + \Gamma^{(0)\,\ell}_{\;\;ik} g_{j\ell},
\label{eq:levi-civita}
\end{equation}
so it corresponds to the self-dual case $\nabla = \nabla^\ast = \nabla^{(0)}$.

In information geometry one introduces the totally symmetric \emph{Amari--Chentsov tensor}
\[
T_{ijk}(\theta)
:= \int_X \partial_i \log p_\theta(x)\,\partial_j \log p_\theta(x)\,\partial_k \log p_\theta(x)\,p_\theta(x)\,\mathrm{d}\lambda(x),
\]
and its $(1,2)$-version
\[
T^k_{\;ij} := g^{k\ell} T_{ij\ell}.
\]
The $\alpha$-connections are then defined by
\begin{equation}
\Gamma^{(\alpha)\,k}_{\;\;\;ij}
= \Gamma^{(0)\,k}_{\;\;\;ij} + \frac{\alpha}{2}\,T^k_{\;ij},
\label{eq:alpha-connection}
\end{equation}
and one checks that $\nabla^{(\alpha)}$ and $\nabla^{(-\alpha)}$ are dual with respect to $g$ in the sense of \eqref{eq:dual-connections}. The three most important cases are\text{ (mixture connection)}, \text{ (Levi--Civita / Fisher--Rao connection)}\text{ (exponential connection)}.

\begin{definition}[Statistical manifold]
A \emph{statistical manifold} in the sense of information geometry is a quadruple
\[
(M,g,\nabla,\nabla^\ast)
\]
consisting of a smooth manifold $M$, a Riemannian metric $g$, and a pair of torsion-free affine connections $(\nabla,\nabla^\ast)$ that are dual with respect to $g$ as in \eqref{eq:dual-connections}. The special case $(M,g,\nabla^{(\alpha)},\nabla^{(-\alpha)})$ is called an \emph{$\alpha$-statistical manifold}.
\end{definition}

Geometrically, the dualistic structure $(g,\nabla,\nabla^\ast)$ allows one to decompose curvature and convexity into ``primal'' and ``dual'' contributions, and it underlies many higher-order asymptotic results in statistics (efficiency, bias, and so on).

Given a connection $\nabla^{(\alpha)}$, a curve $\theta(t)$ in parameter space is a $\nabla^{(\alpha)}$-geodesic if its velocity vector field is parallel along the curve:
\[
\nabla^{(\alpha)}_{\dot\theta(t)} \dot\theta(t) = 0,
\]
or, in local coordinates,
\begin{equation}
\ddot\theta^k(t)
+ \Gamma^{(\alpha)\,k}_{\;\;\;ij}(\theta(t))\,\dot\theta^i(t)\,\dot\theta^j(t)
= 0.
\label{eq:alpha-geodesic}
\end{equation}
In many important models (in particular exponential families), the $\alpha=\pm1,0$ connections have especially simple geodesics when expressed in suitable affine coordinates.

\medskip
\noindent
\emph{Mixture geodesics $(\alpha=-1)$.}
In mixture coordinates, a $(-1)$-geodesic between densities $\rho_0,\rho_1$ is simply the linear interpolation
\[
\rho^{\mathrm{mix}}_t = (1-t)\,\rho_0 + t\,\rho_1,\qquad t\in[0,1].
\]
On a finite-dimensional exponential family, mixture geodesics correspond to affine lines in the expectation-parameter coordinates. In this sense, the mixture connection encodes the affine structure of the simplex under convex combination.

\medskip
\noindent
\emph{Exponential geodesics $(\alpha=+1)$.}
In natural (canonical) parameters, a $(+1)$-geodesic is linear in the log-density. At the level of densities this gives the normalized geometric interpolation
\[
\rho^{\mathrm{exp}}_t
= \frac{\rho_0^{1-t}\,\rho_1^{t}}{\displaystyle \int_X \rho_0^{1-t}(x)\,\rho_1^{t}(x)\,\mathrm{d}\lambda(x)},
\qquad t\in[0,1].
\]
On an exponential family, exponential geodesics are affine lines in the natural parameter $\theta$, and their image in density space is obtained by exponentiating and normalizing.

\medskip
\noindent
\emph{Fisher--Rao / Levi--Civita geodesics $(\alpha=0)$.}
For the Fisher--Rao connection, geodesics are the usual Riemannian geodesics of $(M,g)$. In the infinite-dimensional ambient space $(\mathcal{M}_+(\Omega),d_{\mathrm{FR}})$, the map $\mu\mapsto \sqrt{\mu}$ identifies the manifold with a convex cone in the Hilbert space $L^2(\Omega)$. The FR geodesic between $\mu_0$ and $\mu_1$ is simply the image of the straight line between $\sqrt{\mu_0}$ and $\sqrt{\mu_1}$ in $L^2(\Omega)$, restricted to the cone:
\[
\sqrt{\mu_t} = (1-t)\sqrt{\mu_0} + t\sqrt{\mu_1},\qquad
\mu_t = \big((1-t)\sqrt{\mu_0} + t\sqrt{\mu_1}\big)^2.
\]
On a normalized probability simplex, this corresponds to great-circle arcs on the unit sphere of $L^2$, and one can compute curvature tensors explicitly for specific parametric families (e.g.\ location-scale Gaussians) to see that many statistical manifolds have negative sectional curvature even though the ambient FR cone is flat; this is the source of the ``negative curvature of statistical manifolds'' versus ``flat Fisher--Rao cone'' contrast.

The Fisher--Rao construction above is inherently infinite-dimensional: $\mathcal{P}(\Omega)$ (or $\mathcal{M}_+(\Omega)$) is modelled on suitable function spaces (e.g.\ $L^2$ or Orlicz spaces), and the map $\mu\mapsto \sqrt{\mu}$ realizes it as a (nonlinear) submanifold of $L^2(\Omega)$. In this setting, tangent vectors are identified with square-integrable functions $\varphi$ with zero mean, and the FR inner product is simply
\[
\langle \varphi_1,\varphi_2\rangle_\mu^{\mathrm{FR}}
= \int_\Omega \varphi_1(x)\,\varphi_2(x)\,\mathrm{d}\mu(x).
\]
Restricting this infinite-dimensional structure to a finite-dimensional statistical model $M=\{p_\theta\}$ recovers the Fisher information metric, and the dualistic structure $(g,\nabla^{(\alpha)},\nabla^{(-\alpha)})$ lifts to suitable infinite-dimensional settings.

One rigorous approach, due to Pistone and Sempi, constructs an infinite-dimensional exponential statistical manifold on the space of all probability measures equivalent to a fixed reference measure, with charts given by centered log-densities and with a natural extension of the FR metric and $\alpha$-connections. This provides a genuine Banach (or Fréchet) manifold structure on $\mathcal{P}(\Omega)$ in which mixture and exponential geodesics, as well as FR geodesics, can be treated on the same footing, and where the dualistic information-geometric calculus extends beyond parametric models.

In summary, information geometry equips spaces of probability measures with a Riemannian metric (Fisher--Rao), a family of dual affine connections ($\alpha$-connections), and the associated mixture, exponential, and FR geodesics. This dualistic structure is compatible with both finite-dimensional parametric models and infinite-dimensional manifolds of measures, and it will be crucial later when we relate curvature, spectral properties of Markov semigroups, and stability of PDE/SDE flows.

\section{More on HK geometry}
From the Riemannian point of view, $(\mathcal{M}_+(\Omega),d_{\mathrm{WFR}})$ is a formal infinite-dimensional Riemannian manifold whose tangent space at $\mu$ is identified with potentials $\phi$ modulo constants, endowed with the inner product
\[
\langle \phi_1,\phi_2\rangle_\mu^{\mathrm{WFR}}
= \int_\Omega \big(\phi_1\phi_2 + \nabla\phi_1\cdot\nabla\phi_2\big)\,\mathrm{d}\mu.
\]
The associated Levi--Civita connection is obtained by projecting the time derivative of a time-dependent potential plus its convective derivative (arising from $v_t=\nabla\phi_t$) back onto this tangent space, exactly as in the Otto calculus but with an additional reaction component. Geodesics in this geometry are characterized variationally as minimizers of the action $\mathcal{A}$, and infinitesimally by the condition that their velocity potentials are parallel transported by this connection.

Geometrically, the WFR (Hellinger--Kantorovich) space can also be represented as a cone over a suitable Wasserstein base space: informally, one can rewrite a positive measure as a “mass” (radial variable) times a probability distribution (angular variable). The radial part evolves according to a Fisher--Rao-type dynamics, while the angular part follows a Wasserstein-type displacement. In this sense, purely radial geodesics reproduce Fisher--Rao geodesics, purely angular geodesics reproduce Wasserstein geodesics, and general WFR geodesics involve simultaneous changes in both mass and spatial distribution.

\medskip

\section{Markov Semigroups, Spectral Gaps, and Curvature via Carr\'e du Champ}
\label{sec:markov-semigroups}

\subsection{Markov semigroups and infinitesimal generators}

Let $(\Omega,\mathcal{F},(\mathcal{F}_t)_{t\ge 0},\mathbb{P})$ be a filtered probability space and
$(X_t)_{t\ge 0}$ a time-homogeneous Markov process on $\mathbb{R}^d$.
Its Markov semigroup $(P_t)_{t\ge 0}$ acts on suitable test functions $\varphi:\mathbb{R}^d\to\mathbb{R}$ by
\begin{equation}
  (P_t\varphi)(x) \;:=\; \mathbb{E}\!\left[\varphi(X_t)\,\big|\,X_0=x\right].
  \label{eq:Pt-def}
\end{equation}
The semigroup property $P_{t+s}=P_tP_s$ follows from the Markov property.

The (infinitesimal) generator $\mathcal{L}$ is defined on its domain $\mathsf{D}(\mathcal{L})$ by the strong limit
\begin{equation}
  \mathcal{L}\varphi
  \;:=\;
  \lim_{t\downarrow 0}\frac{P_t\varphi-\varphi}{t},
  \qquad \varphi\in\mathsf{D}(\mathcal{L}).
  \label{eq:generator-def}
\end{equation}
Formally one may write $P_t=\exp(t\mathcal{L})$ and the backward Kolmogorov equation becomes
\begin{equation}
  \partial_t (P_t\varphi) \;=\; \mathcal{L}(P_t\varphi) \;=\; P_t(\mathcal{L}\varphi),
  \qquad P_0\varphi=\varphi.
  \label{eq:backward-kolmogorov}
\end{equation}

When the law of $X_t$ has a density $p_t$ w.r.t.\ Lebesgue measure, the forward evolution is governed by the adjoint
$\mathcal{L}^*$ (in the $\langle\cdot,\cdot\rangle_{L^2(dx)}$ pairing):
\begin{equation}
  \partial_t p_t \;=\; \mathcal{L}^* p_t,
  \qquad
  \int \varphi(x)\,\mathcal{L}^*p(x)\,dx
  \;=\;
  \int (\mathcal{L}\varphi)(x)\,p(x)\,dx.
  \label{eq:forward-kolmogorov}
\end{equation}

\subsection{The OU/Langevin diffusion as the canonical example}

A guiding example throughout this paper is the Langevin (Ornstein--Uhlenbeck as the quadratic case) diffusion
\begin{equation}
  dX_t \;=\; -\nabla V(X_t)\,dt + \sqrt{2}\,dB_t,
  \label{eq:langevin-sde}
\end{equation}
for a potential $V:\mathbb{R}^d\to\mathbb{R}$.
Its generator acting on smooth compactly supported $\varphi$ is
\begin{equation}
  \mathcal{L}\varphi \;=\; \Delta\varphi - \langle\nabla V,\nabla\varphi\rangle,
  \label{eq:langevin-generator}
\end{equation}
and the invariant probability measure (when normalizable) is
\begin{equation}
  \pi(dx) \;\propto\; e^{-V(x)}\,dx.
  \label{eq:gibbs}
\end{equation}
For the OU process, $V(x)=\tfrac{1}{2}\|x\|^2$ and $\pi$ is Gaussian.

It is often advantageous to switch from Lebesgue to the invariant measure and work in $L^2(\pi)$.
In particular, rather than evolving the Lebesgue-density $p_t$, one may evolve the relative density
$\rho_t := p_t/\pi$, in which case (under reversibility assumptions made below) the forward equation takes the
simple form
\begin{equation}
  \partial_t \rho_t \;=\; \mathcal{L}\rho_t.
  \label{eq:rho-forward}
\end{equation}

\subsection{Reversibility, symmetry, and Dirichlet forms}

A probability measure $\pi$ is stationary for $(P_t)$ if $\int P_t\varphi\,d\pi=\int \varphi\,d\pi$ for all $t\ge 0$.
The semigroup is \emph{reversible} w.r.t.\ $\pi$ if
\begin{equation}
  \int f\,(P_t g)\,d\pi \;=\; \int (P_t f)\,g\,d\pi,
  \qquad \forall f,g \in L^2(\pi),\ \forall t\ge 0.
  \label{eq:reversible}
\end{equation}
Equivalently, $P_t$ (and hence $\mathcal{L}$) is self-adjoint in $L^2(\pi)$, and one expects a real spectrum.
A basic consequence is the $L^2(\pi)$ contraction inequality
\begin{equation}
  \|P_t f\|_{L^2(\pi)} \;\le\; \|f\|_{L^2(\pi)},
  \qquad t\ge 0,
  \label{eq:L2-contraction}
\end{equation}
which can be obtained from Jensen and stationarity.

Given a (symmetric) generator $\mathcal{L}$, the \emph{carr\'e du champ} operator $\Gamma$ is defined by
\begin{equation}
  \Gamma(f,g)
  \;:=\;
  \frac{1}{2}\Big(\mathcal{L}(fg) - f\,\mathcal{L}g - g\,\mathcal{L}f\Big),
  \qquad \Gamma(f):=\Gamma(f,f).
  \label{eq:gamma-def}
\end{equation}
The associated Dirichlet energy is
\begin{equation}
  \mathcal{E}(f,g) \;:=\; \int \Gamma(f,g)\,d\pi.
  \label{eq:dirichlet-energy}
\end{equation}
Under reversibility, one has the fundamental integration-by-parts identity
\begin{equation}
  \int f\,(-\mathcal{L})g\,d\pi
  \;=\;
  \int \Gamma(f,g)\,d\pi
  \;=\;
  \mathcal{E}(f,g),
  \label{eq:ibp}
\end{equation}
so that $-\mathcal{L}$ is positive semidefinite on $L^2(\pi)$.

For the Langevin generator \eqref{eq:langevin-generator}, a direct computation yields the canonical identity
\begin{equation}
  \Gamma(f) \;=\; \|\nabla f\|^2.
  \label{eq:gamma-langevin}
\end{equation}

\subsection{Spectral gap, Poincar\'e inequality, and mixing rates}

The spectral gap of $-\mathcal{L}$ (in $L^2(\pi)$) governs exponential decay to equilibrium in $L^2$.
A standard sufficient condition is the Poincar\'e inequality: there exists $\lambda_{\mathrm{PI}}>0$ such that
\begin{equation}
  \mathrm{Var}_\pi(f)
  \;:=\;
  \int \Big(f-\textstyle\int f\,d\pi\Big)^2\,d\pi
  \;\le\;
  \frac{1}{\lambda_{\mathrm{PI}}}\int \Gamma(f)\,d\pi,
  \qquad \forall f \in \mathsf{D}(\mathcal{E}).
  \label{eq:poincare}
\end{equation}
When \eqref{eq:poincare} holds, one obtains exponential decay of variance along the semigroup:
\begin{equation}
  \mathrm{Var}_\pi(P_t f)
  \;\le\;
  e^{-2\lambda_{\mathrm{PI}} t}\,\mathrm{Var}_\pi(f),
  \qquad t\ge 0,
  \label{eq:variance-decay}
\end{equation}
which is a precise, operator-theoretic notion of \emph{mixing rate}.

A stronger inequality is the logarithmic Sobolev inequality (LSI):
there exists $C_{\mathrm{LSI}}>0$ such that for all densities $\mu$ w.r.t.\ $\pi$,
\begin{equation}
  \mathrm{KL}(\mu\|\pi)
  \;\le\;
  C_{\mathrm{LSI}}\ \mathrm{FI}(\mu\|\pi),
  \qquad
  \mathrm{FI}(\mu\|\pi)
  \;:=\;
  \mathbb{E}_\mu\!\big[\|\nabla \log(\mu/\pi)\|^2\big].
  \label{eq:lsi}
\end{equation}
LSI implies exponential decay of relative entropy (and hence, under mild conditions, convergence in total variation).

\subsection{Iterated carr\'e du champ and curvature-dimension}

The \emph{iterated carr\'e du champ} $\Gamma_2$ is defined by
\begin{equation}
  \Gamma_2(f,g)
  \;:=\;
  \frac{1}{2}\Big(
    \mathcal{L}\Gamma(f,g) - \Gamma(f,\mathcal{L}g) - \Gamma(g,\mathcal{L}f)
  \Big),
  \qquad \Gamma_2(f):=\Gamma_2(f,f).
  \label{eq:gamma2-def}
\end{equation}
Conceptually, $\Gamma$ appears when differentiating Lyapunov functionals (e.g.\ $\chi^2$ or KL) once in time,
while $\Gamma_2$ appears upon differentiating a second time, and it is the correct analytic object that encodes
\emph{curvature} information of the diffusion (in the Bakry--\'Emery sense).

A diffusion Markov semigroup is said to satisfy the Bakry--\'Emery criterion with constant $\alpha>0$ if
\begin{equation}
  \Gamma_2(f) \;\ge\; \alpha\,\Gamma(f),
  \qquad \forall f.
  \label{eq:BE}
\end{equation}
This is also written as the curvature-dimension condition $\mathrm{CD}(\alpha,\infty)$.
In the Langevin case, $\mathrm{CD}(\alpha,\infty)$ is equivalent to $\alpha$-strong convexity of $V$:
\begin{equation}
  \nabla^2 V(x) \succeq \alpha I
  \quad\Longleftrightarrow\quad
  \mathrm{CD}(\alpha,\infty).
  \label{eq:strong-convexity}
\end{equation}
The significance for sampling is that $\mathrm{CD}(\alpha,\infty)$ yields functional inequalities (in particular, LSI)
with constants controlled by $1/\alpha$, hence quantitative convergence rates.

\begin{table*}[t]
\centering
\renewcommand{\arraystretch}{1.35}
\setlength{\tabcolsep}{6pt}
\begin{tabular}{p{0.23\linewidth} p{0.73\linewidth}}
\hline
\textbf{Concept} & \textbf{OU specialization (and the general semigroup identities it instantiates)}\\
\hline

OU SDE &
\(
\mathrm{d}X_t = -\nabla V(X_t)\,\mathrm{d}t + \sqrt{2}\,\mathrm{d}B_t,
\qquad
V(x)=\frac{\alpha}{2}\|x\|^2\ (\alpha>0),
\)
so \(\nabla V(x)=\alpha x\) and
\(
\mathrm{d}X_t=-\alpha X_t\,\mathrm{d}t+\sqrt{2}\,\mathrm{d}B_t.
\)

\\

Markov semigroup \((P_t)_{t\ge 0}\) &
\(
(P_t f)(x) := \mathbb{E}\!\left[f(X_t)\mid X_0=x\right]
\)
(time-homogeneous; semigroup property \(P_sP_t=P_{s+t}\)).  

\\

Infinitesimal generator \(\mathcal{L}\) &
\(
\mathcal{L}f := \lim_{t\downarrow 0}\frac{P_t f-f}{t}
\)
(on the appropriate domain).  
\\

Explicit generator (OU/Langevin form) &
For \(f\in C^2(\mathbb{R}^d)\),
\[
(\mathcal{L}f)(x)
= \Delta f(x)-\langle \nabla V(x), \nabla f(x)\rangle
= \Delta f(x)-\alpha \langle x,\nabla f(x)\rangle .
\]
(This is exactly the Langevin generator formula specialized to quadratic \(V\).) 

\\

Kolmogorov (semigroup) evolution for observables &
For all \(t\ge 0\),
\[
\partial_t P_t f = \mathcal{L}P_t f = P_t\mathcal{L}f ,
\]
i.e. \(u_t:=P_t f\) solves \(\partial_t u_t=\mathcal{L}u_t\). 

\\

Forward (Fokker--Planck) equation &
Writing \(\pi_t\) for the law/density of \(X_t\) (w.r.t. Lebesgue),
\[
\partial_t \pi_t = \mathcal{L}^\ast \pi_t ,
\]
where \(\mathcal{L}^\ast\) is the \(L^2(\mathfrak{m})\)-adjoint of \(\mathcal{L}\).  
\\

Stationary distribution \(\pi\) &
\(\pi\) is stationary iff \(\mathcal{L}^\ast \pi=0\).  
For OU with \(V(x)=\frac{\alpha}{2}\|x\|^2\),
\[
\pi(x)\ \propto\ \exp(-V(x)) \ =\ \exp\!\Big(-\frac{\alpha}{2}\|x\|^2\Big),
\quad\text{i.e. } \pi=\mathcal{N}\!\left(0,\alpha^{-1}I_d\right).
\]

\\

Reversibility (w.r.t. \(\pi\)) &
The OU/Langevin semigroup is reversible w.r.t. \(\pi\), so \(\mathcal{L}\) is symmetric on \(L^2(\pi)\).  

\\

Carr\'e du champ \(\Gamma\) (your \(T\)) &
Define
\[
\Gamma(f,g):=\frac12\big(\mathcal{L}(fg)-f\,\mathcal{L}g-g\,\mathcal{L}f\big).
\]

For the diffusion generator above (in particular OU),
\[
\Gamma(f,g)=\langle \nabla f,\nabla g\rangle,
\qquad
\Gamma(f,f)=\|\nabla f\|^2 .
\]

\\

Dirichlet energy / Dirichlet form \(\mathcal{E}\) &
\[
\mathcal{E}(f,g):=\int \Gamma(f,g)\,\mathrm{d}\pi,
\qquad
\mathcal{E}(f,f)=\int \|\nabla f\|^2\,\mathrm{d}\pi .
\]

\\

Fundamental integration-by-parts identity &
For reversible semigroups,
\[
\int f\,(-\mathcal{L})g\,\mathrm{d}\pi
=
\int \Gamma(f,g)\,\mathrm{d}\pi
=
\mathcal{E}(f,g),
\quad\text{hence } -\mathcal{L}\succeq 0.
\]

\\

Iterated carr\'e du champ \(\Gamma_2\) (your \(T_2\)) &
Define
\[
\Gamma_2(f,g)
:=\frac12\Big(\mathcal{L}\Gamma(f,g)-\Gamma(f,\mathcal{L}g)-\Gamma(g,\mathcal{L}f)\Big).
\]

(For OU, \(\Gamma_2\) can be computed explicitly and is the object controlling curvature-dimension/Bakry--\'Emery estimates, but the defining identity above is the canonical “reader convenience” form.)

\\
\hline
\end{tabular}
\caption{Ornstein--Uhlenbeck (OU) process as the quadratic-potential Langevin diffusion: semigroup \(P_t\), generator \(\mathcal{L}\), forward/backward PDEs, and the quadratic forms \(T=\Gamma\), \(T_2=\Gamma_2\), and Dirichlet energy \(\mathcal{E}\).}
\label{tab:ou_semigroup_generator_gamma}
\end{table*}

\subsection{How Fisher--Rao reweighting can affect the spectral gap: a precise operator viewpoint}
\label{subsec:fr-spectral-gap}

In this subsection we make precise, at the level of generators and quadratic
forms, the sense in which adding a Fisher--Rao (FR) \emph{reaction/reweighting}
term on top of an Ornstein--Uhlenbeck (OU) diffusion can change the spectral
properties that govern convergence rates. We emphasize from the outset that the
resulting dynamics is no longer a linear Markov semigroup on densities; rather,
it is a Feynman--Kac (FK) evolution with normalization, or equivalently a
mean-field interacting particle system. Consequently, the classical notion of
\emph{spectral gap of a Markov generator} does not apply verbatim. Nevertheless,
there is a natural and rigorous way to measure the additional dissipation
induced by the FR term via bilinear forms and variance decay identities.

Let $\pi$ denote the invariant Gaussian measure of the OU process
\(
dX_t=-\alpha X_t\,dt+\sqrt{2}\,dB_t
\)
(with $\alpha>0$), and let $\mathcal{L}$ be its (symmetric) generator on
$L^2(\pi)$,
\[
\mathcal{L}f=\Delta f-\alpha\langle x,\nabla f\rangle.
\]
The OU semigroup $(P_t)_{t\ge 0}$ is reversible w.r.t.\ $\pi$, and the spectral
gap inequality (Poincar\'e inequality) asserts that there exists
$\lambda_{\mathrm{OU}}>0$ such that
\begin{equation}
\mathrm{Var}_\pi(f)
\;\le\;
\frac{1}{\lambda_{\mathrm{OU}}}\,\mathcal{E}(f,f),
\qquad
\mathcal{E}(f,f)=\int \Gamma(f)\,d\pi=\int \|\nabla f\|^2\,d\pi,
\label{eq:ou-poincare}
\end{equation}
for all $f$ in the Dirichlet domain with $\int f\,d\pi=0$. Equivalently,
\begin{equation}
\int f\,(-\mathcal{L})f\,d\pi
\;\ge\;
\lambda_{\mathrm{OU}}\int f^2\,d\pi,
\qquad f\perp 1,
\label{eq:ou-gap-form}
\end{equation}
which yields exponential $L^2(\pi)$ convergence:
\(
\|P_t f\|_{L^2(\pi)}\le e^{-\lambda_{\mathrm{OU}} t}\|f\|_{L^2(\pi)}.
\)

Let $g:\mathbb{R}^d\to\mathbb{R}$ be a measurable potential (in applications,
$g$ may depend on $t$ and on the current density through a corrector; here we
freeze $g$ to isolate the spectral mechanism). Consider the \emph{normalized}
FK evolution of densities w.r.t.\ Lebesgue,
\begin{equation}
\partial_t p_t
=
\mathcal{L}^* p_t
+
p_t\Bigl(g-\mathbb{E}_{p_t}[g]\Bigr),
\qquad
\mathbb{E}_{p_t}[g]=\int g(x)p_t(x)\,dx,
\label{eq:fk-normalizeds}
\end{equation}
which is exactly the OU transport--diffusion plus an FR-type reweighting term.
The centering by $\mathbb{E}_{p_t}[g]$ enforces mass conservation
$\int p_t=1$. This dynamics is no longer linear in $p_t$, hence it does not
define a Markov semigroup on densities.

A convenient way to compare with the OU spectral picture is to switch to the
\emph{relative density} $\rho_t:=\frac{p_t}{\pi}$, for which the baseline OU
evolution becomes $\partial_t \rho_t=\mathcal{L}\rho_t$ (cf.\ the semigroup
preliminaries). In these coordinates, \eqref{eq:fk-normalizeds} reads
\begin{equation}
\partial_t \rho_t
=
\mathcal{L}\rho_t
+
\rho_t\Bigl(g-\mathbb{E}_{\rho_t\pi}[g]\Bigr),
\qquad
\mathbb{E}_{\rho_t\pi}[g]=\int g\,\rho_t\,d\pi.
\label{eq:fk-rho}
\end{equation}

A natural $L^2(\pi)$ distance from equilibrium is the $\chi^2$ functional
\(
\chi^2(p_t\|\pi)=\int (\rho_t-1)^2\,d\pi=\mathrm{Var}_\pi(\rho_t)
\)
(since $\int \rho_t\,d\pi=1$). Differentiating along \eqref{eq:fk-rho} and using
the integration-by-parts identity
\(
\int f\,\mathcal{L}f\,d\pi=-\int \Gamma(f)\,d\pi
\),
we obtain the exact identity
\begin{multline}
\frac{1}{2}\frac{d}{dt}\mathrm{Var}_\pi(\rho_t)
=
\int (\rho_t-1)\,\partial_t\rho_t\,d\pi
\nonumber\\
=
\underbrace{\int (\rho_t-1)\,\mathcal{L}\rho_t\,d\pi}_{-\int \Gamma(\rho_t)\,d\pi}
\;+\;
\int (\rho_t-1)\,\rho_t\Bigl(g-\mathbb{E}_{\rho_t\pi}[g]\Bigr)\,d\pi
\nonumber\\
=
-\int \Gamma(\rho_t)\,d\pi
\;+\;
\int \rho_t(\rho_t-1)\,g\,d\pi
\;-\;
\mathbb{E}_{\rho_t\pi}[g]\int \rho_t(\rho_t-1)\,d\pi .
\label{eq:var-identity}
\end{multline}
Since $\int \rho_t(\rho_t-1)\,d\pi=\mathrm{Var}_\pi(\rho_t)$, the last term is
explicit:
\begin{equation}
\frac{1}{2}\frac{d}{dt}\mathrm{Var}_\pi(\rho_t)
=
-\int \Gamma(\rho_t)\,d\pi
+
\int \rho_t(\rho_t-1)\,g\,d\pi
-
\mathbb{E}_{\rho_t\pi}[g]\;\mathrm{Var}_\pi(\rho_t).
\label{eq:var-identity2}
\end{equation}
Equation \eqref{eq:var-identity2} is rigorous and shows precisely how the
reweighting term alters the baseline OU dissipation $-\int\Gamma(\rho_t)d\pi$ by
two additional contributions, both governed by correlations between $\rho_t$ and
$g$.

To connect \eqref{eq:var-identity2} to spectral gaps, we linearize the flow
around equilibrium. Let $\rho_t=1+\varepsilon h_t$ with $\int h_t\,d\pi=0$ and
$\varepsilon\ll 1$. Expanding \eqref{eq:fk-rho} to first order in $\varepsilon$
yields the linearized evolution
\begin{equation}
\partial_t h_t
=
\mathcal{L} h_t
+
\Bigl(g-\mathbb{E}_\pi[g]\Bigr)\,h_t,
\label{eq:linearized}
\end{equation}
since the centering term contributes only through $\mathbb{E}_\pi[g]$ at leading
order. Define the (generally non-symmetric) linear operator
\begin{equation}
\mathcal{A} := \mathcal{L} + \bigl(g-\mathbb{E}_\pi[g]\bigr)\,\mathrm{Id}.
\label{eq:A-def-fr}
\end{equation}
In the reversible OU setting, $\mathcal{L}$ is self-adjoint on $L^2(\pi)$, and
multiplication by $(g-\mathbb{E}_\pi[g])$ is also self-adjoint. Thus $\mathcal{A}$
is self-adjoint on $L^2(\pi)$ and the linearized dynamics admits an $L^2(\pi)$
spectral decomposition. In particular, if
\begin{equation}
-\sup\bigl\{\langle h,\mathcal{A}h\rangle_{L^2(\pi)} : \|h\|_{L^2(\pi)}=1,\ h\perp 1\bigr\}
\;=\;
\lambda_{\mathrm{eff}}
\;>\; 0,
\label{eq:eff-gap}
\end{equation}
then the linearized perturbations decay exponentially:
\(
\|h_t\|_{L^2(\pi)}\le e^{-\lambda_{\mathrm{eff}}t}\|h_0\|_{L^2(\pi)}.
\)
Comparing \eqref{eq:eff-gap} with the OU gap \eqref{eq:ou-gap-form}, we see that
the FR potential effectively modifies the Rayleigh quotient by adding
\(
\int (g-\mathbb{E}_\pi[g])\,h^2\,d\pi.
\)
Hence, \emph{whenever the potential is such that}
\begin{equation}
\int (g-\mathbb{E}_\pi[g])\,h^2\,d\pi
\;\le\;
-\,c\,\|h\|_{L^2(\pi)}^2
\quad \text{for all } h\perp 1,
\label{eq:neg-potential-condition}
\end{equation}
for some $c>0$, the effective gap improves to
\(
\lambda_{\mathrm{eff}}\ge \lambda_{\mathrm{OU}}+c.
\)
Condition \eqref{eq:neg-potential-condition} holds, for instance, if
$g-\mathbb{E}_\pi[g]\le -c$ $\pi$-a.s.\ (a strong sufficient condition), or more
generally if $g$ is negative on the dominant modes of $-\mathcal{L}$.

The preceding argument makes two points precise:
\begin{enumerate}
\item The FR reweighting term changes the \emph{dissipation identity}
\eqref{eq:var-identity2} by adding terms controlled by correlations with the
potential $g$.
\item After linearization around equilibrium, the FR term appears as a
self-adjoint ``killing/anti-killing'' perturbation of the OU generator
\eqref{eq:A-def-fr}, and can therefore increase or decrease the effective
spectral gap depending on the sign/structure of $g$ through the Rayleigh
quotient \eqref{eq:eff-gap}.
\end{enumerate}
In particular, the statement ``FR improves the spectral gap'' is \emph{not}
automatic: it requires that the induced potential acts as additional
dissipation on the relevant slow modes.

In the applications of this paper, $\psi_t$ is not an arbitrary fixed potential
but is induced by geometric correction terms (e.g.\ mixture/exponential/OT
interpolations) and may depend on the evolving density. This places the
resulting dynamics outside the scope of classical linear Markov semigroup
theory: the flow is nonlinear, and its implementation relies on interacting
particles with resampling/jump mechanisms. A systematic theory connecting such
normalized Feynman--Kac evolutions to \emph{quantitative} improvements of mixing
rates via spectral gaps (or suitable nonlinear analogues) appears to be
underdeveloped. Establishing sharp conditions under which the induced FR/FK
potentials enlarge an effective gap and yield provably faster convergence is an
important direction for future research.

\section{Three elementary lemmas: drift, diffusion, and Fisher--Rao rates}

In this section we isolate three simple but important identities that
clarify how the classical drift and diffusion terms appearing in the
Fokker--Planck equation can be re-expressed in terms of (i) a pure
continuity equation with a suitable velocity field, and (ii) Fisher--Rao
reaction equations of the form
\[
\partial_t \mu_t = \psi_t\,\mu_t,
\]
for an explicitly computable rate function $\psi_t$.  These identities will
be the basic algebraic tools that allow us, later on, to express
diffusion-model dynamics in Fisher--Rao language and to identify the
correct Feynman--Kac weights.

Throughout we work on $\mathbb{R}^d$ and assume that for each $t$,
$\mu_t$ is a strictly positive, smooth density with sufficient decay at
infinity so that all integrations by parts below are justified.  We write
$\mu_t(x)$ simply as $\mu_t$ when no confusion can arise.

\subsection{Diffusion as drift}

We first show that pure diffusion can be rewritten exactly as a continuity
equation with a suitable (state- and time-dependent) velocity field.  This
gives a first indication that diffusion is not fundamentally different
from drift at the level of the PDE; rather, it is a very specific choice
of drift depending on the current density.

\begin{lemma}[Diffusion can be written as drift]
\label{lem:diffusion-as-drift}
Let $(\mu_t)_{t\ge0}$ solve the heat equation
\begin{equation}
  \partial_t \mu_t
  \;=\;
  \frac{\sigma_t^2}{2}\,\Delta \mu_t,
  \qquad t \ge 0.
  \label{eq:heat}
\end{equation}
Define the vector field
\begin{equation}
  v_t(x)
  \;:=\;
  -\frac{\sigma_t^2}{2}\,\nabla \log \mu_t(x),
  \qquad x \in \mathbb{R}^d.
  \label{eq:velocity-from-diffusion}
\end{equation}
Then $(\mu_t,v_t)$ satisfy the continuity equation
\begin{equation}
  \partial_t \mu_t + \nabla\!\cdot(\mu_t v_t) = 0,
  \label{eq:continuity-from-diffusion}
\end{equation}
and conversely, any strictly positive solution of
\eqref{eq:continuity-from-diffusion} with $v_t$ given by
\eqref{eq:velocity-from-diffusion} also solves the heat equation
\eqref{eq:heat}.
\end{lemma}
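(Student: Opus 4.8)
The plan is to reduce the whole statement to the single pointwise identity
$\mu_t\,\nabla\log\mu_t=\nabla\mu_t$, valid wherever $\mu_t$ is strictly positive and differentiable (so that $\log\mu_t$, and hence $v_t$ in \eqref{eq:velocity-from-diffusion}, is well defined and smooth). This is the only nontrivial ingredient; in particular no integration by parts is needed here, so the decay hypotheses at infinity in the standing assumptions are not actually used for this lemma (they will matter for the next two).

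First I would substitute \eqref{eq:velocity-from-diffusion} into the flux and apply the identity above to obtain $\mu_t v_t=-\tfrac{\sigma_t^2}{2}\,\mu_t\,\nabla\log\mu_t=-\tfrac{\sigma_t^2}{2}\,\nabla\mu_t$. Taking the divergence and using that $\sigma_t$ depends only on $t$ gives $\nabla\!\cdot(\mu_t v_t)=-\tfrac{\sigma_t^2}{2}\,\Delta\mu_t$. Adding the heat equation \eqref{eq:heat}, $\partial_t\mu_t=\tfrac{\sigma_t^2}{2}\Delta\mu_t$, yields $\partial_t\mu_t+\nabla\!\cdot(\mu_t v_t)=0$, which is exactly \eqref{eq:continuity-from-diffusion}. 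For the converse I would run the same computation in reverse: given a strictly positive solution of \eqref{eq:continuity-from-diffusion} with $v_t$ as in \eqref{eq:velocity-from-diffusion}, the same flux identity gives $\nabla\!\cdot(\mu_t v_t)=-\tfrac{\sigma_t^2}{2}\Delta\mu_t$, so \eqref{eq:continuity-from-diffusion} collapses to $\partial_t\mu_t-\tfrac{\sigma_t^2}{2}\Delta\mu_t=0$, i.e.\ \eqref{eq:heat}.

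There is essentially no hard step. The one point deserving a remark — the ``main obstacle'', such as it is — is the standing positivity hypothesis: strict positivity of $\mu_t$ is precisely what makes $\nabla\log\mu_t$ well defined everywhere and legitimizes $\mu_t\nabla\log\mu_t=\nabla\mu_t$ pointwise. For the heat flow this is automatic on $(0,\infty)$ by the strong maximum principle (equivalently, Gaussian lower bounds for the heat kernel) as soon as $\mu_0$ is a nonzero nonnegative density, so the assumption is natural rather than restrictive. I would close with one sentence noting that the constructed field $v_t=-\tfrac{\sigma_t^2}{2}\nabla\log\mu_t$ is, up to the scalar $\sigma_t^2/2$, minus the score $\nabla\log\mu_t$, which is exactly the velocity of the probability-flow ODE underlying score-based diffusion models, so the lemma is the PDE-level statement that diffusion is a density-dependent drift.
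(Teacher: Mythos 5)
Your proof is correct and follows the same route as the paper's: reduce to the identity $\mu_t\nabla\log\mu_t=\nabla\mu_t$, take the divergence, and observe the two PDEs are the same equation rearranged. The extra remarks on strict positivity being automatic by the strong maximum principle and on the probability-flow-ODE interpretation are accurate and complementary, but the core argument is identical.
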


\begin{proof}
Using the definition \eqref{eq:velocity-from-diffusion}, we compute
\[
  \mu_t(x)\, v_t(x)
  =
  -\frac{\sigma_t^2}{2}\,\mu_t(x)\,\nabla \log \mu_t(x)
  =
  -\frac{\sigma_t^2}{2}\,\nabla \mu_t(x),
\]
since $\nabla \log \mu_t = \nabla \mu_t / \mu_t$.  Therefore
\[
  \nabla\!\cdot\big(\mu_t v_t\big)
  =
  -\frac{\sigma_t^2}{2}\,\nabla\!\cdot(\nabla \mu_t)
  =
  -\frac{\sigma_t^2}{2}\,\Delta \mu_t.
\]
Substituting into the continuity equation
$\partial_t \mu_t + \nabla\!\cdot(\mu_t v_t) = 0$ yields
\[
  \partial_t \mu_t
  =
  - \nabla\!\cdot(\mu_t v_t)
  =
  \frac{\sigma_t^2}{2}\,\Delta \mu_t,
\]
which is exactly the heat equation \eqref{eq:heat}.  Conversely, if
$\mu_t$ solves \eqref{eq:heat}, setting $v_t$ as in
\eqref{eq:velocity-from-diffusion} gives the desired continuity equation
\eqref{eq:continuity-from-diffusion}.  This proves the equivalence.
\end{proof}

\begin{remark}
Lemma~\ref{lem:diffusion-as-drift} shows that diffusion can be seen as
a very particular drift, namely the \emph{score-driven} drift
$v_t = -(\sigma_t^2/2)\nabla \log \mu_t$.  In Wasserstein geometry, this
is the velocity field corresponding to the $W_2$--gradient of the
negative entropy functional.
\end{remark}

\subsubsection{Drift as Fisher--Rao reaction}

We next show that the usual drift term, expressed via the continuity
equation, admits an equivalent Fisher--Rao representation
\[
\partial_t \mu_t = \psi_t \,\mu_t,
\]
with an explicit rate $\psi_t$ depending on the drift and on the log-density.
This is the precise sense in which a transport equation can be understood
through the Fisher--Rao lens.

\begin{lemma}[Drift induces a Fisher--Rao rate]
\label{lem:drift-as-FR}
Let $(\mu_t)_{t\ge 0}$ solve the continuity equation
\begin{equation}
  \partial_t \mu_t
  =
  -\nabla\!\cdot\big(\mu_t v_t\big),
  \qquad t \ge 0,
  \label{eq:continuity-drift}
\end{equation}
for a given smooth drift field $v_t : \mathbb{R}^d \to \mathbb{R}^d$.
Assume $\mu_t>0$ everywhere.  Then
\begin{equation}
  \partial_t \mu_t(x)
  =
  \psi_t(x)\,\mu_t(x),
  \qquad
  \psi_t(x)
  :=
  -\nabla\!\cdot v_t(x)
  - v_t(x)\cdot \nabla \log \mu_t(x).
  \label{eq:drift-FR-rate}
\end{equation}
Equivalently, the Fisher--Rao tangent vector associated with the drift is
\[
  \psi_t
  =
  \frac{\partial_t \mu_t}{\mu_t}
  =
  -\nabla\!\cdot v_t - v_t\cdot \nabla \log \mu_t.
\]
\end{lemma}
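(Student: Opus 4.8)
The plan is to expand the divergence on the right-hand side of \eqref{eq:continuity-drift} by the Leibniz rule and then factor out $\mu_t$, which is legitimate pointwise thanks to the standing assumption that $\mu_t$ is smooth and strictly positive. This lemma has no real content beyond that algebraic rearrangement, so the proof is short.

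First I would write, for each fixed $t$ and $x$,
\[
\nabla\!\cdot\big(\mu_t v_t\big)(x) = \mu_t(x)\,\nabla\!\cdot v_t(x) + v_t(x)\cdot\nabla\mu_t(x),
\]
which is just the product rule for the divergence of a scalar times a vector field, valid since $v_t\in C^1$ and $\mu_t\in C^1$. Substituting into \eqref{eq:continuity-drift} gives
\[
\partial_t\mu_t(x) = -\mu_t(x)\,\nabla\!\cdot v_t(x) - v_t(x)\cdot\nabla\mu_t(x).
\]

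Next I would factor $\mu_t(x)$ out of both terms, using $\nabla\mu_t = \mu_t\,\nabla\log\mu_t$ (legitimate because $\mu_t>0$), to obtain
\[
\partial_t\mu_t(x) = \mu_t(x)\Big(-\nabla\!\cdot v_t(x) - v_t(x)\cdot\nabla\log\mu_t(x)\Big),
\]
which is exactly $\partial_t\mu_t = \psi_t\,\mu_t$ with $\psi_t$ as in \eqref{eq:drift-FR-rate}. Dividing through by $\mu_t(x)>0$ then produces the ``equivalently'' reformulation $\psi_t = \partial_t\mu_t/\mu_t$, identifying $\psi_t$ as the Fisher--Rao tangent vector associated with the transport field $v_t$.

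The only point requiring any care is the justification of these manipulations: the Leibniz rule needs $v_t$ and $\mu_t$ to be $C^1$ in $x$, and both the replacement of $\nabla\mu_t$ by $\mu_t\nabla\log\mu_t$ and the final division need $\mu_t$ bounded away from zero on the region of interest. Both are guaranteed by the blanket hypotheses of the section (strict positivity, smoothness, decay at infinity), so there is essentially no obstacle. The substance of the statement is simply the observation that the transport term $-\nabla\!\cdot(\mu_t v_t)$ is automatically of reaction form $\psi_t\mu_t$ once $\mu_t>0$, with the rate given explicitly by the logarithmic-derivative combination $\psi_t=-\nabla\!\cdot v_t - v_t\cdot\nabla\log\mu_t$.
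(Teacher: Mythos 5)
Your proof is correct and follows exactly the same route as the paper's: expand the divergence by the product rule, substitute into the continuity equation, and convert $\nabla\mu_t/\mu_t$ into $\nabla\log\mu_t$ using strict positivity. The only stylistic difference is that you factor $\mu_t$ out before dividing rather than dividing first, which is immaterial.
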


\begin{proof}
Expanding the divergence in \eqref{eq:continuity-drift} gives
\[
  \partial_t \mu_t
  = -\nabla\!\cdot(\mu_t v_t)
  = - v_t\cdot \nabla \mu_t - \mu_t\,\nabla\!\cdot v_t.
\]
Since $\mu_t>0$, we can divide by $\mu_t$ and rewrite
\[
  \frac{\partial_t \mu_t}{\mu_t}
  =
  -\frac{v_t\cdot \nabla \mu_t}{\mu_t}
  - \nabla\!\cdot v_t
  =
  - v_t\cdot \nabla \log \mu_t - \nabla\!\cdot v_t.
\]
Defining $\psi_t$ by \eqref{eq:drift-FR-rate}, we obtain
$\partial_t \mu_t = \psi_t \mu_t$ as claimed.
\end{proof}

\begin{remark}
From the Fisher--Rao perspective, the function $\psi_t$ in
\eqref{eq:drift-FR-rate} is the \emph{instantaneous log-growth rate} of
the density at point $x$ induced by the drift $v_t$: it combines a
local volume-change term $-\nabla\!\cdot v_t$ with a term
$- v_t\cdot \nabla \log \mu_t$ describing the advective change of the
log-density along the flow.  Thus even a purely transport equation
naturally induces a Fisher--Rao tangent vector $\psi_t$.
\end{remark}

\subsubsection{Diffusion as Fisher--Rao reaction}

Finally, we show that pure diffusion admits a Fisher--Rao representation
with a specific rate that depends on the local curvature and gradient of
the log-density.  This makes it clear that, at the level of the Fisher--Rao
geometry, diffusion is also simply a particular choice of $\psi_t$.

\begin{lemma}[Diffusion induces a Fisher--Rao rate]
\label{lem:diffusion-as-FR}
Let $(\mu_t)_{t\ge0}$ solve the heat equation \eqref{eq:heat} with
$\mu_t>0$.  Then
\begin{multline}
  \partial_t \mu_t(x)
  =
  \psi_t(x)\,\mu_t(x),\\
  \psi_t(x)
  :=
  \frac{\sigma_t^2}{2}
  \left(
    \Delta \log \mu_t(x)
    + \big\|\nabla \log \mu_t(x)\big\|^2
  \right).
  \label{eq:diffusion-FR-rate}
\end{multline}
\end{lemma}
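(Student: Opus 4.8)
The plan is to reduce the right-hand side of the heat equation directly to the form $\psi_t\mu_t$ by dividing through by the strictly positive density $\mu_t$ and rewriting $\Delta\mu_t/\mu_t$ entirely in terms of $\log\mu_t$. The only algebraic fact required is the pointwise identity $\Delta\mu_t/\mu_t = \Delta\log\mu_t + \|\nabla\log\mu_t\|^2$. First I would record this identity: starting from $\nabla\log\mu_t = \nabla\mu_t/\mu_t$, one differentiates once more to get $\Delta\log\mu_t = \nabla\!\cdot(\nabla\mu_t/\mu_t) = \Delta\mu_t/\mu_t - \|\nabla\mu_t\|^2/\mu_t^2$, and then observes $\|\nabla\mu_t\|^2/\mu_t^2 = \|\nabla\log\mu_t\|^2$, which rearranges to the claimed form. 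Positivity and smoothness of $\mu_t$ (assumed at the start of the section) make all these divisions and derivatives legitimate.

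With this identity in hand, I would divide the heat equation $\partial_t\mu_t = \tfrac{\sigma_t^2}{2}\Delta\mu_t$ by $\mu_t$ to obtain $\partial_t\mu_t/\mu_t = \tfrac{\sigma_t^2}{2}\big(\Delta\log\mu_t + \|\nabla\log\mu_t\|^2\big)$, which is precisely the $\psi_t$ defined in the statement; multiplying back by $\mu_t$ yields $\partial_t\mu_t = \psi_t\mu_t$, completing the proof. Alternatively — and this is arguably the cleanest route given the preceding material — I would simply chain Lemma~\ref{lem:diffusion-as-drift} and Lemma~\ref{lem:drift-as-FR}: the former rewrites the heat equation as a continuity equation with velocity $v_t = -\tfrac{\sigma_t^2}{2}\nabla\log\mu_t$, and the latter converts that continuity equation into Fisher--Rao form with rate $\psi_t = -\nabla\!\cdot v_t - v_t\cdot\nabla\log\mu_t$. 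Substituting $v_t$ gives $-\nabla\!\cdot v_t = \tfrac{\sigma_t^2}{2}\Delta\log\mu_t$ and $-v_t\cdot\nabla\log\mu_t = \tfrac{\sigma_t^2}{2}\|\nabla\log\mu_t\|^2$, whose sum is exactly the asserted $\psi_t$.

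There is no real obstacle here: the lemma is a one-line consequence of the log-derivative identity, and unlike the earlier lemmas it does not even require an integration by parts. The only point to be mindful of is invoking the standing regularity hypotheses (strict positivity and smoothness of $\mu_t$) so that $\log\mu_t$ is well-defined and twice differentiable and the pointwise manipulations are justified. I would also add a short remark, paralleling the remarks after the previous two lemmas, interpreting $\psi_t$ as the local log-growth rate of the density under diffusion, split into a curvature term $\tfrac{\sigma_t^2}{2}\Delta\log\mu_t$ and a gradient term $\tfrac{\sigma_t^2}{2}\|\nabla\log\mu_t\|^2$ — the latter being the (nonnegative) Fisher-information density that always drives mass inflation under the heat flow.
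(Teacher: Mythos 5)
Your proposal is correct, and in fact you give two valid routes. The first (direct) route is essentially the paper's own proof: the paper establishes the identity by writing $\Delta\mu_t = \nabla\!\cdot(\mu_t\nabla\log\mu_t)$ and expanding the product, whereas you obtain the same identity by differentiating $\nabla\log\mu_t = \nabla\mu_t/\mu_t$ once more and rearranging; these are the same one-line computation read in opposite directions. Your second route — chaining Lemma~\ref{lem:diffusion-as-drift} (heat equation as continuity equation with $v_t = -\tfrac{\sigma_t^2}{2}\nabla\log\mu_t$) with Lemma~\ref{lem:drift-as-FR} (continuity equation as Fisher--Rao rate $\psi_t = -\nabla\!\cdot v_t - v_t\cdot\nabla\log\mu_t$) — is genuinely different from the paper's self-contained calculation, and arguably cleaner given the surrounding material: it avoids rederiving the $\Delta\mu_t/\mu_t$ identity from scratch, makes the consistency of the three lemmas explicit, and exhibits the two terms of $\psi_t$ as the divergence contribution and advective contribution of the score-driven drift, respectively. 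Both routes rely on the same standing regularity hypotheses (smoothness and strict positivity of $\mu_t$), which you correctly flag.
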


\begin{proof}
We use the identity
\[
  \Delta \mu_t
  =
  \nabla\!\cdot(\nabla \mu_t)
  =
  \nabla\!\cdot\big(\mu_t \nabla \log \mu_t\big).
\]
Expanding the last divergence yields
\[
  \Delta \mu_t
  =
  \mu_t \Delta \log \mu_t
  + \nabla \mu_t \cdot \nabla \log \mu_t.
\]
Using $\nabla \mu_t = \mu_t \nabla \log \mu_t$, we further obtain
\[
  \Delta \mu_t
  =
  \mu_t \Delta \log \mu_t
  + \mu_t \big\|\nabla \log \mu_t\big\|^2
  =
  \mu_t
  \left(
    \Delta \log \mu_t
    + \big\|\nabla \log \mu_t\big\|^2
  \right).
\]
Substituting into the heat equation \eqref{eq:heat}, we get
\[
  \partial_t \mu_t
  =
  \frac{\sigma_t^2}{2}\,\Delta \mu_t
  =
  \frac{\sigma_t^2}{2}
  \,\mu_t
  \left(
    \Delta \log \mu_t
    + \big\|\nabla \log \mu_t\big\|^2
  \right)
  =
  \psi_t \,\mu_t,
\]
with $\psi_t$ as in \eqref{eq:diffusion-FR-rate}.  This is exactly the
desired Fisher--Rao representation.
\end{proof}

\begin{remark}
Lemma~\ref{lem:diffusion-as-FR} shows that diffusion corresponds to a
Fisher--Rao tangent vector $\psi_t$ that depends on the Laplacian and the
squared norm of the score $\nabla \log \mu_t$.  In particular, in regions
where the density is locally log-concave and sharply peaked, the rate can
be strongly positive or negative depending on the balance between
$\Delta \log \mu_t$ and $\|\nabla \log \mu_t\|^2$.  From a geometric point
of view, this identifies diffusion as a particular direction in the
Fisher--Rao tangent space.
\end{remark}
\subsection{Beyond Wasserstein and Fisher--Rao: ``Gradient flows'' for other geometries}
\label{subsec:other-gradient-flows}

Let $\mathcal{P}$ be a smooth statistical manifold of strictly positive densities
$p$ on $\mathbb{R}^d$ (or on a finite set), and let $\mathcal{F}:\mathcal{P}\to\mathbb{R}$
be a sufficiently smooth functional. Its first variation is the (equivalence class of)
functions $\frac{\delta\mathcal{F}}{\delta p}$ characterized by
\begin{equation}
\left.\frac{\mathrm{d}}{\mathrm{d}\epsilon}\right|_{\epsilon=0}
\mathcal{F}(p+\epsilon\,\dot p)
=
\int \frac{\delta\mathcal{F}}{\delta p}(x)\,\dot p(x)\,\mathrm{d}x.
\label{eq:first-variation}
\end{equation}
A \emph{gradient flow} requires more structure than an affine connection: it
requires a \emph{metric} (or, more generally, an Onsager operator).
Connections (mixture/exponential) determine \emph{geodesics / straightness},
while the metric determines \emph{steepest descent}.

Given a (weak) Riemannian metric $g_p(\cdot,\cdot)$ on $T_p\mathcal{P}$, the gradient
$\mathrm{grad}_g\mathcal{F}(p)\in T_p\mathcal{P}$ is defined by
\begin{equation}
g_p\bigl(\mathrm{grad}_g\mathcal{F}(p),\,\xi\bigr)
=
\mathrm{d}\mathcal{F}(p)[\xi]
=
\int \frac{\delta\mathcal{F}}{\delta p}\,\xi,
\qquad
\forall \xi\in T_p\mathcal{P}.
\label{eq:grad-def}
\end{equation}
The \emph{gradient flow} is then
\begin{equation}
\partial_t p_t = -\,\mathrm{grad}_g\mathcal{F}(p_t).
\label{eq:gf-general}
\end{equation}
Equivalently, in the ``Onsager'' form, one specifies a positive operator
$K(p):\bigl(T_p\mathcal{P}\bigr)^*\to T_p\mathcal{P}$ and writes
\begin{equation}
\partial_t p_t = -K(p_t)\,\frac{\delta\mathcal{F}}{\delta p}(p_t).
\label{eq:onsager}
\end{equation}
Different geometries correspond to different choices of $g$ or $K$.

\subsubsection{Wasserstein (Otto) gradient flow (baseline)}
For the $2$-Wasserstein metric, the Onsager operator is
$K_W(p)\phi = \nabla\!\cdot(p\nabla \phi)$, and the gradient flow reads
\begin{equation}
\partial_t p_t
=
\nabla\!\cdot\!\Bigl(p_t\,\nabla \frac{\delta\mathcal{F}}{\delta p}(p_t)\Bigr).
\label{eq:w2-gf}
\end{equation}

\subsubsection{Fisher--Rao gradient flow (baseline)}
For the Fisher--Rao metric on densities (mass-preserving version), one obtains
\begin{equation}
\partial_t p_t
=
-\,p_t\Bigl(\frac{\delta\mathcal{F}}{\delta p}(p_t)
-\mathbb{E}_{p_t}\Bigl[\frac{\delta\mathcal{F}}{\delta p}(p_t)\Bigr]\Bigr),
\label{eq:fr-gf}
\end{equation}
where the centering enforces $\int p_t=1$.

\subsubsection{Mixture vs exponential ``gradient flows'' in information geometry}
\label{subsec:mix-exp-gf}

The \emph{mixture} ($m$) and \emph{exponential} ($e$) geometries are primarily
\emph{affine structures} (dual connections) on a manifold equipped with the
Fisher metric. A connection alone does not define a gradient flow. However, on a
\emph{dually-flat} manifold (e.g.\ an exponential family), the Fisher metric is
Hessian:
\[
g = \nabla^2\psi(\theta) \quad \text{in $e$-coordinates } \theta,
\qquad
g = \nabla^2\phi(\eta) \quad \text{in $m$-coordinates } \eta,
\]
with $\eta=\nabla\psi(\theta)$ and $\phi$ the Legendre dual of $\psi$.
Thus one gets a canonical notion of ``steepest descent'' that is most naturally
written in coordinates as a \emph{natural gradient / mirror flow}.

Suppose $p=p_\theta$ is parametrized by $e$-affine coordinates $\theta$ (natural
parameters). Then the Fisher metric is $G(\theta)=\bigl[g_{ij}(\theta)\bigr]$ and
the Fisher steepest descent is
\begin{equation}
\dot\theta_t
=
-\,G(\theta_t)^{-1}\,\nabla_\theta \mathcal{F}(\theta_t).
\label{eq:e-natural-gradient}
\end{equation}
Geodesics of the $e$-connection are straight lines in $\theta$; hence
\eqref{eq:e-natural-gradient} is the canonical ``gradient flow compatible with
exponential geodesics'' (steepest descent measured by the Fisher metric but
expressed in the $e$-affine chart).

If the same manifold is described in $m$-affine coordinates $\eta$ (expectation
parameters), then
\begin{equation}
\dot\eta_t
=
-\,G^*(\eta_t)^{-1}\,\nabla_\eta \mathcal{F}(\eta_t),
\label{eq:m-natural-gradient}
\end{equation}
where $G^*(\eta)=\nabla^2\phi(\eta)$ is the Fisher metric in $\eta$-coordinates.
Geodesics of the $m$-connection are straight lines in $\eta$.

A convenient coordinate-free expression uses a convex potential $\psi$ generating
a Bregman divergence (dually flat geometry). In primal coordinates $x$ one may write
\begin{multline}
\frac{\mathrm{d}}{\mathrm{d}t}\nabla\psi(x_t)
=
-\nabla \mathcal{F}(x_t),
\qquad\text{equivalently}\\
\dot x_t
=
-\bigl(\nabla^2\psi(x_t)\bigr)^{-1}\nabla \mathcal{F}(x_t).
\label{eq:mirror-flow}
\end{multline}
On the probability simplex, taking $\psi(p)=\sum_i p_i\log p_i$ yields the
KL/Shahshahani geometry and gives the replicator-type gradient flow
\[
\dot p_i = -p_i\Bigl(\partial_{p_i}\mathcal{F}(p)-\sum_j p_j \partial_{p_j}\mathcal{F}(p)\Bigr),
\]
which is precisely the Fisher natural gradient on the simplex.

\medskip
\noindent
\textbf{Takeaway.} Mixture/exponential ``gradient flows'' are best understood as
\emph{Fisher-metric natural gradients written in the $m$- or $e$-affine charts}
(or, equivalently, mirror flows generated by the associated Bregman divergence).

\subsubsection{Entropic OT / Schr\"odinger (EOT) gradient flows}
\label{subsec:eot-gf}

The entropic OT (Schr\"odinger) geometry can be characterized by the dynamic
constraint
\begin{equation}
\partial_t p + \nabla\!\cdot(p v) = \frac{\varepsilon}{2}\Delta p,
\label{eq:eot-continuity}
\end{equation}
and the action $\int_0^1 \int \frac12 \|v_t(x)\|^2 p_t(x)\,\mathrm{d}x\,\mathrm{d}t$.
This defines an ``entropic Wasserstein'' (Schr\"odinger) norm on tangent vectors
$\dot p$ via the infimum over velocities $v$ satisfying \eqref{eq:eot-continuity}.
In Onsager form, the induced operator is the same transport operator as Wasserstein,
but now coupled to the diffusion constraint \eqref{eq:eot-continuity}.

Formally, the steepest descent of $\mathcal{F}$ under this geometry yields
the viscous transport PDE
\begin{equation}
\partial_t p_t
=
\nabla\!\cdot\!\Bigl(p_t\,\nabla \frac{\delta\mathcal{F}}{\delta p}(p_t)\Bigr)
+\frac{\varepsilon}{2}\Delta p_t,
\label{eq:eot-gf}
\end{equation}
i.e.\ the Wasserstein gradient flow \eqref{eq:w2-gf} augmented by a diffusion term
of strength $\varepsilon/2$.
Equivalently, \eqref{eq:eot-gf} is the Fokker--Planck equation of the SDE
\begin{equation}
\mathrm{d}X_t
=
-\,\nabla \frac{\delta\mathcal{F}}{\delta p}(p_t)(X_t)\,\mathrm{d}t
+\sqrt{\varepsilon}\,\mathrm{d}B_t,
\qquad
\mathcal{L}(X_t)=p_t.
\label{eq:eot-sde}
\end{equation}
Thus, relative to Otto calculus, EOT gradient flows can be interpreted as
``Wasserstein steepest descent under an entropic (Brownian) reference''.
\section{Sampling from Geometric Mixtures with Reweighting}
\label{sec:ceva-hybrid-geometry}

In the previous sections, we developed a geometric viewpoint on probability
distributions and established a precise correspondence between evolution
equations on the space of probability measures and stochastic differential equations on the underlying state space. In particular, we showed how transport,
reweighting, and hybrid dynamics can be simulated using weighted stochastic
differential equations arising from Feynman--Kac type partial differential
equations.

This is already a great tool that could be applied to for faster implementation, as it lets samples go over the energy barriers by getting resampled. For performance analysis, look at chapter 6 \cite{chewi_statistical_2024}. More interestingly, having built geometries, one may ask does this geometric view help us sample from different notions of geodesics? In this section, following the steps of \cite{skreta_feynman-kac_2025}, we try to illustrate usage of weighted sde to accomplish sampling from mixture of probabilities we already can run a score-based diffusion model on.

Let $\rho$ be a reference distribution from which sampling is tractable(i.e. white noise), and
let
\[
X^{(0)} \sim \rho.
\]
Starting from $\rho$, suppose we are given two pretrained score-based diffusion
models targeting distributions $q^1$ and $q^2$, respectively. We assume that these models induce marginal distributions
$\{q^i_t\}_{t \in [0,1], i\in\{1,2\}}$ satisfying
\[
q^1_0 = q^2_0 = \rho,
\qquad
q^i_1 = q^i,
\]
and that both evolutions follow the \emph{same score-based diffusion dynamics},
differing only in their learned score functions, i.e. for $i$, the marginal density satisfies the
Fokker--Planck equation
\begin{equation}
\frac{\partial q_t^i}{\partial t}
=
- \nabla \cdot \bigl(
q_t^i \bigl(- f_t + \sigma_t^2 \nabla \log q_t^i \bigr)
\bigr)
+ \frac{\sigma_t^2}{2} \Delta q_t^i,
\label{eq:score-fp}
\end{equation}
If you look into it from the perspective of the geometry, the intermediate probability distributions $q_t$'s are nothing but the geodesic that connects the two probability distributions in the Shrodinger Bridge, or the \textbf{Entropic Optimal Transport Geometry}\cite{testa2025contactwassersteingeodesicsnonconservative}.
With the previous section in hand, this pde for probability distributions admits the stochastic representation
\begin{equation}
\mathrm{d}X_t^i
=
\bigl(
- f_t(X_t^i) + \sigma_t^2 \nabla \log q_t^i(X_t^i)
\bigr)\,\mathrm{d}t
+ \sigma_t\,\mathrm{d}B_t,
\label{eq:score-sde}
\end{equation}
where $(B_t)_{t \ge 0}$ is a standard Brownian motion. and score functions
$s_i^t(x) := \nabla \log q_t^i(x),$
which allows us to work directly at the level of stochastic dynamics and
to apply the conversion rules between PDEs and SDEs developed earlier.

Our goal is to sample from distributions obtained by different interpolations between
$q^1$ and $q^2$ using the machinery of Hellinger-Kantrovich(Wasserstein-Fisher-Rao) Geometry. We consider the following four canonical interpolations:
\begin{align}
{}^{\mathrm{Mix}}\pi_{\beta}
&:= \beta q^1+ (1-\beta)q^2,
\\
\log {}^{\mathrm{Ex}}\pi_{\beta}
&:= \beta \log q^1 + (1-\beta)\log q^2 - \log \phi(\beta),
\\
\sqrt{{}^{\mathrm{FR}}\pi_{\beta}}
&:= \beta \sqrt{q^1} + (1-\beta)\sqrt{q^2},
\end{align}
corresponding respectively to the Wasserstein, mixture, exponential, and
Fisher--Rao geodesics.
\begin{quote}
\centering
\emph{\textbf{Can we use our knowledge of sampling from $q^i$ to sample from these interpolations?}}
\end{quote}

A natural heuristic is to define, for each $t \in [0,1]$, an interpolated
distribution $\pi_\beta^t$ by applying the same geometric mixture operation to
$q^1_t$ and $q^2_t$. By construction,
\[
\pi_\beta^0 = \rho,
\qquad
\pi_\beta^1 = \pi_\beta.
\]

At the level of stochastic dynamics, one might attempt to simulate a diffusion
driven by an interpolated score
$s_{\pi}^t$ constructed from $s_\mu^t$ and $s_\nu^t$. However, as shown in the
score-based diffusion literature, such heuristic score interpolation does
\emph{not} in general reproduce the prescribed marginal evolution, and fails to
sample from $\pi_\beta$ at terminal time except in special cases.

To obtain correct sampling dynamics, we adopt the Weighted SDE introduced in Section 2 and in \cite{skreta_feynman-kac_2025}.
For a given geometric interpolation, we:
\begin{enumerate}
\item derive the evolution equation satisfied by $\pi_\beta^t$,
\item identify the transport and diffusion terms corresponding to a score-based
drift
\[
v_\beta^t(x) = - f_t(x) + \sigma_t^2 \nabla \log \pi_\beta^t(x),
\]
\item and collect all remaining terms into a multiplicative correction
$\psi_\beta^t(x)$.
\end{enumerate}

This yields a Feynman--Kac type PDE of the form
\begin{equation}
\partial_t \pi_\beta^t
=
- \nabla \cdot (\pi_\beta^t v_\beta^t)
+ \frac{\sigma_t^2}{2} \Delta \pi_\beta^t
+ \pi_\beta^t \psi_\beta^t,
\end{equation}
which admits a stochastic representation via a weighted score-based diffusion,
together with standard resampling schemes.

In the following subsections, we apply this construction to each of the four
geometric interpolations. For each case, we derive the explicit correction term
$\psi_\beta^t$ and the corresponding weighted SDE whose terminal law is exactly
$\pi_\beta$.

\subsection{Linear (Convex) Mixture Interpolation: linear PDE closure, but \emph{score-only} sampling still needs ratio tracking}
\label{subsec:linear-mixture-correct}

Fix $\beta\in[0,1]$ and define
\begin{equation}
p^{\mathrm{mix}}_{t,\beta}(x):=(1-\beta)\,q_t^{1}(x)+\beta\,q_t^{2}(x).
\label{eq:pmix-def-correct}
\end{equation}

A common pitfall is to treat \eqref{eq:pmix-def-correct} as ``nonlinear'' because it
contains the score $s_t^{i}=\nabla\log q_t^{i}$.  However, the identity
$q\,\nabla\log q=\nabla q$ shows that \eqref{eq:pmix-def-correct} is equivalent to a
\emph{linear} Fokker--Planck equation in divergence form.

Because \eqref{eq:pmix-def-correct} is linear in $q$, convex mixtures are closed.

Unlike the exponential (geometric average) interpolation, the mixture score
cannot be expressed solely from $s_t^{1},s_t^{2}$ without knowing the local
density ratio $q_t^{2}/q_t^{1}$.

\begin{lemma}[Mixture score as a \emph{ratio-weighted} score average]
\label{lem:mixture-score-weighted}
For $p^{\mathrm{mix}}_{t,\beta}=(1-\beta)q_t^{1}+\beta q_t^{2}$,
\begin{equation}
\nabla\log p^{\mathrm{mix}}_{t,\beta}(x)
=
\omega_{t,\beta}^{1}(x)\,s_t^{1}(x)+\omega_{t,\beta}^{2}(x)\,s_t^{2}(x),
\label{eq:mix-score-weighted}
\end{equation}
where the \emph{state-dependent} weights are
\begin{multline}
\omega_{t,\beta}^{1}(x)
:=
\frac{(1-\beta)q_t^{1}(x)}{(1-\beta)q_t^{1}(x)+\beta q_t^{2}(x)},
\\
\omega_{t,\beta}^{2}(x)
:=
\frac{\beta q_t^{2}(x)}{(1-\beta)q_t^{1}(x)+\beta q_t^{2}(x)},
\\
\omega_{t,\beta}^{1}+\omega_{t,\beta}^{2}=1.
\label{eq:mix-weights}
\end{multline}
Equivalently, writing the log-ratio $\ell_t(x):=\log\frac{q_t^{2}(x)}{q_t^{1}(x)}$,
\begin{equation}
\omega_{t,\beta}^{2}(x)
=
\frac{\beta e^{\ell_t(x)}}{(1-\beta)+\beta e^{\ell_t(x)}},
\qquad
\omega_{t,\beta}^{1}(x)=1-\omega_{t,\beta}^{2}(x).
\label{eq:omega-from-ell}
\end{equation}
\end{lemma}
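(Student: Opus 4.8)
The plan is to reduce everything to the elementary identity $\nabla q = q\,\nabla\log q$, valid for any strictly positive smooth density; this is precisely what makes a mixture that superficially ``contains the score'' in fact a convex combination of gradients. First I would differentiate the defining relation \eqref{eq:pmix-def-correct} termwise: since $\nabla$ is linear, $\nabla p^{\mathrm{mix}}_{t,\beta} = (1-\beta)\nabla q_t^{1} + \beta\nabla q_t^{2}$, and rewriting each summand via $\nabla q_t^{i} = q_t^{i}\,\nabla\log q_t^{i} = q_t^{i}\,s_t^{i}$ gives $\nabla p^{\mathrm{mix}}_{t,\beta} = (1-\beta)q_t^{1}s_t^{1} + \beta q_t^{2}s_t^{2}$.

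Next I would divide by $p^{\mathrm{mix}}_{t,\beta}$, which is legitimate under the standing strict-positivity assumption (so the mixture is positive too). This yields $\nabla\log p^{\mathrm{mix}}_{t,\beta} = \tfrac{(1-\beta)q_t^{1}}{p^{\mathrm{mix}}_{t,\beta}}\,s_t^{1} + \tfrac{\beta q_t^{2}}{p^{\mathrm{mix}}_{t,\beta}}\,s_t^{2}$, and reading off the two coefficients gives exactly $\omega^{1}_{t,\beta}$ and $\omega^{2}_{t,\beta}$ as in \eqref{eq:mix-weights}. The normalization $\omega^{1}_{t,\beta}+\omega^{2}_{t,\beta}=1$ is then immediate from $(1-\beta)q_t^{1}+\beta q_t^{2}=p^{\mathrm{mix}}_{t,\beta}$.

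For the log-ratio form \eqref{eq:omega-from-ell} I would divide numerator and denominator of $\omega^{2}_{t,\beta}$ by $(1-\beta)q_t^{1}(x)$ (again using positivity), obtaining $\omega^{2}_{t,\beta}(x) = \frac{\beta\,(q_t^{2}/q_t^{1})(x)}{(1-\beta)+\beta\,(q_t^{2}/q_t^{1})(x)}$, and then substitute $q_t^{2}/q_t^{1} = e^{\ell_t}$ with $\ell_t := \log(q_t^{2}/q_t^{1})$; the expression for $\omega^{1}_{t,\beta}$ follows from $\omega^{1}_{t,\beta}=1-\omega^{2}_{t,\beta}$.

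There is no genuine obstacle: the statement is an algebraic identity, and the only point requiring attention is the standing regularity/positivity hypothesis, which guarantees that $\nabla\log q_t^{i}$ is defined, that division by $p^{\mathrm{mix}}_{t,\beta}$ is valid, and that $\ell_t$ is finite. If a ``hard part'' must be named, it is merely the conceptual remark that $\omega^{i}_{t,\beta}(x)$ is the posterior responsibility $\mathbb{P}(\text{component }i\mid x)$ of the two-component mixture, which explains a posteriori why the weights are nonnegative and sum to one, and why — unlike the exponential interpolation — they genuinely depend on the local density ratio rather than only on $s_t^{1},s_t^{2}$.
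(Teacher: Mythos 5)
Your proof is correct and follows exactly the paper's own argument: differentiate termwise, substitute $\nabla q_t^{i} = q_t^{i} s_t^{i}$, and divide by $p^{\mathrm{mix}}_{t,\beta}$. The added remarks about positivity, the normalization of the weights, and the posterior-responsibility interpretation are consistent elaborations but do not change the route.
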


\begin{proof}
$\nabla p=(1-\beta)\nabla q^1+\beta\nabla q^2=(1-\beta)q^1 s^1+\beta q^2 s^2$,
then divide by $p$.
\end{proof}

Even though $p^{\mathrm{mix}}_{t,\beta}$ admits an \emph{unweighted} SDE
representation
\[
\mathrm{d}X_t=\bigl(-f_t(X_t)+\sigma_t^2\nabla\log p^{\mathrm{mix}}_{t,\beta}(X_t)\bigr)\mathrm{d}t+\sigma_t\,\mathrm{d}W_t,
\]
one cannot implement this drift from access to $(s_t^1,s_t^2)$ alone unless one
also tracks $\ell_t(X_t)$ (or equivalently the weights $\omega_{t,\beta}^i$).

\subsubsection{(III) Score-only implementable sampler: isotropic drift from scores + \emph{auxiliary ratio tracking}; no FKC potential}

The correct fix for ``score-only'' sampling is not an FKC potential (it is
indeed zero for the linear FP operator), but rather an \emph{auxiliary dynamics}
to recover the ratio weights in Lemma~\ref{lem:mixture-score-weighted}.

\begin{lemma}[Closed PDE for the log-ratio $\ell_t=\log(q_t^2/q_t^1)$]
\label{lem:ell-pde-mixture}
Under \eqref{eq:base-fp-two}, the log-ratio $\ell_t(x)=\log\frac{q_t^2(x)}{q_t^1(x)}$
satisfies
\begin{multline}
\partial_t \ell_t
=
\langle f_t,\nabla \ell_t\rangle
+\frac{\sigma_t^2}{2}\Delta \ell_t
\\-\frac{\sigma_t^2}{2}\Bigl(
\nabla\!\cdot(s_t^2-s_t^1)+\|s_t^2\|^2-\|s_t^1\|^2
\Bigr),
\qquad
\nabla\ell_t=s_t^2-s_t^1.
\label{eq:ell-pde-mixture}
\end{multline}
\end{lemma}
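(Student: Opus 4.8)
The plan is to differentiate $\ell_t=\log q_t^2-\log q_t^1$ pointwise in $t$ and substitute the base evolution of each marginal, so that everything reduces to computing $\partial_t\log q_t^i$ under \eqref{eq:base-fp-two}. Since $\partial_t\log q_t^i=\partial_t q_t^i/q_t^i$, and the Fokker--Planck right-hand side splits as a transport term $-\nabla\!\cdot(q_t^i v_t^i)$ with $v_t^i=-f_t+\sigma_t^2\nabla\log q_t^i$ plus a diffusion term $\tfrac{\sigma_t^2}{2}\Delta q_t^i$, I would read off the contribution of each piece using the two Fisher--Rao rate identities already proved: Lemma~\ref{lem:drift-as-FR} gives $-\nabla\!\cdot v_t^i-v_t^i\!\cdot\!\nabla\log q_t^i$ for the transport part, and Lemma~\ref{lem:diffusion-as-FR} gives $\tfrac{\sigma_t^2}{2}\bigl(\Delta\log q_t^i+\|\nabla\log q_t^i\|^2\bigr)$ for the diffusion part. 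Because dividing by $q_t^i$ is linear, $\partial_t\log q_t^i$ is simply the sum of these two rates.

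Next I would expand $v_t^i=-f_t+\sigma_t^2 s_t^i$, writing $s_t^i:=\nabla\log q_t^i$, and use $q_t^i\nabla\log q_t^i=\nabla q_t^i$, which makes the $\sigma_t^2\nabla\!\cdot(q_t^i s_t^i)=\sigma_t^2\Delta q_t^i$ portion of the transport term partially cancel the diffusion term. After also using $\Delta\log q_t^i=\nabla\!\cdot s_t^i$, the bookkeeping collapses to
\[
\partial_t\log q_t^i=\langle f_t,s_t^i\rangle+\nabla\!\cdot f_t-\tfrac{\sigma_t^2}{2}\bigl(\nabla\!\cdot s_t^i+\|s_t^i\|^2\bigr).
\]
Subtracting the $i=1$ identity from the $i=2$ identity kills the common term $\nabla\!\cdot f_t$; moreover $\langle f_t,s_t^2\rangle-\langle f_t,s_t^1\rangle=\langle f_t,\nabla\ell_t\rangle$ because $s_t^2-s_t^1=\nabla\ell_t$, which is precisely the side identity recorded in the lemma.

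Finally I would collect the remaining terms into the advection--diffusion--source shape displayed in \eqref{eq:ell-pde-mixture}: the surviving second-order contribution is $-\tfrac{\sigma_t^2}{2}\nabla\!\cdot(s_t^2-s_t^1)$, and since $\nabla\!\cdot(s_t^2-s_t^1)=\nabla\!\cdot\nabla\ell_t=\Delta\ell_t$ one may attribute this piece to a Laplacian acting on the tracked quantity $\ell_t$, leaving the raw score data $\nabla\!\cdot s_t^i$ and $\|s_t^i\|^2$ inside the reaction term, while $\langle f_t,\nabla\ell_t\rangle$ plays the role of advection by $f_t$. I do not expect any deep obstacle: the only real care goes into sign bookkeeping, into invoking the section's standing regularity assumptions (strict positivity and sufficient decay of $q_t^i$) that legitimize the formal divisions by $q_t^i$ and the integrations by parts underlying the Fisher--Rao rate lemmas, and into the presentational choice of how to split the term $\Delta\ell_t=\nabla\!\cdot(s_t^2-s_t^1)$ between the transport/diffusion operator on $\ell_t$ and the score-expressed source — a matter of normalization rather than a mathematical difficulty.
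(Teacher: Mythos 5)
Your route is the same as the paper's: apply the identity $\partial_t\log q_t^i=\nabla\!\cdot f_t+\langle f_t,s_t^i\rangle-\tfrac{\sigma_t^2}{2}\bigl(\nabla\!\cdot s_t^i+\|s_t^i\|^2\bigr)$ (which the paper records as its Eq.~(eq:dtlogqi-fr)) for $i=1,2$, subtract, and use $\nabla\ell_t=s_t^2-s_t^1$. Your derivation of that identity (via the drift/diffusion Fisher--Rao rate lemmas) is correct, and the subtraction step is also done correctly.

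However, there is a concrete issue you wave away as a ``presentational choice,'' and it is not one. Your subtraction yields exactly
\[
\partial_t\ell_t=\langle f_t,\nabla\ell_t\rangle-\frac{\sigma_t^2}{2}\nabla\!\cdot(s_t^2-s_t^1)-\frac{\sigma_t^2}{2}\bigl(\|s_t^2\|^2-\|s_t^1\|^2\bigr),
\]
with a \emph{single} second-order term of coefficient $-\tfrac{\sigma_t^2}{2}$. You may legitimately relabel that one term as $-\tfrac{\sigma_t^2}{2}\Delta\ell_t$, since $\Delta\ell_t=\nabla\!\cdot(s_t^2-s_t^1)$; what you cannot do is simultaneously keep $+\tfrac{\sigma_t^2}{2}\Delta\ell_t$ outside the bracket and $-\tfrac{\sigma_t^2}{2}\nabla\!\cdot(s_t^2-s_t^1)$ inside it, as Eq.~(eq:ell-pde-mixture) does. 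Those two are equal with opposite signs, so on the lemma's right-hand side they cancel, leaving $\partial_t\ell_t=\langle f_t,\nabla\ell_t\rangle-\tfrac{\sigma_t^2}{2}(\|s_t^2\|^2-\|s_t^1\|^2)$, which differs from the PDE you actually derived by the nonvanishing term $-\tfrac{\sigma_t^2}{2}\Delta\ell_t$. This is not bookkeeping freedom: it is a different equation. A complete proof should either state the result in one of the two equivalent correct forms ($-\tfrac{\sigma_t^2}{2}\Delta\ell_t$ in the operator, \emph{or} $-\tfrac{\sigma_t^2}{2}\nabla\!\cdot(s_t^2-s_t^1)$ in the source, not both with opposite signs), or explicitly flag that Eq.~(eq:ell-pde-mixture) as printed contains a redundant, sign-inconsistent appearance of the Laplacian term.
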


\begin{proof}
Apply the identity \eqref{eq:dtlogqi-fr} (proved in the earlier section) to
$q_t^1$ and $q_t^2$ and subtract; use $\nabla\ell_t=s_t^2-s_t^1$ and
$\Delta\ell_t=\nabla\cdot(s_t^2-s_t^1)$.
\end{proof}

\begin{proposition}[Score-only mixture sampler (no FKC potential, but ratio tracking)]
\label{prop:mix-scoreonly-sampler}
Assume we can query $s_t^1,s_t^2$ and (optionally) $\nabla\!\cdot s_t^1,\nabla\!\cdot s_t^2$
(e.g.\ via Hutchinson estimators). Fix $\beta\in[0,1]$.
Define the mixture score by
\begin{multline}
s^{\mathrm{mix}}_{t,\beta}(x)
:=\nabla\log p^{\mathrm{mix}}_{t,\beta}(x)
=
\omega_{t,\beta}^1(x)s_t^1(x)+\omega_{t,\beta}^2(x)s_t^2(x),
\\
\omega^2_{t,\beta}(x)=\frac{\beta e^{\ell_t(x)}}{(1-\beta)+\beta e^{\ell_t(x)}},
\end{multline}
with $\ell_t$ tracked along the particle trajectory via It\^o applied to
Lemma~\ref{lem:ell-pde-mixture}. Then the (unweighted) SDE
\begin{equation}
\mathrm{d}X_t
=
\Bigl(-f_t(X_t)+\sigma_t^2\,s^{\mathrm{mix}}_{t,\beta}(X_t)\Bigr)\mathrm{d}t
+\sigma_t\,\mathrm{d}W_t
\label{eq:mix-scoreonly-sde}
\end{equation}
has marginal law $p^{\mathrm{mix}}_{t,\beta}$ (formally, in the ideal continuous-time limit).
Moreover, there is \emph{no} Fisher--Rao/Feynman--Kac correction:
\begin{equation}
g^{\mathrm{mix}}_{t,\beta}\equiv 0,
\qquad
\bar g^{\mathrm{mix}}_{t,\beta}\equiv 0,
\label{eq:mix-no-fkc}
\end{equation}
because $p^{\mathrm{mix}}_{t,\beta}$ already satisfies the linear FP operator
\eqref{eq:mix-no-fkc}.
\end{proposition}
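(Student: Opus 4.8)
The plan is to prove the two assertions of the proposition simultaneously: I will check that $p^{\mathrm{mix}}_{t,\beta}$ solves \emph{exactly} the forward Kolmogorov equation of the stated SDE \eqref{eq:mix-scoreonly-sde}, with \emph{no} reaction term, which yields at once $\mathrm{Law}(X_t)=p^{\mathrm{mix}}_{t,\beta}$ and the vanishing of the Feynman--Kac corrector; the implementability (``score-only'') claim is then handled separately through the closed dynamics of the auxiliary log-ratio. For the first part, recall that each marginal $q_t^i$ satisfies \eqref{eq:score-fp}, which by the identity $q\,\nabla\log q=\nabla q$ is equivalent to the genuinely \emph{linear} divergence-form equation \eqref{eq:base-fp-two}, whose coefficients $(f_t,\sigma_t)$ do not depend on $i$; hence the affine combination $p^{\mathrm{mix}}_{t,\beta}=(1-\beta)q_t^1+\beta q_t^2$ solves the same linear equation, with initial datum $(1-\beta)\rho+\beta\rho=\rho$. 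Writing $\tfrac{\sigma_t^2}{2}\Delta p^{\mathrm{mix}}_{t,\beta}=\sigma_t^2\nabla\!\cdot(p^{\mathrm{mix}}_{t,\beta}\,\nabla\log p^{\mathrm{mix}}_{t,\beta})-\tfrac{\sigma_t^2}{2}\Delta p^{\mathrm{mix}}_{t,\beta}$ and substituting back gives
\[
\partial_t p^{\mathrm{mix}}_{t,\beta}
=-\nabla\!\cdot\!\big(p^{\mathrm{mix}}_{t,\beta}\,(-f_t+\sigma_t^2 s^{\mathrm{mix}}_{t,\beta})\big)
+\tfrac{\sigma_t^2}{2}\Delta p^{\mathrm{mix}}_{t,\beta},
\]
which is precisely the Fokker--Planck equation of \eqref{eq:mix-scoreonly-sde} with drift $v^{\mathrm{mix}}_{t,\beta}=-f_t+\sigma_t^2 s^{\mathrm{mix}}_{t,\beta}$ and no multiplicative term. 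Comparing with the template \eqref{eq:pde to wsde}, the reaction contribution $p_t(\psi_t-\mathbb{E}_{p_t}[\psi_t])$ is absent, i.e.\ $g^{\mathrm{mix}}_{t,\beta}\equiv 0$ and $\bar g^{\mathrm{mix}}_{t,\beta}\equiv 0$; since $p^{\mathrm{mix}}_{0,\beta}=\rho=\mathrm{Law}(X_0)$, uniqueness of the Fokker--Planck/SDE flow gives $\mathrm{Law}(X_t)=p^{\mathrm{mix}}_{t,\beta}$ for all $t$, and at $t=1$ this is the desired convex mixture $(1-\beta)q^1+\beta q^2$.

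For the implementability claim, observe that the drift of \eqref{eq:mix-scoreonly-sde} depends on the (intractable) marginal only through the weights $\omega^i_{t,\beta}(X_t)$, which by \eqref{eq:omega-from-ell} are smooth functions of $\ell_t(X_t)=\log(q_t^2/q_t^1)(X_t)$ alone. I would then apply It\^o's formula to $Y_t:=\ell_t(X_t)$, taking $\partial_t\ell_t$ from the closed PDE \eqref{eq:ell-pde-mixture} and $\mathrm{d}X_t$ from \eqref{eq:mix-scoreonly-sde}: the advective terms $\pm\langle f_t,\nabla\ell_t\rangle$ cancel, the two second-order contributions combine via $\Delta\ell_t=\nabla\!\cdot(s_t^2-s_t^1)$, and one is left with an SDE for $Y_t$ whose coefficients involve only $s_t^i(X_t)$, $\nabla\!\cdot s_t^i(X_t)$, $\|s_t^i(X_t)\|^2$, and $Y_t$ itself (entering through $s^{\mathrm{mix}}_{t,\beta}$). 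Thus $(X_t,Y_t)$ is a self-contained system simulable from score (and divergence) queries alone, which is exactly the ``score-only, no FKC potential'' content; as a sanity check, $\nabla\ell_t=s_t^2-s_t^1$ recovers the ratio-weighted representation of Lemma~\ref{lem:mixture-score-weighted}.

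The algebra above is routine once \eqref{eq:ell-pde-mixture} is in hand; the main obstacle is analytic well-posedness rather than computation. Asserting $\mathrm{Law}(X_t)=p^{\mathrm{mix}}_{t,\beta}$ presupposes that \eqref{eq:mix-scoreonly-sde} --- whose drift depends on the ratio of the two marginal laws, hence is of McKean--Vlasov type --- admits a unique (strong or martingale) solution whose time-marginals solve the Fokker--Planck equation, and that the diffusion \eqref{eq:base-fp-two} generates a well-defined flow on $[0,1]$. At the level of rigor adopted here these are treated formally, so the proof will list the standing hypotheses --- strict positivity and smoothness of the $q_t^i$, decay at infinity sufficient for the integrations by parts, and Lipschitz control of $x\mapsto(s_t^i(x),\ell_t(x))$ on $[0,1]$ --- under which all manipulations are justified, and will flag the uniqueness/well-posedness of the McKean--Vlasov SDE as the step demanding the most care.
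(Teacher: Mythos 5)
Your proof is correct and follows essentially the same reasoning the paper relies on (linearity of the divergence-form Fokker--Planck equation \eqref{eq:base-fp-two} after the substitution $q\,\nabla\log q=\nabla q$, closedness of affine combinations, and ratio tracking via It\^o applied to the closed PDE of Lemma~\ref{lem:ell-pde-mixture}); the paper in fact leaves this proposition without a labeled proof, deferring to the surrounding discussion, and you reproduce exactly that chain of argument.

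One small but worth correcting mislabel in your final paragraph: the SDE \eqref{eq:mix-scoreonly-sde} is \emph{not} of McKean--Vlasov type. Its drift $v(t,x)=-f_t(x)+\sigma_t^2\nabla\log p^{\mathrm{mix}}_{t,\beta}(x)$ is a fixed, deterministic function of $(t,x)$, since $p^{\mathrm{mix}}_{t,\beta}=(1-\beta)q_t^1+\beta q_t^2$ is built from the \emph{pretrained reference marginals} $q_t^i$ (externally given), not from $\mathrm{Law}(X_t)$. That the solution's law turns out to equal $p^{\mathrm{mix}}_{t,\beta}$ is a conclusion, not an ingredient of the drift. Consequently the well-posedness you need is the ordinary uniqueness theory for time-inhomogeneous It\^o SDEs (plus uniqueness for the linear Fokker--Planck equation), which is considerably milder than McKean--Vlasov well-posedness; the genuine practical constraint is only that $s^{\mathrm{mix}}_{t,\beta}$ is not directly queryable from $s_t^1,s_t^2$ without the auxiliary log-ratio, which your It\^o argument handles correctly.
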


At any fixed $t$, sampling $p^{\mathrm{mix}}_{t,\beta}$ is trivial:
draw $Z\sim\mathrm{Bernoulli}(\beta)$ and sample from $q_t^2$ if $Z=1$,
else from $q_t^1$.  This produces the correct \emph{mixture at time $t$} but
does not yield a single guided reverse-time SDE whose entire marginal path is
$\{p^{\mathrm{mix}}_{t,\beta}\}_t$ without additional coupling (such as
\eqref{eq:mix-scoreonly-sde} with ratio tracking).

\subsection{Geometric Average (Classifier-Free Guidance) and Feynman--Kac Correctors}
Fix $\beta\in\mathbb{R}$ and define the (normalized) \emph{geometric average}
marginal
\begin{multline}
p^{\mathrm{geo}}_{t,\beta}(x)
:= \frac{1}{Z_t(\beta)}\,
\bigl(q_t^{1}(x)\bigr)^{1-\beta}\bigl(q_t^{2}(x)\bigr)^{\beta},
\\
Z_t(\beta):=\int \bigl(q_t^{1}\bigr)^{1-\beta}\bigl(q_t^{2}\bigr)^{\beta}\,\mathrm{d}x.
\label{eq:def-pgeo}
\end{multline}
This is the same target family considered in Eq.~(15) of \cite{skreta_feynman-kac_2025}. 

A widely-used heuristic (classifier-free guidance) is to simulate the reverse-time
SDE with the mixed score
\begin{equation}
\nabla\log p^{\mathrm{geo}}_{t,\beta}(x)
=(1-\beta)\nabla\log q_t^{1}(x)+\beta\nabla\log q_t^{2}(x),
\label{eq:geo-score}
\end{equation}
but in general this does \emph{not} reproduce the prescribed marginals
$\{p^{\mathrm{geo}}_{t,\beta}\}_{t\in[0,1]}$ without correction
(see the discussion around Eq.~(16) in \cite{skreta_feynman-kac_2025}). 

The Feynman--Kac Corrector (FKC) methodology proceeds by:
(i) differentiating the target path \eqref{eq:def-pgeo} in time, and
(ii) rearranging the resulting PDE to isolate a transport--diffusion
operator corresponding to an SDE drift involving $\nabla\log p^{\mathrm{geo}}_{t,\beta}$,
while (iii) collecting the leftover terms into a \emph{reweighting potential}
that is simulated through particle weights (Feynman--Kac). This is precisely
the program outlined around Eq.~(17)--(18) in \cite{skreta_feynman-kac_2025}. 

\begin{proposition}[Geometric average via FKC; cf.\ Prop.~3.1 in \cite{skreta_feynman-kac_2025}]
\label{prop:geo-fkc}
Assume $q_t^{1},q_t^{2}$ satisfy \eqref{eq:base-fp-two}--\eqref{eq:base-sde-two}.
Let $p^{\mathrm{geo}}_{t,\beta}$ be defined by \eqref{eq:def-pgeo}.
Define the guided drift
\begin{multline}
v_{t,\beta}(x):=-f_t(x)+\sigma_t^2\nabla\log p^{\mathrm{geo}}_{t,\beta}(x)
= -f_t(x)\\+\sigma_t^2\bigl((1-\beta)\nabla\log q_t^{1}(x)+\beta\nabla\log q_t^{2}(x)\bigr).
\label{eq:v-guided}
\end{multline}
Then $p^{\mathrm{geo}}_{t,\beta}$ solves a Feynman--Kac PDE of the form
\begin{equation}
\partial_t p^{\mathrm{geo}}_{t,\beta}
=
-\nabla\!\cdot\!\bigl(p^{\mathrm{geo}}_{t,\beta}\,v_{t,\beta}\bigr)
+\frac{\sigma_t^2}{2}\Delta p^{\mathrm{geo}}_{t,\beta}
+\bar g_{t,\beta}\,p^{\mathrm{geo}}_{t,\beta},
\label{eq:fk-pde-geo}
\end{equation}
where $\bar g_{t,\beta}(x):=g_{t,\beta}(x)-\int g_{t,\beta}(y)\,p^{\mathrm{geo}}_{t,\beta}(y)\,\mathrm{d}y$
is the centered potential ensuring conservation of mass, and the (uncentered) potential
admits the explicit expression
\begin{equation}
g_{t,\beta}(x)=\frac{\sigma_t^2}{2}\,\beta(\beta-1)\,
\bigl\|\nabla\log q_t^{1}(x)-\nabla\log q_t^{2}(x)\bigr\|^2.
\label{eq:g-geo}
\end{equation}
Consequently, \eqref{eq:fk-pde-geo} can be simulated by the weighted SDE
\begin{equation}
\mathrm{d}X_t = v_{t,\beta}(X_t)\,\mathrm{d}t + \sigma_t\,\mathrm{d}W_t,
\qquad
\mathrm{d}w_t = \bar g_{t,\beta}(X_t)\,\mathrm{d}t,
\label{eq:wsde-geo}
\end{equation}
and resampling/reweighting (e.g.\ SNIS / SMC) yields samples whose law matches
$p^{\mathrm{geo}}_{t,\beta}$ in the large-particle limit.
Moreover, the weight evolution may equivalently be written (up to centering) as
\begin{equation}
\mathrm{d}w_t
=
\frac{\sigma_t^2}{2}\,\beta(\beta-1)\,
\bigl\|\nabla\log q_t^{1}(X_t)-\nabla\log q_t^{2}(X_t)\bigr\|^2\,\mathrm{d}t,
\label{eq:wsde-geo-weight}
\end{equation}
which matches Eq.~(19) in \cite{skreta_feynman-kac_2025}. 
\end{proposition}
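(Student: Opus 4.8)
The plan is to differentiate the target path in time, rewrite each $\partial_t\log q_t^i$ in Fisher--Rao form, combine, and reorganize into the stated transport--diffusion--reaction shape. Write $s_t^i:=\nabla\log q_t^i$ and $s^{\mathrm{geo}}_{t,\beta}:=(1-\beta)s_t^1+\beta s_t^2$. Since each $q_t^i$ solves the score Fokker--Planck equation \eqref{eq:score-fp} with drift $v_t^i=-f_t+\sigma_t^2 s_t^i$, combining the Fisher--Rao identities of Lemma~\ref{lem:drift-as-FR} and Lemma~\ref{lem:diffusion-as-FR} termwise (the relation used earlier for the diffusion-model marginals) gives, after the obvious cancellations,
\[
\partial_t\log q_t^i
=\nabla\!\cdot f_t+f_t\cdot s_t^i-\tfrac{\sigma_t^2}{2}\bigl(\Delta\log q_t^i+\|s_t^i\|^2\bigr).
\]
Taking the $\bigl((1-\beta),\beta\bigr)$ convex combination and using $\log p^{\mathrm{geo}}_{t,\beta}=(1-\beta)\log q_t^1+\beta\log q_t^2-\log Z_t(\beta)$ with $\log Z_t(\beta)$ spatially constant, every term that is \emph{affine} in $i$ closes up: the $\nabla\!\cdot f_t$ piece is $i$-independent, $(1-\beta)f_t\cdot s_t^1+\beta f_t\cdot s_t^2=f_t\cdot s^{\mathrm{geo}}_{t,\beta}=f_t\cdot\nabla\log p^{\mathrm{geo}}_{t,\beta}$, and $(1-\beta)\Delta\log q_t^1+\beta\Delta\log q_t^2=\Delta\log p^{\mathrm{geo}}_{t,\beta}$.

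The only non-affine term is $\|s_t^i\|^2$, and the whole content of the proposition is the elementary variance (Jensen-gap) identity
\[
(1-\beta)\|s_t^1\|^2+\beta\|s_t^2\|^2
=\|s^{\mathrm{geo}}_{t,\beta}\|^2+\beta(1-\beta)\|s_t^1-s_t^2\|^2,
\]
i.e.\ the variance of the two-atom law on $\{s_t^1,s_t^2\}$ with weights $(1-\beta,\beta)$. Substituting this, the combination $(1-\beta)\partial_t\log q_t^1+\beta\partial_t\log q_t^2$ equals precisely the expression one would obtain for $\partial_t\log p$ of a density $p$ evolving under the \emph{pure} transport--diffusion operator with drift $v_{t,\beta}$ of \eqref{eq:v-guided} and diffusivity $\sigma_t^2/2$, plus the single leftover $-\tfrac{\sigma_t^2}{2}\beta(1-\beta)\|s_t^1-s_t^2\|^2$. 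Subtracting $\partial_t\log Z_t(\beta)$ and multiplying back through by $p^{\mathrm{geo}}_{t,\beta}>0$ converts this scalar identity into the divergence form \eqref{eq:fk-pde-geo}, with uncentered potential exactly \eqref{eq:g-geo} and with the spatially constant term $-\partial_t\log Z_t(\beta)$ accounting for the centering.

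To finish I would check that this constant is the $p^{\mathrm{geo}}_{t,\beta}$-mean of $g_{t,\beta}$, so that $\bar g_{t,\beta}=g_{t,\beta}-\int g_{t,\beta}\,\mathrm{d}p^{\mathrm{geo}}_{t,\beta}$ as claimed: either differentiate $Z_t(\beta)=\int (q_t^1)^{1-\beta}(q_t^2)^{\beta}\,\mathrm{d}x$ directly to get $\partial_t\log Z_t(\beta)=\int p^{\mathrm{geo}}_{t,\beta}\bigl[(1-\beta)\partial_t\log q_t^1+\beta\partial_t\log q_t^2\bigr]\,\mathrm{d}x$ and feed in the scalar identity above, or---more cheaply---note that integrating \eqref{eq:fk-pde-geo} over $\mathbb{R}^d$ forces $\int\bar g_{t,\beta}\,p^{\mathrm{geo}}_{t,\beta}=0$ since the transport and Laplacian terms vanish under the standing smoothness/decay hypotheses inherited from the base models. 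The weighted-SDE claim \eqref{eq:wsde-geo}--\eqref{eq:wsde-geo-weight} is then not re-proved but read off from the Feynman--Kac correspondence already established (Proposition~A.1): \eqref{eq:fk-pde-geo} is an instance of the general weighted PDE \eqref{eq:pde to wsde} with $v_t=v_{t,\beta}$ and $\psi_t=g_{t,\beta}$, so the self-normalized importance-sampling / SMC estimator \eqref{eq:sampling} is consistent in the large-particle limit, and the weight ODE is $\mathrm{d}w_t=g_{t,\beta}(X_t)\,\mathrm{d}t$ before centering, which is \eqref{eq:wsde-geo-weight}.

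The main obstacle is bookkeeping rather than depth: one must keep the normalization $Z_t(\beta)$ in exactly the right place so that the centered and uncentered potentials line up with \cite{skreta_feynman-kac_2025}, and one must verify that the formal divisions by $p^{\mathrm{geo}}_{t,\beta}$ and the integrations by parts are legitimate under the assumed regularity of the base models. The single genuinely substantive step is the variance/Jensen-gap identity, which is exactly what makes naive classifier-free-guidance score mixing incorrect---by precisely the quadratic defect $\tfrac{\sigma_t^2}{2}\beta(\beta-1)\|\nabla\log q_t^1-\nabla\log q_t^2\|^2$.
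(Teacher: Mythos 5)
Your proposal is correct and follows essentially the same route as the paper: differentiate $\log\tilde p_{t,\beta}$ in time, expand $\partial_t\log q_t^i$ via the score Fokker--Planck identity, take the convex combination, and isolate the quadratic (variance/Jensen-gap) defect $-\tfrac{\sigma_t^2}{2}\beta(1-\beta)\|s_t^1-s_t^2\|^2$ as the Feynman--Kac potential. The one presentational difference is that you make the "compare with the naive guided-drift evolution" structure and the variance identity explicit, which the paper only spells out in the Fisher--Rao case (Step~3 of Proposition~\ref{prop:fr-fkc-explicit}) and summarizes for the geometric average as "regroup and complete the square"; algebraically they are identical.
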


\begin{proof}
We give a rigorous calculation under the standing smoothness/decay assumptions.
Let the unnormalized density be
\[
\tilde p_{t,\beta}(x):=\bigl(q_t^{1}(x)\bigr)^{1-\beta}\bigl(q_t^{2}(x)\bigr)^{\beta},
\qquad
p^{\mathrm{geo}}_{t,\beta}=\tilde p_{t,\beta}/Z_t(\beta).
\]
Differentiating $\tilde p_{t,\beta}$ yields
\begin{equation}
\partial_t \tilde p_{t,\beta}
=
\tilde p_{t,\beta}\Bigl((1-\beta)\partial_t\log q_t^{1} + \beta\,\partial_t\log q_t^{2}\Bigr).
\label{eq:dt-ptilde}
\end{equation}
From \eqref{eq:base-sde-two}, using $\partial_t\log q = (\partial_t q)/q$ and
the identity $\frac{\Delta q}{q}=\Delta\log q+\|\nabla\log q\|^2$, we obtain
\begin{align}
\partial_t\log q_t^{i}
&=
-\frac{1}{q_t^{i}}\nabla\!\cdot\!\Bigl(q_t^{i}\bigl(-f_t+\sigma_t^2\nabla\log q_t^{i}\bigr)\Bigr)
\\&\qquad+\frac{\sigma_t^2}{2}\Bigl(\Delta\log q_t^{i}+\|\nabla\log q_t^{i}\|^2\Bigr)
\nonumber\\
&=
-\nabla\!\cdot\!\bigl(-f_t+\sigma_t^2\nabla\log q_t^{i}\bigr)
\\&\qquad-\bigl\langle \nabla\log q_t^{i},\, -f_t+\sigma_t^2\nabla\log q_t^{i}\bigr\rangle
\\&+\frac{\sigma_t^2}{2}\Delta\log q_t^{i}
+\frac{\sigma_t^2}{2}\|\nabla\log q_t^{i}\|^2.
\label{eq:dtlogqi-expanded}
\end{align}
Insert \eqref{eq:dtlogqi-expanded} into \eqref{eq:dt-ptilde} and regroup the resulting
terms into (a) a divergence/transport part, (b) a Laplacian part, and (c) a residual
scalar potential. The transport and diffusion parts can be shown to match precisely
the operator
\[
-\nabla\!\cdot\!\bigl(\tilde p_{t,\beta} v_{t,\beta}\bigr)
+\frac{\sigma_t^2}{2}\Delta \tilde p_{t,\beta},
\qquad
v_{t,\beta}:=-f_t+\sigma_t^2\nabla\log \tilde p_{t,\beta},
\]
where $\nabla\log\tilde p_{t,\beta}=(1-\beta)\nabla\log q_t^{1}+\beta\nabla\log q_t^{2}$
coincides with \eqref{eq:geo-score}. The remaining scalar terms simplify (by
cancellation of like terms and completing the square) to the explicit potential
\eqref{eq:g-geo}. Concretely, the only non-transport/diffusion contribution is
\[
\frac{\sigma_t^2}{2}\Bigl(\beta(\beta-1)\|\nabla\log q_t^{1}-\nabla\log q_t^{2}\|^2\Bigr)\tilde p_{t,\beta},
\]
which yields the unnormalized Feynman--Kac PDE for $\tilde p_{t,\beta}$.

Finally, normalization introduces a centering term:
differentiating $p^{\mathrm{geo}}_{t,\beta}=\tilde p_{t,\beta}/Z_t(\beta)$ gives
\[
\partial_t p^{\mathrm{geo}}_{t,\beta}
=
\frac{1}{Z_t}\partial_t\tilde p_{t,\beta}
-\frac{\dot Z_t}{Z_t}\,p^{\mathrm{geo}}_{t,\beta}
=
\Bigl(\cdots\Bigr) + \bigl(g_{t,\beta}-\mathbb{E}_{p^{\mathrm{geo}}_{t,\beta}}[g_{t,\beta}]\bigr)\,p^{\mathrm{geo}}_{t,\beta},
\]
which is exactly \eqref{eq:fk-pde-geo} with $\bar g_{t,\beta}$.
The weighted SDE representation \eqref{eq:wsde-geo} is the standard
Feynman--Kac simulation rule for \eqref{eq:fk-pde-geo}; the explicit drift/weight
forms \eqref{eq:v-guided} and \eqref{eq:wsde-geo-weight} match Prop.~3.1 / Eq.~(19)
in \cite{skreta_feynman-kac_2025}. 
\end{proof}
\subsection{Fisher--Rao (Hellinger) interpolation via Feynman--Kac correctors: drift/weights from \emph{scores only}}
\label{subsec:fr-hellinger-fkc-scores}

Fix $\beta\in[0,1]$. 
we seek a \emph{weighted SDE} (drift, diffusion, and Fisher--Rao potential)
that generates samples from the Fisher--Rao/Hellinger interpolation
\[
\pi_{t,\beta}^{\mathrm{FR}}(\mathrm{d}x)\propto
\big((1-\beta)\sqrt{q_t^{1}(x)}+\beta\sqrt{q_t^{2}(x)}\big)^2\,\mathrm{d}x
\]
at each time $t$, in the large-particle limit (SMC/Feynman--Kac), exactly in
the spirit of the Feynman--Kac Corrector (FKC) methodology
(e.g.\ \cite{skreta_feynman-kac_2025}).

Assume $q_t^{1},q_t^{2}$ are strictly positive and $C^{1,2}$ in $(t,x)$, and
each solves the same score-based Fokker--Planck equation
(cf.\ Eq.~(14a) in \cite{skreta_feynman-kac_2025}):
\begin{equation}
\partial_t q_t^{i}
=
-\nabla\!\cdot\!\Bigl(q_t^{i}\bigl(-f_t+\sigma_t^2 s_t^{i}\bigr)\Bigr)
+\frac{\sigma_t^2}{2}\Delta q_t^{i},
\qquad i\in\{1,2\},
\label{eq:base-fp-two}
\end{equation}
with associated reverse-time denoising SDE (Eq.~(14b) in \cite{skreta_feynman-kac_2025})
\begin{equation}
\mathrm{d}X_t^{i}
=
\Bigl(-f_t(X_t^{i})+\sigma_t^2 s_t^{i}(X_t^{i})\Bigr)\mathrm{d}t
+\sigma_t\,\mathrm{d}W_t.
\label{eq:base-sde-two}
\end{equation}

\subsubsection{Hellinger mixture, pointwise mixture weights, and the guided score}

Define the \emph{unnormalized} Hellinger (FR) mixture
\begin{multline}
\tilde p_{t,\beta}^{\mathrm{FR}}(x)
:=
\Big((1-\beta)\sqrt{q_t^{1}(x)}+\beta\sqrt{q_t^{2}(x)}\Big)^2,
\\
p_{t,\beta}^{\mathrm{FR}}:=\tilde p_{t,\beta}^{\mathrm{FR}}/Z_t^{\mathrm{FR}}(\beta).
\label{eq:def-ptilde-fr}
\end{multline}

Introduce the \emph{pointwise} mixing coefficients
\begin{multline}
\alpha_{t,\beta}^{1}(x)
:=
\frac{(1-\beta)\sqrt{q_t^{1}(x)}}{(1-\beta)\sqrt{q_t^{1}(x)}+\beta\sqrt{q_t^{2}(x)}},
\\
\alpha_{t,\beta}^{2}(x)
:=
\frac{\beta\sqrt{q_t^{2}(x)}}{(1-\beta)\sqrt{q_t^{1}(x)}+\beta\sqrt{q_t^{2}(x)}},
\label{eq:alpha12}
\end{multline}
so that $\alpha_{t,\beta}^{1}(x)+\alpha_{t,\beta}^{2}(x)=1$ and
\begin{multline}
\nabla\log \tilde p_{t,\beta}^{\mathrm{FR}}(x)
=
\alpha_{t,\beta}^{1}(x)\,s_t^{1}(x)+\alpha_{t,\beta}^{2}(x)\,s_t^{2}(x),
\\
\nabla\log p_{t,\beta}^{\mathrm{FR}}=\nabla\log \tilde p_{t,\beta}^{\mathrm{FR}}.
\label{eq:score-fr-mixture}
\end{multline}

Unlike the geometric average case (classifier-free guidance), the mixing weights
$\alpha_{t,\beta}^{i}(x)$ are \emph{state dependent}; hence the drift
cannot be written as a \emph{fixed} linear combination of scores.
However, the weights can be reconstructed along particle trajectories from
the \emph{log-density ratio}
\begin{equation}
\ell_t(x) := \log\frac{q_t^{2}(x)}{q_t^{1}(x)}.
\label{eq:logratio-def}
\end{equation}
Indeed,
\begin{equation}
\alpha_{t,\beta}^{2}(x)
=
\frac{\beta\,e^{\ell_t(x)/2}}{(1-\beta)+\beta\,e^{\ell_t(x)/2}},
\qquad
\alpha_{t,\beta}^{1}(x)=1-\alpha_{t,\beta}^{2}(x).
\label{eq:alpha-from-ell}
\end{equation}
Therefore, to implement the FR interpolation using \emph{scores only}, it
suffices to track $\ell_t(X_t)$ along the sampling SDE.

\subsubsection{A closed-form FKC potential depending only on scores and FR weights}

We now state the analog of Prop.~3.1 (geometric average) from \cite{skreta_feynman-kac_2025},
but for Fisher--Rao/Hellinger interpolation.

\begin{proposition}[Hellinger interpolation via FKC: explicit drift and potential]
\label{prop:fr-fkc-explicit}
Assume $q_t^{1},q_t^{2}$ satisfy \eqref{eq:base-fp-two} and are $C^{1,2}$ with
sufficient decay for integrations by parts. Fix $\beta\in[0,1]$, and define
$p_{t,\beta}^{\mathrm{FR}}$ by \eqref{eq:def-ptilde-fr}.
Let the \emph{guided score} be
\begin{equation}
s_{t,\beta}^{\mathrm{FR}}(x)
:=
\nabla\log p_{t,\beta}^{\mathrm{FR}}(x)
=
\alpha_{t,\beta}^{1}(x)\,s_t^{1}(x)+\alpha_{t,\beta}^{2}(x)\,s_t^{2}(x),
\label{eq:s-fr-guided}
\end{equation}
and define the guided drift
\begin{equation}
v_{t,\beta}^{\mathrm{FR}}(x)
:=
-f_t(x)+\sigma_t^2\,s_{t,\beta}^{\mathrm{FR}}(x).
\label{eq:v-fr-guided}
\end{equation}
Then $p_{t,\beta}^{\mathrm{FR}}$ solves the Feynman--Kac PDE
\begin{equation}
\partial_t p_{t,\beta}^{\mathrm{FR}}
=
-\nabla\!\cdot\!\bigl(p_{t,\beta}^{\mathrm{FR}}\,v_{t,\beta}^{\mathrm{FR}}\bigr)
+\frac{\sigma_t^2}{2}\Delta p_{t,\beta}^{\mathrm{FR}}
+\bar g_{t,\beta}^{\mathrm{FR}}\,p_{t,\beta}^{\mathrm{FR}},
\label{eq:fk-pde-fr-final}
\end{equation}
where $\bar g_{t,\beta}^{\mathrm{FR}}(x)
:=g_{t,\beta}^{\mathrm{FR}}(x)-\mathbb{E}_{p_{t,\beta}^{\mathrm{FR}}}[g_{t,\beta}^{\mathrm{FR}}]$
centers the potential, and the \emph{uncentered} potential admits the explicit
form
\begin{equation}
g_{t,\beta}^{\mathrm{FR}}(x)
=
-\frac{\sigma_t^2}{4}\,
\alpha_{t,\beta}^{1}(x)\,\alpha_{t,\beta}^{2}(x)\,
\bigl\|s_t^{1}(x)-s_t^{2}(x)\bigr\|^2.
\label{eq:g-fr-explicit}
\end{equation}
Consequently, \eqref{eq:fk-pde-fr-final} can be simulated by the weighted SDE
\begin{equation}
\mathrm{d}X_t
=
v_{t,\beta}^{\mathrm{FR}}(X_t)\,\mathrm{d}t+\sigma_t\,\mathrm{d}W_t,
\qquad
\mathrm{d}w_t
=
\bar g_{t,\beta}^{\mathrm{FR}}(X_t)\,\mathrm{d}t,
\label{eq:wsde-fr-final}
\end{equation}
together with standard Feynman--Kac reweighting/resampling.
In the large-particle limit, the empirical measure of particles converges to
$p_{t,\beta}^{\mathrm{FR}}$.
\end{proposition}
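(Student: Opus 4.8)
The plan is to follow the template of Proposition~\ref{prop:geo-fkc}: differentiate the \emph{unnormalized} target path $\tilde p_{t,\beta}^{\mathrm{FR}}$ in time, rewrite everything so that the transport--diffusion operator built from the guided drift $v_{t,\beta}^{\mathrm{FR}}$ is isolated, read off the leftover scalar multiplier as $g_{t,\beta}^{\mathrm{FR}}$, and then pass to the normalized path $p_{t,\beta}^{\mathrm{FR}}$ to produce the centered potential $\bar g_{t,\beta}^{\mathrm{FR}}$; the weighted-SDE realization \eqref{eq:wsde-fr-final} and the large-particle convergence are then an immediate consequence of the Feynman--Kac representation (Proposition~A.1) already proved, applied with $\psi_t=\bar g_{t,\beta}^{\mathrm{FR}}$.

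Concretely, I would set $u_t^i:=\sqrt{q_t^i}$ and $r_{t,\beta}:=(1-\beta)u_t^1+\beta u_t^2$, so that $\tilde p_{t,\beta}^{\mathrm{FR}}=r_{t,\beta}^2$, and observe that from the definitions \eqref{eq:alpha12} the identity \eqref{eq:score-fr-mixture} holds: $\nabla\log\tilde p_{t,\beta}^{\mathrm{FR}}=\alpha_{t,\beta}^1 s_t^1+\alpha_{t,\beta}^2 s_t^2=:\bar s_t$. Differentiating $\tilde p_{t,\beta}^{\mathrm{FR}}=r_{t,\beta}^2$ and using $\partial_t u_t^i=\tfrac12 u_t^i\,\partial_t\log q_t^i$ yields the clean ``pointwise convex combination'' identity $\partial_t\log\tilde p_{t,\beta}^{\mathrm{FR}}=\alpha_{t,\beta}^1\,\partial_t\log q_t^1+\alpha_{t,\beta}^2\,\partial_t\log q_t^2$. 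Into each $\partial_t\log q_t^i$ I substitute the score-Fokker--Planck equation \eqref{eq:base-fp-two} in the already-derived form \eqref{eq:dtlogqi-expanded}, which after collecting terms reads $\partial_t\log q_t^i=\nabla\!\cdot f_t+\langle s_t^i,f_t\rangle-\tfrac{\sigma_t^2}{2}\bigl(\nabla\!\cdot s_t^i+\|s_t^i\|^2\bigr)$. Applying the \emph{same} identity with $s_t^i$ replaced by $\bar s_t$ gives $\tilde p^{-1}\bigl(-\nabla\!\cdot(\tilde p\,v_{t,\beta}^{\mathrm{FR}})+\tfrac{\sigma_t^2}{2}\Delta\tilde p\bigr)=\nabla\!\cdot f_t+\langle\bar s_t,f_t\rangle-\tfrac{\sigma_t^2}{2}\bigl(\nabla\!\cdot\bar s_t+\|\bar s_t\|^2\bigr)$, so that $g_{t,\beta}^{\mathrm{FR}}$ is exactly the difference of these two expressions,
\[
g_{t,\beta}^{\mathrm{FR}}=-\frac{\sigma_t^2}{2}\Bigl[\alpha^1\bigl(\nabla\!\cdot s^1+\|s^1\|^2\bigr)+\alpha^2\bigl(\nabla\!\cdot s^2+\|s^2\|^2\bigr)-\bigl(\nabla\!\cdot\bar s+\|\bar s\|^2\bigr)\Bigr].
\]

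The heart of the argument is evaluating this bracket. The quadratic part collapses by the elementary ``variance'' identity $\alpha^1\|s^1\|^2+\alpha^2\|s^2\|^2-\|\alpha^1 s^1+\alpha^2 s^2\|^2=\alpha^1\alpha^2\|s^1-s^2\|^2$, using $\alpha^1+\alpha^2=1$. The divergence part contributes $\alpha^1\nabla\!\cdot s^1+\alpha^2\nabla\!\cdot s^2-\nabla\!\cdot\bar s=-\langle\nabla\alpha^1,s^1\rangle-\langle\nabla\alpha^2,s^2\rangle=\langle\nabla\alpha^2,s^1-s^2\rangle$, and here I use the explicit gradient $\nabla\log\alpha_{t,\beta}^2=\tfrac12 s_t^2-\tfrac12\bar s_t=\tfrac{\alpha^1}{2}(s^2-s^1)$, hence $\nabla\alpha_{t,\beta}^2=\tfrac{\alpha^1\alpha^2}{2}(s^2-s^1)$, so the divergence part equals $-\tfrac{\alpha^1\alpha^2}{2}\|s^1-s^2\|^2$. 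Adding the two pieces gives bracket $=\tfrac{\alpha^1\alpha^2}{2}\|s^1-s^2\|^2$, i.e.\ $g_{t,\beta}^{\mathrm{FR}}=-\tfrac{\sigma_t^2}{4}\alpha_{t,\beta}^1\alpha_{t,\beta}^2\|s_t^1-s_t^2\|^2$, which is \eqref{eq:g-fr-explicit}. Finally, writing $p_{t,\beta}^{\mathrm{FR}}=\tilde p_{t,\beta}^{\mathrm{FR}}/Z_t^{\mathrm{FR}}(\beta)$ and using $\int\bigl(\nabla\!\cdot(\tilde p f_t)-\tfrac{\sigma_t^2}{2}\Delta\tilde p\bigr)\,\mathrm{d}x=0$ (standing decay hypotheses), I get $\dot Z_t^{\mathrm{FR}}/Z_t^{\mathrm{FR}}=\mathbb{E}_{p_{t,\beta}^{\mathrm{FR}}}[g_{t,\beta}^{\mathrm{FR}}]$, which converts the evolution of $p_{t,\beta}^{\mathrm{FR}}$ into \eqref{eq:fk-pde-fr-final} with centered potential $\bar g_{t,\beta}^{\mathrm{FR}}$; the weighted-SDE statement then follows from Proposition~A.1.

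The main obstacle I expect is precisely the divergence part of the bracket: because the mixing weights $\alpha_{t,\beta}^i$ are state-dependent, $\nabla\!\cdot\bar s_t\neq\alpha^1\nabla\!\cdot s^1+\alpha^2\nabla\!\cdot s^2$, and the extra $\langle\nabla\alpha^i,s^i\rangle$ terms are easy to overlook; keeping them and computing $\nabla\alpha_{t,\beta}^2$ correctly is exactly what supplies the coefficient $\tfrac14$ rather than $\tfrac12$ in \eqref{eq:g-fr-explicit}, and it is these terms that structurally distinguish the Hellinger case from the geometric-average case of Proposition~\ref{prop:geo-fkc}, where the weights are constant. Secondary points needing care are the explicit justification of the integrations by parts / vanishing boundary terms under the $C^{1,2}$-plus-decay assumptions, and the usual caveat that the particle-system convergence is the formal continuous-time statement inherited from the Feynman--Kac theorem.
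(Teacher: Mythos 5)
Your proposal is correct and follows essentially the same route as the paper: differentiate $\log\tilde p_{t,\beta}^{\mathrm{FR}}$ in time using the pointwise convex combination induced by $\alpha^i$, substitute the score-Fokker--Planck identity \eqref{eq:dtlogqi-fr}, subtract the ``naive'' guided-score evolution, and simplify the residual to the score-discrepancy penalty, then normalize to obtain the centered potential and invoke Proposition~A.1. The one place you add genuine content is that you actually \emph{derive} the divergence-mixture identity by computing $\nabla\alpha_{t,\beta}^{2}=\tfrac{\alpha^1\alpha^2}{2}(s^2-s^1)$ (so $\alpha^1\nabla\!\cdot s^1+\alpha^2\nabla\!\cdot s^2-\nabla\!\cdot\bar s=-\tfrac{\alpha^1\alpha^2}{2}\|s^1-s^2\|^2$), whereas the paper merely asserts \eqref{eq:div-mixture-identity} without proof; your calculation supplies exactly that missing justification and correctly identifies it as the source of the factor $\tfrac14$.
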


\begin{proof}[Proof (calculation; cancellations leaving scores only)]
Work with the unnormalized density $\tilde p=\tilde p_{t,\beta}^{\mathrm{FR}}$
and write $r_i:=\sqrt{q_t^{i}}$ so that $\tilde p=m^2$ with
$m:=(1-\beta)r_1+\beta r_2$. Define $u_i:=\log r_i=\tfrac12\log q_t^{i}$.
Then $\log \tilde p = 2\log m$ is a log-sum-exp in $(u_1,u_2)$.

\emph{Step 1 (guided score).}
Differentiating $\log \tilde p=2\log m$ in space yields
\[
\nabla\log \tilde p
=
2\,\frac{\nabla m}{m}
=
2\,\frac{(1-\beta)r_1\nabla u_1+\beta r_2\nabla u_2}{(1-\beta)r_1+\beta r_2}
=
\alpha^1 s^1+\alpha^2 s^2,
\]
where we used $\nabla u_i=\tfrac12 s^i$ and the definitions \eqref{eq:alpha12}.
This gives \eqref{eq:s-fr-guided} and thus \eqref{eq:v-fr-guided}.

\emph{Step 2 (time derivative of $\log \tilde p$).}
Since $\log m$ is log-sum-exp, $\partial_t\log m$ is the same convex
combination of $\partial_t u_i$:
\begin{multline}
\partial_t\log \tilde p
=2\,\partial_t\log m=
\\2\big(\alpha^1\partial_t u_1+\alpha^2\partial_t u_2\big)=
\alpha^1\partial_t\log q^1+\alpha^2\partial_t\log q^2.
\end{multline}
From \eqref{eq:base-fp-two} and the identity
$\Delta q/q=\Delta\log q+\|\nabla\log q\|^2$ (with $\Delta\log q=\nabla\cdot s$),
one checks (as in the standard score-based expansions in \cite{skreta_feynman-kac_2025})
that for each $i$,
\begin{equation}
\partial_t\log q_t^{i}
=
\nabla\!\cdot f_t
+\langle s_t^{i},f_t\rangle
-\frac{\sigma_t^2}{2}\Big(\nabla\!\cdot s_t^{i}+\|s_t^{i}\|^2\Big).
\label{eq:dtlogqi-fr}
\end{equation}
Hence
\begin{multline}
\partial_t\log \tilde p
=
\nabla\!\cdot f_t+\langle s,f_t\rangle\\-\frac{\sigma_t^2}{2}\Big(\alpha^1(\nabla\!\cdot s^1+\|s^1\|^2)+\alpha^2(\nabla\!\cdot s^2+\|s^2\|^2)\Big),
\label{eq:dtlogp-avg}
\end{multline}
where $s=\alpha^1 s^1+\alpha^2 s^2$.

\emph{Step 3 (compare to the ``naive'' score-based PDE).}
If $\tilde p$ evolved \emph{purely} under the guided score drift
$v=-f+\sigma^2 s$, then (exactly as in the derivation of \eqref{eq:dtlogqi-fr})
we would have
\begin{equation}
\partial_t\log \tilde p
=
\nabla\!\cdot f_t+\langle s,f_t\rangle
-\frac{\sigma_t^2}{2}\Big(\nabla\!\cdot s+\|s\|^2\Big)
\quad\text{(no corrector).}
\label{eq:dtlogp-naive}
\end{equation}
Subtracting \eqref{eq:dtlogp-naive} from \eqref{eq:dtlogp-avg} yields the
residual
\[
g
=
-\frac{\sigma_t^2}{2}\Big(
\alpha^1(\nabla\!\cdot s^1+\|s^1\|^2)+\alpha^2(\nabla\!\cdot s^2+\|s^2\|^2)
-(\nabla\!\cdot s+\|s\|^2)
\Big).
\]
Now use two identities valid for the log-sum-exp mixture:
\begin{align}
\nabla\!\cdot s
&=\alpha^1\nabla\!\cdot s^1+\alpha^2\nabla\!\cdot s^2
+\frac{1}{2}\alpha^1\alpha^2\|s^1-s^2\|^2,
\label{eq:div-mixture-identity}
\\
\|s\|^2
&=\alpha^1\|s^1\|^2+\alpha^2\|s^2\|^2-\alpha^1\alpha^2\|s^1-s^2\|^2.
\label{eq:norm-mixture-identity}
\end{align}
Combining \eqref{eq:div-mixture-identity}--\eqref{eq:norm-mixture-identity}
gives
\begin{multline}
\alpha^1(\nabla\!\cdot s^1+\|s^1\|^2)+\alpha^2(\nabla\!\cdot s^2+\|s^2\|^2)
\\-(\nabla\!\cdot s+\|s\|^2)
=
\frac{1}{2}\alpha^1\alpha^2\|s^1-s^2\|^2,
\end{multline}
hence
\[
g
=
-\frac{\sigma_t^2}{4}\alpha^1\alpha^2\|s^1-s^2\|^2,
\]
which is \eqref{eq:g-fr-explicit}.  Finally, normalizing $\tilde p$ to
$p=\tilde p/Z_t$ introduces the standard centering term
$\bar g=g-\mathbb{E}_{p}[g]$, yielding \eqref{eq:fk-pde-fr-final}.

The weighted SDE \eqref{eq:wsde-fr-final} is the standard Feynman--Kac
simulation rule for \eqref{eq:fk-pde-fr-final} (as in \cite{skreta_feynman-kac_2025}).
\end{proof}

The potential \eqref{eq:g-fr-explicit} is (up to constants) the
\emph{pointwise variance} of the two scores under the FR weights:
\[
\alpha^1\alpha^2\|s^1-s^2\|^2
=
\alpha^1\|s^1\|^2+\alpha^2\|s^2\|^2-\|\alpha^1 s^1+\alpha^2 s^2\|^2.
\]
Thus the corrector punishes regions where the two models disagree strongly
about the local score, and the penalty is maximal where the FR mixture is
ambiguous (i.e.\ $\alpha^1\approx \alpha^2\approx \tfrac12$).

\textbf{How to compute the FR weights $\alpha^i$ from \emph{scores only}: an auxiliary log-ratio SDE}

Proposition~\ref{prop:fr-fkc-explicit} expresses the drift and potential using
$s_t^{1},s_t^{2}$ and the weights $\alpha^i(x)$, which depend on the (unknown)
density ratio $\ell_t(x)=\log(q_t^2/q_t^1)$.  We now show how to track $\ell_t$
along particle trajectories using only score-accessible quantities.  The key
observation is that $\ell_t$ satisfies a \emph{closed} PDE involving only the
scores and their divergences.

\begin{lemma}[PDE for the log-density ratio]
\label{lem:logratio-pde}
Let $\ell_t(x)=\log(\frac{q_t^2(x)}{q_t^1(x)})$. Under \eqref{eq:base-fp-two},
$\ell_t$ satisfies
\begin{equation}
\partial_t \ell_t
=
\langle f_t,\nabla \ell_t\rangle
+\frac{\sigma_t^2}{2}\Delta \ell_t
-\frac{\sigma_t^2}{2}\Big(
  \nabla\!\cdot(s_t^{2}-s_t^{1})
  + \|s_t^{2}\|^2-\|s_t^{1}\|^2
\Big).
\label{eq:logratio-PDE}
\end{equation}
Equivalently, using $\Delta \ell_t=\nabla\!\cdot(s_t^{2}-s_t^{1})$,
\begin{equation}
\partial_t \ell_t
=
\langle f_t,\nabla \ell_t\rangle
+\frac{\sigma_t^2}{2}\Delta \ell_t
-\frac{\sigma_t^2}{2}\Big(
  \Delta \ell_t + \|s_t^{2}\|^2-\|s_t^{1}\|^2
\Big).
\label{eq:logratio-PDE-simplified}
\end{equation}
\end{lemma}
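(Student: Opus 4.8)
The plan is to reduce the statement to the scalar evolution equation \eqref{eq:dtlogqi-fr} for $\log q_t^i$ --- already established --- and then use that $\ell_t=\log q_t^2-\log q_t^1$ is \emph{linear} in the two log-densities. Since each $q_t^i$ is strictly positive and $C^{1,2}$ by the standing hypotheses, differentiating under the logarithm and interchanging $\partial_t$, $\nabla$, $\Delta$ are all legitimate, and the whole argument is pointwise: no integration against test functions is required.

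First I would recall how \eqref{eq:dtlogqi-fr} is obtained from \eqref{eq:base-fp-two}: the identity $q_t^i\nabla\log q_t^i=\nabla q_t^i$ collapses the score--transport term against the Laplacian, leaving $\partial_t q_t^i=\nabla\!\cdot(q_t^i f_t)-\tfrac{\sigma_t^2}{2}\Delta q_t^i$; dividing by $q_t^i$ and using the Bochner-type identity $\Delta q_t^i/q_t^i=\Delta\log q_t^i+\|\nabla\log q_t^i\|^2=\nabla\!\cdot s_t^i+\|s_t^i\|^2$ gives $\partial_t\log q_t^i=\nabla\!\cdot f_t+\langle f_t,s_t^i\rangle-\tfrac{\sigma_t^2}{2}(\nabla\!\cdot s_t^i+\|s_t^i\|^2)$. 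Then I would subtract the $i=1$ instance from the $i=2$ instance: the ambient-drift divergence $\nabla\!\cdot f_t$ cancels identically, the pairing term becomes $\langle f_t,\,s_t^2-s_t^1\rangle$, and the reaction terms collect into $-\tfrac{\sigma_t^2}{2}\bigl[(\nabla\!\cdot s_t^2+\|s_t^2\|^2)-(\nabla\!\cdot s_t^1+\|s_t^1\|^2)\bigr]$. The final step is to substitute the two elementary identities $\nabla\ell_t=\nabla\log q_t^2-\nabla\log q_t^1=s_t^2-s_t^1$ and $\Delta\ell_t=\nabla\!\cdot\nabla\ell_t=\nabla\!\cdot(s_t^2-s_t^1)$, which turns the pairing term into $\langle f_t,\nabla\ell_t\rangle$ and lets one rewrite the score-divergence difference; collecting the terms yields \eqref{eq:logratio-PDE}, and replacing $\nabla\!\cdot(s_t^2-s_t^1)$ by $\Delta\ell_t$ in the residual gives the equivalent form \eqref{eq:logratio-PDE-simplified}. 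This is the same computation used to prove the analogous Lemma~\ref{lem:ell-pde-mixture}.

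There is no genuine analytic obstacle here --- the mathematical content is a one-line subtraction once \eqref{eq:dtlogqi-fr} is in hand. The main (and essentially only) point requiring care is the bookkeeping of the second-order terms: one must keep the diffusion contribution $\tfrac{\sigma_t^2}{2}\Delta\ell_t$ and the score-divergence term $\nabla\!\cdot(s_t^2-s_t^1)$ consistently labelled --- they are the same object --- so that the two displayed forms \eqref{eq:logratio-PDE} and \eqref{eq:logratio-PDE-simplified} agree and nothing is double-counted. If one wants a fully self-contained argument rather than citing \eqref{eq:dtlogqi-fr}, the only extra work is re-verifying the decay hypotheses that justify the pointwise manipulations leading to that identity. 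I would also remark that the identity $\nabla\ell_t=s_t^2-s_t^1$ is precisely what makes the guided drift and the corrector weights in Proposition~\ref{prop:fr-fkc-explicit} expressible from the two scores alone, which is the reason the lemma is stated.
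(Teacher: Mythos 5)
Your strategy is exactly the paper's own: apply \eqref{eq:dtlogqi-fr} to each of $q_t^1,q_t^2$, subtract, and then substitute $\nabla\ell_t=s_t^2-s_t^1$ and $\Delta\ell_t=\nabla\!\cdot(s_t^2-s_t^1)$. So there is no disagreement of approach.

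However, the final step --- ``collecting the terms yields \eqref{eq:logratio-PDE}'' --- does not hold as written, and you in fact flag precisely the right spot (the bookkeeping of the second-order terms) without resolving it. Carrying out the subtraction you describe gives
\[
\partial_t\ell_t
= \langle f_t,\nabla\ell_t\rangle
-\frac{\sigma_t^2}{2}\Bigl(\nabla\!\cdot(s_t^2-s_t^1)+\|s_t^2\|^2-\|s_t^1\|^2\Bigr)
= \langle f_t,\nabla\ell_t\rangle
-\frac{\sigma_t^2}{2}\Delta\ell_t
-\frac{\sigma_t^2}{2}\bigl(\|s_t^2\|^2-\|s_t^1\|^2\bigr),
\]
whereas \eqref{eq:logratio-PDE} (equivalently, \eqref{eq:logratio-PDE-simplified}) reduces, after cancelling its $+\frac{\sigma_t^2}{2}\Delta\ell_t$ against the $-\frac{\sigma_t^2}{2}\nabla\!\cdot(s_t^2-s_t^1)$ inside the bracket, to $\partial_t\ell_t=\langle f_t,\nabla\ell_t\rangle-\frac{\sigma_t^2}{2}(\|s_t^2\|^2-\|s_t^1\|^2)$. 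These differ by the nonzero term $-\frac{\sigma_t^2}{2}\Delta\ell_t$. If one insists on isolating a $+\frac{\sigma_t^2}{2}\Delta\ell_t$ diffusion piece (useful for the subsequent It\^o step \eqref{eq:ell-Ito-general}), the correct residual must carry a factor $2$, i.e.\ $-\frac{\sigma_t^2}{2}\bigl(2\nabla\!\cdot(s_t^2-s_t^1)+\|s_t^2\|^2-\|s_t^1\|^2\bigr)$. So the subtraction is right, but the claim that it produces \eqref{eq:logratio-PDE} is not: the displayed lemma (and the paper's one-line proof, and the identical Lemma~\ref{lem:ell-pde-mixture}) contains an extraneous $+\frac{\sigma_t^2}{2}\Delta\ell_t$. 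Your proof sketch should either fix the target equation or explicitly carry the algebra to the end, at which point the discrepancy would surface; as written, it asserts agreement at exactly the step where the stated identity fails.
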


\begin{proof}
Subtract \eqref{eq:dtlogqi-fr} for $i=1$ from the same identity for $i=2$.
The $\nabla\!\cdot f_t$ terms cancel, and the remaining terms can be written
in the form \eqref{eq:logratio-PDE} using
$\nabla \ell_t=s_t^2-s_t^1$ and $\Delta \ell_t=\nabla\!\cdot(s_t^2-s_t^1)$.
\end{proof}

Let $X_t$ solve the FR-guided SDE in \eqref{eq:wsde-fr-final}. Applying
It\^o's formula to $\ell_t(X_t)$, one obtains an SDE for the running
log-ratio along the particle:
\begin{equation}
\mathrm{d}\ell_t(X_t)
=
\Big(
\partial_t \ell_t + \langle v_{t,\beta}^{\mathrm{FR}},\nabla\ell_t\rangle
+\frac{\sigma_t^2}{2}\Delta\ell_t
\Big)\mathrm{d}t
+\sigma_t\langle \nabla\ell_t,\mathrm{d}W_t\rangle.
\label{eq:ell-Ito-general}
\end{equation}
Using $\nabla\ell_t=s_t^2-s_t^1$ and the PDE \eqref{eq:logratio-PDE}, this drift
can be written entirely in terms of $(f_t,\sigma_t)$, the scores
$s_t^1,s_t^2$, and their divergences $\nabla\!\cdot s_t^1,\nabla\!\cdot s_t^2$
(which are standard to estimate in score models via Hutchinson trace
estimators).  In particular, once $\ell_t(X_t)$ is tracked, the FR weights
$\alpha_{t,\beta}^{i}(X_t)$ are obtained from \eqref{eq:alpha-from-ell}.

\medskip

\noindent\textbf{Practical procedure (conceptual).}
Starting from pure noise at $t=1$:
\begin{enumerate}
\item Initialize particles $X_1^{(k)}\sim \mathcal{N}(0,I)$ and initialize
      auxiliary $\ell_1^{(k)}$ (e.g.\ $0$ if the two models share the same
      terminal noise law).
\item For $t\downarrow 0$ (discretize time), at each particle compute:
      \begin{multline}
      s_t^{1}(X_t^{(k)}),\quad s_t^{2}(X_t^{(k)}),\quad
      \\\alpha_{t,\beta}^{2}(X_t^{(k)})=
      \frac{\beta e^{\ell_t^{(k)}/2}}{(1-\beta)+\beta e^{\ell_t^{(k)}/2}},
      \quad \alpha^1=1-\alpha^2,
      \end{multline}
      then form the guided score
      $s_{t,\beta}^{\mathrm{FR}}=\alpha^1 s^1+\alpha^2 s^2$ and drift
      $v_{t,\beta}^{\mathrm{FR}}=-f_t+\sigma_t^2 s_{t,\beta}^{\mathrm{FR}}$.
\item Evolve the state particle by Euler--Maruyama (reverse-time convention)
      \[
      X_{t-\Delta t}=X_t+v_{t,\beta}^{\mathrm{FR}}(X_t)\,\Delta t
      +\sigma_t\sqrt{\Delta t}\,\xi,\quad \xi\sim\mathcal{N}(0,I).
      \]
\item Evolve the auxiliary log-ratio $\ell_t$ using \eqref{eq:ell-Ito-general}
      with $\nabla\ell_t=s^2-s^1$ and $\Delta\ell_t=\nabla\!\cdot(s^2-s^1)$.
\item Update weights using the FR potential
      \[
      \mathrm{d}w_t
      =
      \bar g_{t,\beta}^{\mathrm{FR}}(X_t)\,\mathrm{d}t,
      \qquad
      g_{t,\beta}^{\mathrm{FR}}(X_t)
      =
      -\frac{\sigma_t^2}{4}\alpha^1\alpha^2\|s^1-s^2\|^2,
      \]
      and resample as needed.
\end{enumerate}
This yields samples from $\pi_{0,\beta}^{\mathrm{FR}}$ in the Feynman--Kac / SMC
limit.

The drift \eqref{eq:v-fr-guided} and the potential \eqref{eq:g-fr-explicit} are
expressed using:
\[
f_t,\ \sigma_t,\ s_t^{1},\ s_t^{2},\
\]
No evaluation of $p_{t,\beta}^{\mathrm{FR}}$ (or its normalization constant) is
required. The only additional ingredient beyond score queries is
$\nabla\!\cdot s_t^{i}$ (i.e.\ $\Delta\log q_t^{i}$), which is standard in
modern diffusion implementations (Hutchinson estimators) and is unavoidable if
one insists on an \emph{exact} ratio-tracking mechanism.

\subsubsection{Comparison to the exponential interpolation (classifier-free guidance + FKC)}

For the geometric average $p_{t,\beta}^{\mathrm{geo}}\propto (q_t^1)^{1-\beta}(q_t^2)^\beta$,
the guided score is \emph{exactly} the constant convex combination
$(1-\beta)s^1+\beta s^2$, and the FKC potential collapses to the simple
closed-form discrepancy
$\frac{\sigma_t^2}{2}\beta(\beta-1)\|s^1-s^2\|^2$ (Prop.~3.1 in \cite{skreta_feynman-kac_2025}).
For Fisher--Rao/Hellinger interpolation, the same structure persists:
the corrector is again a negative score-discrepancy penalty, but now it is
\emph{weighted pointwise} by $\alpha^1\alpha^2$, reflecting the state-dependent
uncertainty of the Hellinger mixture.

\end{document}